\algrenewcommand\algorithmicrequire{\textbf{Input:}}
\algrenewcommand\algorithmicensure{\textbf{Output:}}
\newcommand*{\argmin}{\operatornamewithlimits{argmin}\limits}
\theoremstyle{thmstyleone}%
 \newtheorem{theorem}{Theorem}
\theoremstyle{thmstylethree}%
\newtheorem{definition}{Definition}[section]
\begin{document}

 
\title[Physics-informed deep collocation method]{Analysis of three dimensional potential problems in non-homogeneous media with physics-informed deep collocation method using material transfer learning and sensitivity analysis}

\author[1,2,4]{\fnm{Hongwei}  \sur{Guo}}\email{hongwei.guo@iop.uni-hannover.de}

\author[1,2,4]{\fnm{Xiaoying} \sur{Zhuang}}\email{zhuang@iop.uni-hannover.de}

\author[5]{\fnm{Pengwan} \sur{Chen}}\email{pwchen@bit.edu.cn}

\author[3]{\fnm{Naif} \sur{Alajlan}}\email{najlan@ksu.edu.sa}

\author*[3]{\fnm{Timon} \sur{Rabczuk}}\email{timon.rabczuk@uni-weimar.de}

\affil[1]{\orgdiv{Department of Geotechnical Engineering},\orgname{Tongji University}, \orgaddress{\street{1239 Siping Road}, \city{Shanghai}, \postcode{200092}, \country{P.R. China}}}

\affil[2]{\orgdiv{Key Laboratory of Geotechnical and Underground Engineering of Ministry of Education},\orgname{Tongji University}, \orgaddress{\street{1239 Siping Road}, \city{Shanghai}, \postcode{200092}, \country{P.R. China}}}

\affil*[3]{\orgdiv{ALISR Laboratory, College of Computer and Information Sciences},\orgname{King Saud University}, \orgaddress{\street{P. O. Box 51178}, \city{Riyadh} \postcode{11543}, \country{Saudi Arabia}}}

\affil[4]{\orgdiv{Computational Science and Simulation Technology, Institute of Photonics},\orgname{Leibniz Universität Hannover}, \orgaddress{\street{Appelstr. 11}, \city{Hannover}, \postcode{30167}, \country{Germany}}}

\affil[5]{\orgdiv{State Key Laboratory of Explosion Science and Technology},\orgname{Beijing Institute of Technology}, \orgaddress{\street{No. 5, South Street, Zhongguancun, Haidian District}, \city{Beijing}, \postcode{100081}, \country{P.R. China}}}

\abstract{In this work, we present a deep collocation method (DCM) for three dimensional potential problems in non-homogeneous media. This approach utilizes a physics-informed neural network with material transfer learning reducing the solution of the non-homogeneous partial differential equations to an optimization problem. We tested different configurations of the physics-informed neural network including smooth activation functions, sampling methods for collocation points generation and combined optimizers. A material transfer learning technique is utilized for non-homogeneous media with different material gradations and parameters, which enhance the generality and robustness of the proposed method. In order to identify the most influential parameters of the network configuration, we carried out a global sensitivity analysis. Finally, we provide a convergence proof of our DCM. The approach is validated through several benchmark problems, also testing different material variations.}

\keywords{deep learning , collocation method, potential problem,  PDEs,  sampling method,  activation function,  non-homogeneous,  transfer learning,  sensitivity analysis,  physics-informed}

\maketitle

\noindent\fbox{%
\hfill\parbox{\dimexpr\textwidth-15pt}{%
\vspace{-\topskip}\small
\newcommand{\entry}[3][\>]{#2 #1 \parbox[t]{.4\textwidth}{\RaggedRight
#3\strut\par}\\}%
\begin{multicols}{2}
  \begin{tabbing}
    \textbf{\large Nomenclature}\\[2ex]
    \entry[\quad\=]{$k\left(\textit{\textbf{x}}\right)$}{Position oriented material function}
    \entry{$\phi$}{Potential function}
    \entry{$q$}{Flux of potential field}
    \entry{$\bm{n}$}{Unit normal vector to a surface}
    \entry{$w_{jk}^{l}$}{Weight between neuron $k$ in hidden layer $l-1$ and neuron $j$ in hidden layer $l$}
    \entry{$b^l_j$}{Bias of neuron $j$ in layer $l$}
    \entry{$\sigma$}{Activation function}
    \entry{$\theta$}{Hyperparameters including all weights and biases}
    \entry{$Loss\left(\theta\right)$}{Loss function for training}
    \entry{$\textit{\textbf{x}}\,_\Omega$}{Collocation points to discretize the physical domain}
    \entry{$\textit{\textbf{x}}\,_\Gamma$}{Collocation points to discretize the boundaries}
    \entry{$MSE$}{\quad Mean square error loss form}
    \entry{$\phi^h (\bm{x};\theta)$}{\quad \quad Potential function approximated by Neural networks}
    \entry{$G\left(\textit{\textbf{x}}\right)$}{Governing equation}
    \entry{$\tilde{\phi}, \tilde{ \textit{q}}$}{Potential field and flux prescribed at boundaries}
	\entry{$\eta_i$}{Learning rate}
	\entry{$EE_i$}{Elementary effect for each input factor}	
	\entry{$\mu_{i}^{*}$}{Mean of the distribution of the elementary effects of each input}	
	\entry{$\sigma_{i}$}{Standard deviation of the distribution of the elementary effects of each input}	
	\entry{$\Lambda_j$}{Spectral curve of the Fourier progression}	
	\entry{$S_{i}^{FAST}$}{First order FAST sensitivity indices}	
	\entry{$S_{T_{i}}$}{\quad Total order FAST sensitivity indices}	
	\entry{$e$}{Relative error to measure the model accuracy}		
	\entry{$E_a$}{Analytical solution}			
	\entry{$E_a$}{Predicted solution}	
	\entry{$ \left \| \cdot \right \|$}{$L_2$-norm}		
  \end{tabbing}
\end{multicols}
}%
\hfill}

\section{Introduction}
\label{section 1:Introduction}
Recent years have witnessed the vast growing application of neural networks in physics, this is partly due to the fact that by training the neural network, high-dimensional raw data can be converted to low-dimensional codes \cite{hinton2006reducing}, and thus the high-dimensional PDEs can be directly solved using a `meshfree' deep learning algorithm, which improves computing efficiency and reduces the complexity of problems. The deep learning method deploys a deep neural network architecture with nonlinear activation functions that introduces the nonlinearity that the system as a whole needs for learning nonlinear patterns. This lends some credence to the application of a physics informed machine learning method in discovering the physics behind the potential problems in non-homogeneous media, which is a wide range of problems in physics and engineering. 

The current wave of deep learning started around 2006, when Hinton et al. \cite{hinton2006fast, bengio2007greedy} introduced deep belief nets and unsupervised learning procedures that could create layers of feature detectors without needs of labelled data. Equipped with deep learning model, information can be extracted from complicated raw input data with multiple levels of abstraction through a layer-by-layer process \cite{goodfellow2016deep}. Various variants such as multilayer perceptron (MLP), convolutional neural networks (CNN) and recurrent/recursive neural networks (RNN) \cite{patterson2017deep} have been developed and applied to e.g. image processing \cite{yang2018visually,kermany2018identifying}, object detection \cite{ouyang2015deepid,zhao2019object}, speech recognition \cite{amodei2016deep,nassif2019speech}, biology \cite{yue2018deep,ching2018opportunities} and even finance \cite{heaton2017deep,fischer2018deep}. Over the past decade it has been widely used in applications due to high performance demonstrated. Deep learning can learn features from data automatically, and the features can be used to get the approximation of solutions to differential equations \cite{gyryamachine}, which cast light on the possibility of using deep learning as functional approximators.

Artificial neural networks (ANN) stands at the center of the deep learning revolution, it can be traced back to the 1940's \cite{mcculloch1943logical} but they became especially popular in the past few decades due to the vast development in computational power and sophisticated machine learning algorithms such as backpropagation technique and advances in deep neural networks. Due to the simplicity and feasibility of ANNs to deal with nonlinear and multi-dimensional problems, they were applied in inference and identification by data scientists \cite{dias2004artificial}. They were also adopted to solve partial differential equations (PDEs) \cite{lagaris1998artificial,lagaris2000neural,mcfall2009artificial} but shallow ANNs are unable to learn the complex nonlinear patterns effectively. With improved theories incorporating unsupervised pre-training, stacks of auto-encoder variants, and deep belief nets, deep learning with enhanced learning abilities can also sever as an interesting alternative to classical methods such as FEM. 

According to the universal approximation theorem \cite{FUNAHASHI1989183,HORNIK1989359}, any continuous function can be approximated by a feedforward neural network with one single hidden layer. However, the number of neurons of the hidden layer tends to increase exponentially with increasing complexity and non-linearity of a model. Recent studies show that DNNs render better approximations for nonlinear functions \cite{mhaskar2016deep}. Some researchers employed deep learning for the solution of PDEs. E et al. developed a deep learning-based numerical method for high-dimensional parabolic PDEs and back-forward stochastic differential equations \cite{weinan2017deep,han2018solving}. Raissi et al. \cite{RAISSI2019686} introduced physics-informed neural networks for supervised learning of nonlinear partial differential equations. Beck et al. \cite{Beck_2019}  employed deep learning to solve nonlinear stochastic differential equations and Kolmogorov equations. Sirignano and Spiliopoulos \cite{sirignano2018dgm} provided a theoretical proof for deep neural networks as PDE approximators, and concluded that it converged as the number of hidden layers tend to infinity. Karniadakis et al. give an overview of physics-informed machine learning and introduced application of physics-informed neural networks in various fields such as fluid mechanics \cite{karniadakis2021physics}. For modelling problems in solid mechanics, we presented a Deep Collocation Method (DCM) \cite{anitescu2019artificial,guo2019deep}. Based on DCM, we have proposed a stochastic deep collocation method with neural architecture search strategy for stochastic flow analysis in heterogeneous media and found that physics-informed deep learning model can account for stochastic disturbance/uncertainties efficiently and stably \cite{guo2022stochastic}. Rather than the strong form of the boundary value problem with higher order derivatives, we presented a Deep Energy Method (DEM) \cite{samaniego2020energy, nguyen2020deep, goswami2020transfer, zhuang2021deep} by constraining total potential energy in the loss instead of a BVP. With the maturity of physics-informed deep learning model, it can be further applied to practical engineering problems. 

The problems of potential represent a category of physical and engineering problems. For some physical parameters in potential problems, for example, heat conductivity, permeability, permittivity, resistivity, magnetic permeability, tends to have a spatial distribution, and they can vary with respect to one or more coordinates. In order to deal with these problems, the non-homogeneous problems is translated into homogeneous problems with some classes of material variations. The steady state heat conduction analysis of FGMs analysis is a representative of potential problems. Due to the inherent mathematical difficulties, closed-form solutions exist in a few simple cases. Some traditional powerful methods, such as the finite element method(FEM), the boundary element method(BEM), and the method of fundamental solutions (MFS) and the dual reciprocity method (DRM) were used to solve the potential problems \cite{qu2015solutions,alves2005new}. The `meshfree' physics-informed neural networks offered a novel and robust approach in discovering the nonlinear patterns behind the potential patterns, especially for higher dimensions.  

In practice the learning ability of deep neural networks can strongly rely on the neural network configurations and the optimization algorithms, such as the activation forms, number of neurons and layers, weight initialization methods, number of iterations, and so on. In this paper, we therefore compare  different parameters to offer suggestions on the choice of a favourable configuration for the physics-informed neural network. Moreover, to increase the generality and robustness of the physics-informed deep learning based collocation method, the material transfer learning technique is integrated in the model, which will reduce the computation costs for different material variation types and help to improve the numerical results. Further, to unveil those influencing parameters for the proposed model, a global sensitive analysis is supplemented in the paper, which will be instructive for setting up physics-informed neural networks.

The paper is organised as follows: 
First, the three dimensional potential problem with in-homogeneous media is presented. Then we introduce the physics-informed deep learning based collocation method, which includes the neural network architecture, activation functions, sampling methods, a convergence proof, the material transfer learning and sensitivity analysis. Subsequently, a sufficient survey of numerical examples are presented, which investigated different neural network configurations, material transfer learning and model sensitivity analysis. Finally, the effectiveness of the deep learning method is demonstrated for solving three dimensional potential problems in non-homogeneous media.

\section{The governing equation for 3D problems of potential}
\label{pde3Dp} 
The general partial differential equation for potential function $\phi$ defined on a region $\Omega$ bounded by surface $\tau$, with an outward normal $\bm{n}$, can be written as:
\begin{equation}\label{eq:gvquation}
(k(\bm{x})\phi_{,i})_{,i}=k(\bm{x})\phi_{,ii}+k_{,i}(\bm{x})\phi_{,i}=0
\end{equation}
where $k$ is a position-oriented material function. Equation~\eqref{eq:gvquation} is the field equation for a wide range of problems in physics and engineering such as heat transfer, incompressible flow, gravity field, shaft torsion, electrostatics and magnetostatics, some of which are shown in Table~\ref{tab:Table1} \cite{paris1997boundary}.

\begin{table}[!h] 
\captionsetup{width=0.9\columnwidth}
\caption{Problems belong to the category of problems of potential} 
\vspace{-0.1cm}
\centering 
\resizebox{1\columnwidth}{!}{%
\begin{tabular}{l c c c c} 
\toprule 
\toprule 
& & & \multicolumn{2}{c}{\textbf{Boundary condition}} \\ 
\cmidrule(l){4-5} 
Problems & Scalar function $\phi$& $k(\bm{x})$& Dirichlet & Neumann\\ 
\midrule 
Heat transfer &Temperature T  & Thermal conductivity (k)&$T=\bar{ T}$ & Heat flow $q=-k\frac{\partial T}{\partial n}$\\ 
\midrule
Ground water flow &Hydraulic head H&Permeability (k)&$H=\bar H$&Velocity flow $q=-k\frac{\partial H}{\partial n}$\\ 
\midrule
Electrostatic& Electrostatic potential V&Permittivity ($\varepsilon$)&$V=\bar V$&Electric flow $q=-k\frac{\partial V}{\partial n}$\\ 
\midrule
Electric conduction&Electropotential E&Resistivity (k)&$E=\bar E$&Electric current $q=-k\frac{\partial E}{\partial n}$\\ 
\midrule
Magnetostatic&Magnetic potential M&Magnetic permeability ($\mu$)&$M=\bar M$&Magnetic flux density $q=-k\frac{\partial M}{\partial n}$\\ 
\bottomrule 
\end{tabular}
}
\label{tab:Table1} 
\end{table}

The Dirichlet $\tau_D$ and Neumann boundary $\tau_N$ conditions are given as:
\begin{equation}\label{eq:boundarycondition}
\begin{aligned}
\phi(\bm{x},t)=\bar{\phi}, \bm{x} \in \tau_D,\\
-k(\bm{x})\frac{\partial \phi(\bm{x},t)}{\partial \bm{n}}=\bar{q}, \bm{x} \in \tau_N
\end{aligned}
\end{equation}
where $\bm{n}$ is the unit outward normal to $\tau_N$.   The material properties of functionally graded materials (FGMs) vary gradually in space. Classical variations of $k(\bm{x})$ take the form $k(\bm{x})=k_0f(\bm{x})$, $k_0$ denoting a reference value and $f(\bm{x})$ is the material property variation function. Among the most common variation functions are the quadratic, exponential and trigonometric:

\begin{equation}\label{eq:materialvar}
\begin{aligned}
& \textrm{Parabolic}:\quad f(\bm{x})=(a_1+a_2\bm{x})^{2}\\
& \textrm{Exponential}:\quad f(\bm{x})=(a_1e^{\beta \bm{x}}+a_2e^{-\beta \bm{x}})^{2}\\
& \textrm{Trigonometric}:\quad f(\bm{x})=(a_1cos\beta \bm{x}+a_2sin\beta \bm{x})^{2}
\end{aligned}
\end{equation}

The governing equations for different material variations in the $z-$direction are summarized in Table \ref{tab:Table2}:
\begin{table}[!h] 
\captionsetup{width=0.9\columnwidth}
\caption{Governing equation deduced by considering various forms of $k(\bm{x})$} 
\vspace{-0.1cm}
\centering 
\resizebox{1\columnwidth}{!}{%
\begin{tabular}{c| c} 
\toprule 
\toprule 
$k(\bm{x})$& Differential equation\\ 
\midrule 
$k_0(a_1+a_2z)^{2}$ & $(a_1+a_2z)\nabla^2\phi+2a_2\phi_z=0$\\ 
\midrule
$k_0(a_1e^{\beta z}+a_2e^{-\beta z})^{2}$&$(a_1e^{\beta z}+a_2e^{-\beta z})^{2}\nabla^2\phi+2\beta(a_1^2e^{2\beta z}+a_2^2e^{-2\beta z})\phi_z=0$\\ 
\midrule
$k_0(a_1cos\beta z+a_2sin\beta z)^{2}$&$(a_1cos\beta z+a_2sin\beta z)^{2}\nabla^2\phi+2\beta(0.5(a^2_2-a^1_1)sin2\beta z+a_1a_2cos2\beta z)\phi_z=0$\\ 
\bottomrule 
\end{tabular}
}
\label{tab:Table2} 
\end{table}

\section{Physics-informed deep learning based collocation method}
\label{section 3: solving problems of potential}

\subsection{Feed forward neural network}
The basic architecture of a fully connected feedforward neural network is shown in Figure~\ref{Figure2:network}. It comprises multiple layers: an input layer, one or more hidden layers and an output layer. Each layer consists of one or more nodes called neurons, shown in Figure~\ref{Figure2:network} by the small colored circles. For an interconnected structure, every two neurons in neighboring layers have a connection, where the weights between neuron $k$ in hidden layer $l-1$ and neuron $j$ in hidden layer $l$ is denoted by $w_{jk}^{l}$, see Figure~\ref{Figure2:network}. No connection exists among neurons in the same layer as well as in the non-neighboring layers. Input data, defined from $x_{1}$ to $x_{N}$, flows through this neural network via connections between neurons, starting from the input layer, through the hidden layers $l-1$, $l$, to the output layer, which eventually outputs data from $y_{1}$ to $y_{M}$.

\begin{figure}[!htb]
	\captionsetup{width=0.9\columnwidth}
	\includegraphics[height=7cm]{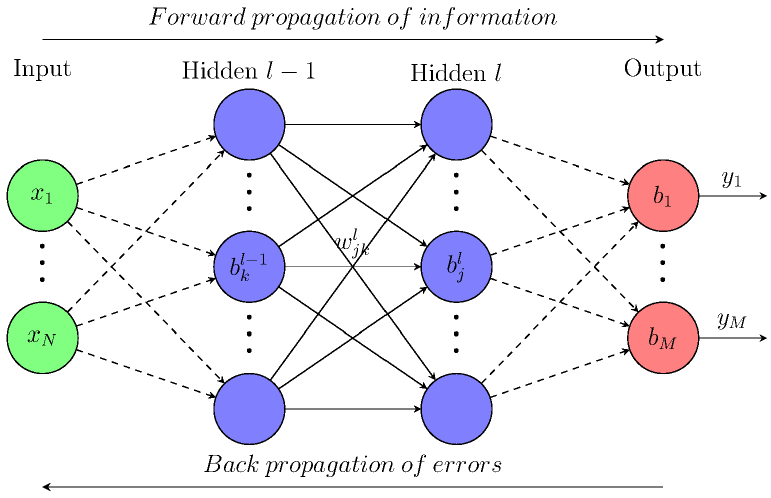}
	\centering
	\caption{Architecture of a fully connected feedforward back-propagation neural network.}
	\label{Figure2:network}
\end{figure}

The activation function is defined for an output of each neuron in order to introduce a non-linearity into the neural network and make the back-propagation possible where gradients are supplied along with an error to update weights and biases. The activation function in layer $l$ will be denoted by $\sigma$ here. 

There are many activation functions $\sigma$ proposed for inference and identification with neural networks such as sigmoids function \cite{dhingraactivation}, hyperbolic tangent function $ \left( Tanh \right)$ \cite{dhingraactivation}, Rectified linear units $ \left( Relu \right)$, to name a few. And some recent smooth activation functions, such as Swish \cite{misra2019mish}, LeCuns Tanh \cite{dhingraactivation}, Bipolar sigmoid \cite{dhingraactivation}, Mish \cite{misra2019mish}, Arctan \cite{zhang2018efficient}, listed in  Appendix~\ref{appendix b} Table~\ref{tab:Tableac} have been studied and compared in the numerical example section. All selected activation functions must be smooth enough in order to avoid gradient vanishing during backpropagation, since the governing equation is introduced in the loss which includes the second order derivatives of the field variable.
Afterward, the value on each neuron in the hidden layers and output layer can be yielded by adding the weighted sum of values of output values from previous layer to basis. An intermediate quantity for neuron $j$ on hidden layer $l$ is defined as
\begin{equation}
a^l_j = \sum_k w^l_{jk}y^{l-1}_k + b^l_j,
\label{activation}
\end{equation}
and its output is given by the activation of the above weighted input
\begin{equation}
y^{l}_j =\sigma \left( a^l_j \right)=\sigma \left(\sum_k w^l_{jk}y^{l-1}_k + b^l_j \right),
\label{wi} 
\end{equation}
where $y^{l-1}_k$ is the output from previous layer. 

Based on the previous derivation and description, we can draw a definition which will be used in Section \ref{dcm}: 
\theoremstyle{definition}
\begin{definition}{(Feedforward Neural Network)}
A generalized neural networks with activation can be written in a tuple form $\left((f_1,\sigma_1),...,(f_n,\sigma_n)\right)$, $f_i$ referring to an affine-line function $(f_i = W_i\textit{\textbf{x}}+b_i)$ that mapps $R^{i-1} \rightarrow R^{i}$. The tuple formed neural network in all defines a continuous bounded function mapping $R^{d}$ to $R^{n}$:
\begin{equation}
FNN: \mathbb{R}^d \to \mathbb{R}^n, \; \textrm{with}\; \;   F^n\left(\textit{\textbf{x}};\bm{\theta}\right) = \sigma_n\circ f_n \circ\cdots  \circ \sigma_1 \circ f_1
\end{equation}
where $d$ indicates the dimension of the inputs, $n$ the number of field variables, $\bm{\theta}=\{ \bm{W};\bm{b} \}$ consisting of hyperparameters such as weights and biases and $\circ$ denotes the element-wise operator.
\end{definition}
The universal approximation theorem \cite{FUNAHASHI1989183,HORNIK1989359} states that this continuous bounded function $F$ with nonlinear activation $\sigma$ can be adopted to capture the nonlinear property of the system, in our case the potential problem. With this definition, we can define \cite{hornik1991approximation}:
\begin{theorem}\label{theorem1}
If $\sigma^i \in C^m(R^i)$ is non-constant and bounded, then $F^n$ is uniformly m-$dense$ in $C^m(R^n)$.
\end{theorem}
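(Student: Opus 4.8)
The statement asserts a density result in the strong $C^m$ topology: the network class $F^n$ can approximate any target $g\in C^m(\mathbb{R}^n)$ together with all its partial derivatives up to order $m$, uniformly on every compact set $K$. I would measure this with the seminorm $\rho_{m,K}(f,g)=\max_{|\alpha|\le m}\sup_{x\in K}|D^\alpha f(x)-D^\alpha g(x)|$, so that \emph{uniformly $m$-dense} means $\rho_{m,K}$-dense for every such $K$. My first move is a reduction: since a single-hidden-layer network with an affine read-out is the essential base case contained in the family $F^n$, it suffices to prove that the class $\Sigma(\sigma)=\{x\mapsto\sum_j\beta_j\,\sigma(w_j\cdot x+b_j)\}$ is $\rho_{m,K}$-dense in $C^m(K)$.

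The engine of the proof is to show that the $\rho_{m,K}$-closure of $\Sigma(\sigma)$ contains every polynomial. Fix a direction $w$ and consider the one-parameter family $a\mapsto \sigma(a\,(w\cdot x)+b)$. Each of its difference quotients in $a$ is a finite linear combination of functions $\sigma(w'\cdot x+b')$, hence lies in $\Sigma(\sigma)$; taking $k$ successive difference quotients and letting the step tend to zero produces, in the limit, the ridge monomial $x\mapsto(w\cdot x)^k\,\sigma^{(k)}(b)$. The crucial point is that the boundedness hypothesis forces $\sigma$ to be non-polynomial: a non-constant polynomial is unbounded, so if some $\sigma^{(k)}$ with $k\le m$ vanished identically, then $\sigma$ would be a polynomial of degree below $k$, hence constant by boundedness, a contradiction. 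Consequently for each $k\le m$ there is a base point $b_k$ with $\sigma^{(k)}(b_k)\neq 0$, and after dividing by this nonzero constant we obtain every ridge monomial $(w\cdot x)^k$ in the closure. Since ridge powers $(w\cdot x)^k$, as $w$ ranges over $\mathbb{R}^n$, span the space of homogeneous polynomials of degree $k$, the closure of $\Sigma(\sigma)$ contains all polynomials on $\mathbb{R}^n$.

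To finish, I would invoke the Weierstrass theorem in the $C^m$ topology: polynomials are $\rho_{m,K}$-dense in $C^m(K)$ for every compact $K$ (approximate the top-order derivatives $D^\alpha g$, $|\alpha|=m$, uniformly by polynomials and integrate, or appeal directly to simultaneous polynomial approximation of a function together with its derivatives). Chaining the two inclusions — polynomials are $\rho_{m,K}$-dense, and polynomials lie in the closure of $\Sigma(\sigma)\subseteq F^n$ — yields that $F^n$ is uniformly $m$-dense in $C^m(\mathbb{R}^n)$, which is the assertion.

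The step I expect to be the genuine obstacle is the passage from difference quotients to derivatives in the \emph{strong} $\rho_{m,K}$ sense. One must verify that the difference quotients in the parameter $a$ converge not merely pointwise but together with all $x$-derivatives up to order $m$, uniformly on $K$, so that the limiting ridge monomial actually lies in the $\rho_{m,K}$-closure. This requires controlling the Taylor remainder of $\sigma$ uniformly in $x\in K$ and simultaneously across every multi-index $|\alpha|\le m$, and justifying the interchange of $x$-differentiation with the limit in $a$; the $C^m$ regularity together with local boundedness of the derivatives of $\sigma$ is exactly what makes these estimates go through.
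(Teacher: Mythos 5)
The paper itself gives no proof of this theorem: it is quoted verbatim from Hornik (1991) as a known result, so there is no internal argument to compare yours against. On its own terms, your proposal follows the classical ridge-monomial route (reduce to one hidden layer, generate $(w\cdot x)^k$ by difference quotients in the inner weight, finish with Weierstrass), and your use of boundedness --- a bounded nonconstant function cannot be a polynomial, so no derivative of $\sigma$ of order $\le m$ vanishes identically --- is exactly the right place for that hypothesis to enter.

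There is, however, a genuine gap at the central step. The difference-quotient construction produces the ridge monomial $(w\cdot x)^k\,\sigma^{(k)}(b)$ only for $k\le m$, because $\sigma$ is only assumed to lie in $C^m$ and no higher derivative need exist; for $k>m$ the $k$-th difference quotients of $a\mapsto\sigma(a\,(w\cdot x)+b)$ need not converge at all. Consequently your argument places in the closure of $\Sigma(\sigma)$ only the polynomials of degree at most $m$, a finite-dimensional space that is certainly not $\rho_{m,K}$-dense in $C^m(K)$, so the Weierstrass step cannot be chained on. (A secondary instance of the same problem: even for $k\le m$, convergence of the $a$-difference quotients \emph{together with all $x$-derivatives up to order $m$} involves the functions $a^{|\alpha|}\sigma^{(|\alpha|)}(a\,(w\cdot x)+b)$ differentiated $k$ more times in $a$, which again threatens to exceed the available $C^m$ regularity; this is the obstacle you flagged, but the degree cap $k\le m$ is the fatal one.) The standard repairs are precisely what the sketch omits: either mollify $\sigma$ into a $C^\infty$, still non-polynomial activation whose networks are uniformly approximable, with derivatives, by Riemann sums of shifts of $\sigma$ (as in Leshno et al.\ and Pinkus's survey), and then run the derivative argument to all orders on the mollified activation; or follow Hornik's own route, which avoids ridge monomials altogether and instead approximates a dense family such as trigonometric ridge functions in the $C^m$ seminorm. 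Without one of these devices the proof does not close.
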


\subsection{Backpropagation}
Backpropagation  $\left( backward\;propagation \right)$ can be used to train multilayer feed-forward networks by calculating the gradient of a loss function and finding the minimum value of the loss function. The backward (output-to-input) flow determine how to adjust each weight as shown in Figure~\ref{backpropogation}. 

\begin{figure}[!htb]
	\captionsetup{width=0.9\columnwidth}
	\centering\includegraphics[height=8cm]{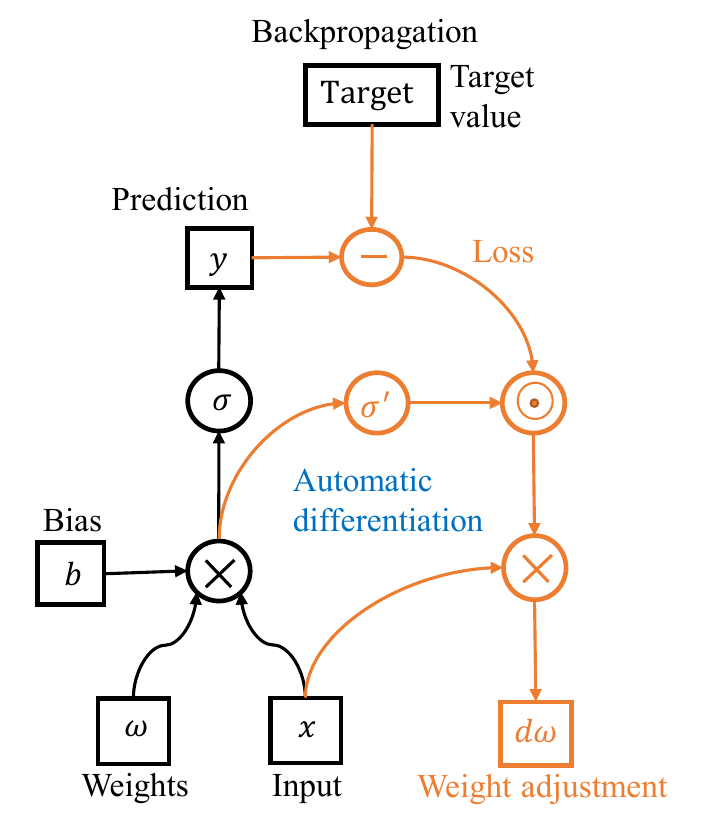}   
	\caption{The ‘compute graph’ for the feedforward neural network}
	\label{backpropogation}
\end{figure}

Backpropagation is based on the chain rule, which is used to calculate the derivative of loss function with regard to the weight in the network. The governing equation in our problem requires the second partial derivatives of the potential function $\phi\left(\textit{\textbf{x}}\right)$. In order to find the weights and biases, a loss function $\textit{\textbf{Loss}}\left(\textit{f},\theta\right)$ is defined. The backpropagation algorithm for computing the gradient of this loss function $\textit{\textbf{Loss}}\left(f,\theta\right)$, the weight coefficients $\textit{\textbf{w}}$ and thresholds of neurons $\textit{\textbf{b}}$ can be written as follow:

\begin{algorithm}[!htb]
\caption{Backpropagation algorithm}
\label{algorithm1}
\textbf{Input:}  Create random sampling points $\bm{x}=(x_1,...,x_{m})^T$inside the physical domain.
\textbf{Output:} The gradient of the loss function.
  \textbf{Set}: the corresponding activation $a^1 = x^i$, $i=1,...,m$ for the input layer
  
  \textbf{Feedforward}: 
  \begin{algorithmic}
  \For{$l$\enspace from 2 to $L$}
    {compute $z^l=w^la^{l-1}+b^l$ and $a^l=\sigma \left(z^l\right)$
   } 
    \EndFor
    \end{algorithmic}
    
    \textbf{Output error}: Compute the output error $\delta^L = \nabla_a\textit{\textbf{Loss}}\odot {\sigma}' \left(z^L\right)$
    
   \textbf{Backpropagate error}: 
    \begin{algorithmic}
  \For{$l$\enspace from L-1 to $2$}
    {compute $\delta^l=\left(\left(w^{l+1}\right)^T\delta^{L+1}\right)\odot{\sigma}' \left(z^l\right)$
   } 
    \EndFor
   \end{algorithmic} 
   \textbf{Output gradient}: The gradient of the loss function $\frac{\partial\bm{Loss}}{\partial w^l_{jk}}=a^{l-1}_k\delta^l_j$ and $\frac{\partial\bm{Loss}}{\partial b^l_{j}}=\delta^l_j$    

  \end{algorithm}

\subsection{Physics-informed deep learning based collocation method}
\label{dcm}

\begin{figure}[!htb]
	\captionsetup{width=0.9\columnwidth}
	\includegraphics[height=7cm]{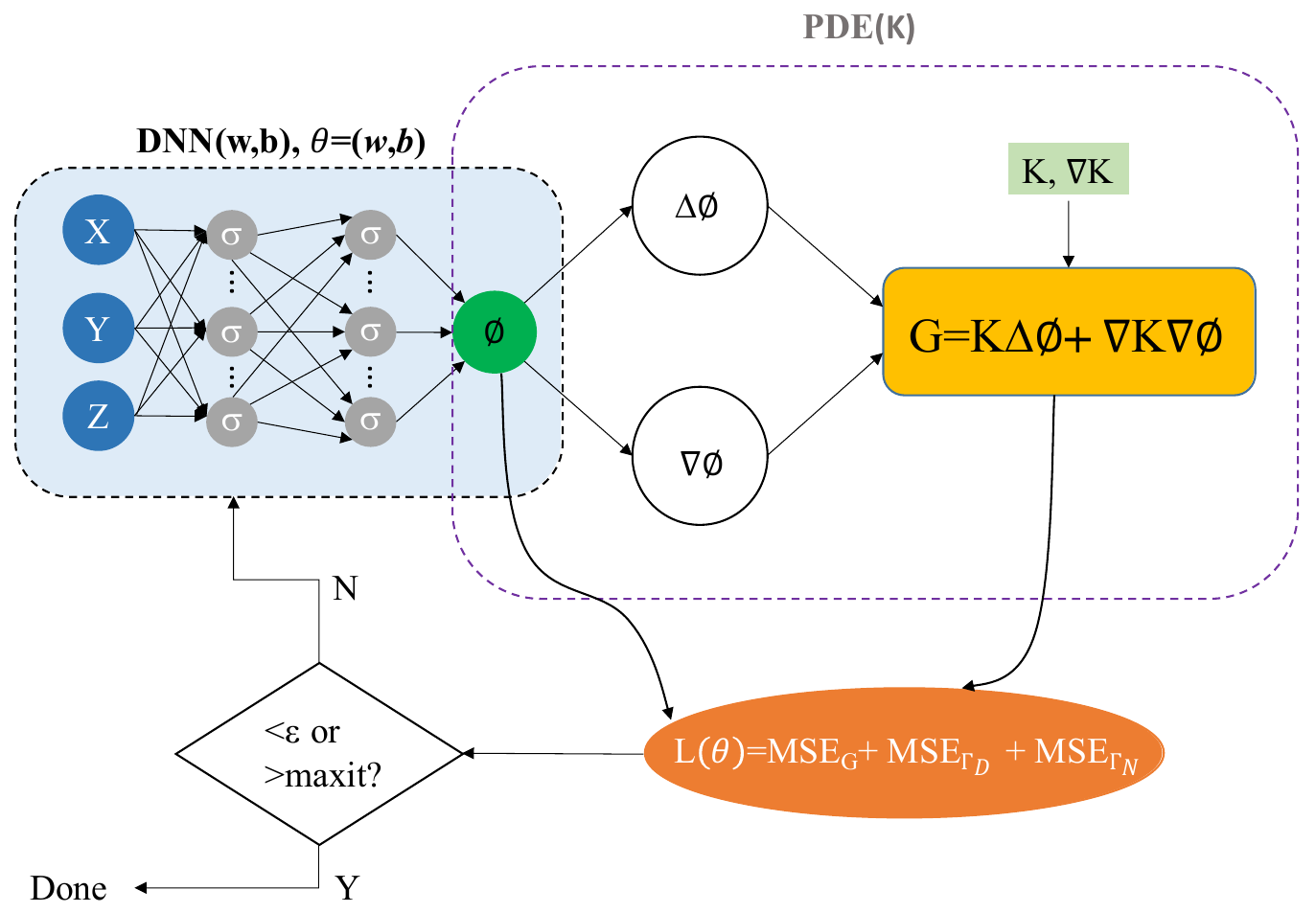}
	\centering
	\caption{Schematic diagram of physics-informed neural networks}
	\label{fig:PINN}
\end{figure}

To train the network, we place collocation points in the physical domain and at the boundaries denoted by $\textit{\textbf{x}}\,_\Omega=(x_1,...,x_{N_\Omega})^T$ and $\textit{\textbf{x}}\,_\Gamma(x_1,...,x_{N_\Gamma})^T$, respectively. Then the potential function $\phi$ is approximated with the aforementioned deep feedforward neural network $\phi^h (\bm{x};\bm{\theta})$. Thus, a loss function related to the underlying BVP is constructed. Substituting $\phi^h \left(\bm{x}\,_\Omega;\bm{\theta}\right)$ into governing equation, we obtain
\begin{equation}
G\left(\textit{\textbf{x}}\,_\Omega;\bm{\theta}\right)=k(\bm{x})\phi_{,ii}^{h}\left(\textit{\textbf{x}}\,_\Omega;\bm{\theta}\right)+k_{,i}(\bm{x})\phi^{h}_{,i}\left(\textit{\textbf{x}}\,_\Omega;\bm{\theta}\right),
\end{equation}
which results in a physics-informed deep neural network $G\left(\textit{\textbf{x}}\,_\Omega;\bm{\theta}\right)$.  The boundary conditions illustrated in Section 2 can also be expressed by the neural network approximation $\phi^h \left(\textit{\textbf{x}}\,_\Gamma;\bm{\theta}\right)$ as: On $\Gamma_{D}$, we have
\begin{equation}
  \phi^h \left(\textit{\textbf{x}}\,_{\Gamma_D};\bm{\theta}\right)=\tilde{\phi},
\end{equation} 

\noindent On $\Gamma_{N}$,
\begin{equation}
  \textit{q}^h \left(\textit{\textbf{x}}\,_{\Gamma_N};\bm{\theta}\right) = \tilde{ \textit{q}}. 
\label{bd2}
\end{equation} 
where $\textit{q}^h \left(\textit{\textbf{x}}\,_{\Gamma_N};\bm{\theta}\right) $ can be obtained from Equation~\eqref{eq:boundarycondition} by combing $\phi^h \left(\textit{\textbf{x}}\,_{\Gamma_N};\bm{\theta}\right)$. Note the induced physics-informed neural network $G\left(\textit{\textbf{x}};\bm{\theta}\right)$, $q\left(\textit{\textbf{x}};\bm{\theta}\right)$ share the same parameters as $\phi^h \left(\textit{\textbf{x}};\bm{\theta}\right)$. Considering the generated collocation points in domain and on boundaries, they can all be learned by minimizing the mean square error loss function \cite{raissi2017physics}:
\begin{equation}\label{lossform}
Loss\left(\bm{\theta}\right)=MSE=MSE_{G}+MSE_{\Gamma_{D}}+MSE_{\Gamma_{N}},
\end{equation}
with
\begin{equation}
\begin{aligned}
&MSE_{G}=\frac{1}{N_d}\sum_{i=1}^{N_d}\begin{Vmatrix}
G\left(\textit{\textbf{x}}\,_\Omega;\bm{\theta}\right)
\end{Vmatrix}^2=\frac{1}{N_\Omega}\sum_{i=1}^{N_\Omega}\begin{Vmatrix}
k(\bm{x}_\Omega)\phi_{,ii}^{h}\left(\textit{\textbf{x}}\,_\Omega;\bm{\theta}\right)+k_{,i}(\bm{x}_\Omega)\phi^{h}_{,i}\left  (\textit{\textbf{x}}\,_\Omega;\bm{\theta}\right)
\end{Vmatrix}^2,\\
&MSE_{\Gamma_{D}}=\frac{1}{N_{\Gamma_D}}\sum_{i=1}^{N_{\Gamma_D}}\begin{Vmatrix}
 \phi^h \left(\textit{\textbf{x}}\,_{\Gamma_D};\bm{\theta}\right)-\bar{\phi}
\end{Vmatrix}^2,\\ 
&MSE_{\Gamma_{N}}=\frac{1}{N_{\Gamma_N}}\sum_{i=1}^{N_{\Gamma_N}}\begin{Vmatrix}
q\left(\textit{\textbf{x}}\,_{\Gamma_N};\bm{\theta}\right)-\bar{q}
\end{Vmatrix}^2=\frac{1}{N_{\Gamma_N}}\sum_{i=1}^{N_{\Gamma_N}}\begin{Vmatrix}
-k(\bm{x}_{\Gamma_N})\frac{\partial \phi\left(\textit{\textbf{x}}_{\Gamma_N};\bm{\theta}\right)}{\partial n}-\bar{q}
\end{Vmatrix}^2.
\end{aligned}
\end{equation}
where $x\,_\Omega \in {R^N} $, $\bm{\theta} \in {R^K}$ are the neural network parameters. $Loss\left(\bm{\theta}\right)=
0$, $\phi^h \left(\textit{\textbf{x}};\bm{\theta}\right)$ is then a solution to potential function. Here, the defined loss function measures how well the approximation satisfies the physical law (governing equation), boundaries conditions. Our goal is to find a set of parameters $\bm{\theta}$ that the  approximated potential $\phi^h \left(\textit{\textbf{x}};\bm{\theta}\right)$ minimizes the loss $Loss$. If $Loss$ is a very small value, the approximation $\phi^h \left(\textit{\textbf{x}};\bm{\theta}\right)$ is very closely satisfying governing equations and boundary conditions, namely
\begin{equation}
\phi^h = \argmin_{\bm{\theta} \in R^K} Loss\left(\bm{\theta}\right)
\end{equation}

The solution of heat conduction problems by deep collocation method can be reduced to an optimization problem. In the deep learning Tensorflow framework, a variety of optimizers are available. One of the most widely used optimization methods is the Adam optimization algorithm, which is also adopted in the numerical study. The idea is to take a descent step at collocation point $\textit{\textbf{x}}_{i}$ with Adam-based learning rate $\eta_i$, 
\begin{equation}
\bm{\theta}_{i+1} = \bm{\theta}_{i} + \eta_i \bigtriangledown_{\bm{\theta} } Loss \left ( \textit{\textbf{x}}_i;\bm{\theta}_i \right )
\label{Adma}
\end{equation}
and then the process in Equation~\eqref{Adma} is repeated until a convergence criterion is satisfied.
\subsection{Convergence of deep collocation method for non-homogeneous PDEs}
With the universal approximation theorem of neural networks, a feedforward neural network is used to approximate the potential function as $\phi^h \left(\textit{\textbf{x}};\bm{\theta}\right)$. The approximation power of neural networks for a quasilinear parabolic PDEs were shown by Sirignano et al. \cite{sirignano2018dgm}. For non-homogeneous elliptic PDEs, the convergence study can be boiled down to:
\begin{equation}
\exists\;\;\phi^h \in F^n, \;\;s.t. \;\;as\;\;n\rightarrow\infty,\;\;Loss(\bm{\theta})\rightarrow0,\;\;\phi^h\rightarrow\phi.
\end{equation}

The non-homogeneous PDEs has a unique solution, s.t. $\phi \in C^2(\Omega)$ with its derivatives uniformly bounded. Also, the conductivity function $k(\textit{\textbf{x}})$ is assumed to be $C^{1,1}$ ($C^1$ with Lipschitz continuous derivative).  

\begin{theorem}\label{theorem2}
Assume that $\Omega$ is compact with measures $\ell_1$, $\ell_2$, and $\ell_3$ whose supports are constrained in $\Omega$, $\Gamma_D$, and $\Gamma_N$. Furthermore, the governing equation~\eqref{eq:gvquation} subject to~\ref{eq:boundarycondition} has a unique classical solution and material function $k(\textit{\textbf{x}})$ being $C^{1,1}$ ($C^1$ with Lipschitz continuous derivative). Then, $\forall \;\; \varepsilon >0 $, $\exists\;\;  K>0$, which may dependent on $sup_{\Omega}\left \|   \phi_{ii}\right \|$ and $sup_{\Omega}\left \|   \phi_{i}\right \|$, s.t. $\exists\;\;  \phi^h\in F^n$, satisfies $Loss(\bm{\theta})\leq K\varepsilon$
\end{theorem}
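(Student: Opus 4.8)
The plan is to exhibit a single network $\phi^h\in F^n$ whose loss is controlled by the error with which it approximates the true solution $\phi$ together with its derivatives, and then to drive that error to zero through Theorem~\ref{theorem1}. The enabling structural fact is that, because $\Omega$ is compact and $k(\bm{x})\in C^{1,1}$, both $k$ and its first derivatives $k_{,i}$ attain finite bounds on $\Omega$; write $M_0=\sup_\Omega|k|$ and $M_1=\sup_\Omega|k_{,i}|$. Likewise, since the supports of $\ell_1,\ell_2,\ell_3$ sit inside the compact sets $\Omega,\Gamma_D,\Gamma_N$, their total masses are finite and enter only as harmless multiplicative constants.

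First I would invoke Theorem~\ref{theorem1} in its $m=2$ form. Every activation in Table~\ref{tab:Table2} is $C^\infty$, non-constant and bounded, so $F^n$ is uniformly $2$-dense in $C^2(\Omega)$; hence, given $\varepsilon>0$, there is a network $\phi^h$ with
\begin{equation}
\sup_\Omega\left\|\phi^h-\phi\right\|,\;\;\sup_\Omega\left\|\phi^h_{,i}-\phi_{,i}\right\|,\;\;\sup_\Omega\left\|\phi^h_{,ii}-\phi_{,ii}\right\|\;\leq\;\varepsilon.
\end{equation}
The second step converts this into a bound on the domain residual term. Since $\phi$ solves $k\phi_{,ii}+k_{,i}\phi_{,i}=0$ pointwise, subtracting from the induced residual gives
\begin{equation}
G\left(\textit{\textbf{x}}_\Omega;\theta\right)=k(\bm{x})\left(\phi^h_{,ii}-\phi_{,ii}\right)+k_{,i}(\bm{x})\left(\phi^h_{,i}-\phi_{,i}\right),
\end{equation}
whence $\left\|G\right\|\leq(M_0+M_1)\varepsilon$ uniformly on $\Omega$, so the $\ell_1$-weighted residual loss $MSE_G$ is at most $(M_0+M_1)^2\varepsilon^2\,\ell_1(\Omega)$.

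The two boundary contributions follow analogously. On $\Gamma_D$ the exact relation $\phi=\bar\phi$ turns the residual into $\phi^h-\phi$, so $MSE_{\Gamma_D}\leq\varepsilon^2\,\ell_2(\Gamma_D)$. On $\Gamma_N$, using $\bar q=-k\,\partial\phi/\partial n$ gives $q^h-\bar q=-k\,\partial(\phi^h-\phi)/\partial n$, which is dominated by $M_0\sup_\Omega\left\|\phi^h_{,i}-\phi_{,i}\right\|\leq M_0\varepsilon$, so $MSE_{\Gamma_N}\leq M_0^2\varepsilon^2\,\ell_3(\Gamma_N)$. Summing the three terms yields $L(\theta)\leq\widetilde K\varepsilon^2$ with $\widetilde K$ depending on $M_0,M_1$, the three finite masses, and, through the density estimate, on $\sup_\Omega\left\|\phi_{,ii}\right\|$ and $\sup_\Omega\left\|\phi_{,i}\right\|$; absorbing the factor $\varepsilon\leq1$ (equivalently, relabelling the tolerance) delivers the advertised $L(\theta)\leq K\varepsilon$.

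The crux of the argument is the second step rather than the boundary algebra: everything rests on obtaining \emph{simultaneous} uniform control of $\phi^h$ up to second order, which is exactly why Theorem~\ref{theorem1} must be used in its derivative-dense ($C^m$) form and why smoothness of the activation is indispensable — a merely $C^0$-dense family would leave $MSE_G$ uncontrolled. The only other place requiring care is that the Neumann trace be bounded by the interior $C^1$ estimate; this is immediate because $\phi^h$ is smooth up to $\partial\Omega$ and $\partial/\partial n$ is a unit directional derivative, but it is precisely where compactness of $\Omega$ and finiteness of $\ell_3$ are genuinely used.
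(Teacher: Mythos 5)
Your proposal is correct and follows essentially the same route as the paper: invoke Theorem~\ref{theorem1} for simultaneous uniform approximation of $\phi$ and its first and second derivatives, bound $k$ and $k_{,i}$ on the compact $\Omega$ via the $C^{1,1}$ assumption, and sum the three loss contributions against the finite measures $\ell_1,\ell_2,\ell_3$. If anything, your explicit subtraction of the exact relation $k\phi_{,ii}+k_{,i}\phi_{,i}=0$ to rewrite $G$ in terms of the approximation errors (and your remark about absorbing the $\varepsilon^2$ versus $\varepsilon$ discrepancy) makes the argument cleaner than the paper's version, which leaves that step implicit.
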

\begin{proof}
For governing Equation~\eqref{eq:gvquation} subject to \ref{eq:boundarycondition}, according to Theorem \ref{theorem1},
$\forall$ $\varepsilon\;\;  >0$, $\exists\;\;  \phi^h\;\; \in\;\; F^n$, s.t. 
\begin{equation}\label{sup}
\sup_{x\in \Omega}\left \|\phi_{,i}\left(\textit{\textbf{x}}\,_\Omega\right)-  \phi^h_{,i}\left(\textit{\textbf{x}}\,_\Omega\right)\right \|^2+\sup_{x\in \Omega}\left \|\phi_{,ii}\left(\textit{\textbf{x}}\,_\Omega\right)-  \phi^h_{,ii}\left(\textit{\textbf{x}}\,_\Omega\right)\right \|^2<\varepsilon	
\end{equation}
Recalling that the loss is constructed by Equation~\eqref{lossform}, for $MSE_G$ and applying triangle inequality, we obtain:
\begin{equation}
\begin{aligned}
\begin{Vmatrix}
G\left(\textit{\textbf{x}}\,_\Omega;\bm{\theta}\right)
\end{Vmatrix}^2\leqslant\begin{Vmatrix}
k(\bm{x}_\Omega)\phi_{,ii}^{h}\left(\textit{\textbf{x}}\,_\Omega;\bm{\theta}\right)
\end{Vmatrix}^2+\begin{Vmatrix}
k_{,i}(\bm{x}_\Omega)\phi^{h}_{,i}\left  (\textit{\textbf{x}}\,_\Omega;\bm{\theta}\right)
\end{Vmatrix}^2
\end{aligned}
\end{equation}
Let us consider the $C^{1,1}$ conductivity function $k(\textit{\textbf{x}})$, $\exists \;\;M_1>0,\;\;M_2>0$, $\exists \;\; x \in\;\Omega$, $\left \|  k(\textit{\textbf{x}})\right \|\leqslant M_1$, $\left \|  k_{,i}(\textit{\textbf{x}})\right \|\leqslant M_2$. 
From Equation~\eqref{sup}, we can then obtain:
\begin{equation}\label{boundsup1}
\begin{aligned}
\int_{\Omega}k_{,i}^2(\bm{x}_\Omega)\left( \phi_{,i}^h-\phi_{,i} \right )^2d\ell_1\leqslant M_2^2 \varepsilon^2\ell_1(\Omega) \\
\int_{\Omega}k^2(\bm{x}_\Omega)\left ( \phi_{,ii}^h-\phi_{,ii} \right )^2d\ell_1\leqslant M_1^2 \varepsilon^2\ell_1(\Omega)
\end{aligned}
\end{equation}
On boundaries $\Gamma_{D}$ and $\Gamma_{N}$, we can obtain:
\begin{equation}\label{boundsup2}
\begin{aligned}
\int_{\Gamma_{D}}\left (\phi^h \left(\textit{\textbf{x}}\,_{\Gamma_D};\bm{\theta}\right)-\phi\left(\textit{\textbf{x}}\,_{\Gamma_D};\bm{\theta}\right)\right )^2d\ell_2\leqslant \varepsilon^2\ell_2(\Gamma_{D})\\
\int_{\Gamma_{N}}k^2(\bm{x}_{\Gamma_N})\left (\phi^h_{,n} \left(\textit{\textbf{x}}\,_{\Gamma_N};\bm{\theta}\right)-\phi_{,n}\left(\textit{\textbf{x}}\,_{\Gamma_N};\bm{\theta}\right)\right )^2d\ell_3\leqslant M_1^2\varepsilon^2\ell_3(\Gamma_{N})
\end{aligned}
\end{equation}
Therefore, using Equations \ref{boundsup1} and \ref{boundsup2}, as $n\rightarrow\infty$, we obtain
\begin{equation}
\begin{aligned}
&Loss\left(\bm{\theta}\right)=\frac{1}{N_\Omega}\sum_{i=1}^{N_\Omega}\begin{Vmatrix}
k(\bm{x}_\Omega)\phi_{,ii}^{h}\left(\textit{\textbf{x}}\,_\Omega;\bm{\theta}\right)+k_{,i}(\bm{x}_\Omega)\phi^{h}_{,i}\left  (\textit{\textbf{x}}\,_\Omega;\bm{\theta}\right)
\end{Vmatrix}^2+\\ 
&\frac{1}{N_{\Gamma_D}}\sum_{i=1}^{N_{\Gamma_D}}\begin{Vmatrix}
 \phi^h \left(\textit{\textbf{x}}\,_{\Gamma_D};\bm{\theta}\right)-\bar{\phi}
\end{Vmatrix}^2+\frac{1}{N_{\Gamma_N}}\sum_{i=1}^{N_{\Gamma_N}}\begin{Vmatrix}
-k(\bm{x}_{\Gamma_N})\frac{\partial \phi\left(\textit{\textbf{x}}_{\Gamma_N};\bm{\theta}\right)}{\partial n}-\bar{q}
\end{Vmatrix}^2 \\
&\leqslant \frac{1}{N_\Omega}\sum_{i=1}^{N_\Omega}\begin{Vmatrix}
k(\bm{x}_\Omega)\phi_{,ii}^{h}\left(\textit{\textbf{x}}\,_\Omega;\bm{\theta}\right)
\end{Vmatrix}^2+\frac{1}{N_\Omega}\sum_{i=1}^{N_\Omega}\begin{Vmatrix}
k_{,i}(\bm{x}_\Omega)\phi^{h}_{,i}\left  (\textit{\textbf{x}}\,_\Omega;\bm{\theta}\right)
\end{Vmatrix}^2\\ 
&\frac{1}{N_{\Gamma_D}}\sum_{i=1}^{N_{\Gamma_D}}\begin{Vmatrix}
 \phi^h \left(\textit{\textbf{x}}\,_{\Gamma_D};\bm{\theta}\right)-\bar{\phi}
\end{Vmatrix}^2+\frac{1}{N_{\Gamma_N}}\sum_{i=1}^{N_{\Gamma_N}}\begin{Vmatrix}
-k(\bm{x}_{\Gamma_N})\frac{\partial \phi\left(\textit{\textbf{x}}_{\Gamma_N};\bm{\theta}\right)}{\partial n}-\bar{q}
\end{Vmatrix}^2 \\
&\leqslant (M_2^2+M_1^2)\varepsilon^2\ell_1(\Omega)+\varepsilon^2\ell_2(\Gamma_{D})+M_1^2\varepsilon^2\ell_3(\Gamma_{N})=K\varepsilon
\end{aligned}
\end{equation}
\end{proof}
With Theorem \ref{theorem2} and the condition that $\Omega$ is a bounded open subset of R, $\forall n\in N_+$, $\phi^h\in \;F^n \;\in L^2(\Omega)$, it can be concluded from Sirignano et al. \cite{sirignano2018dgm} that:
\begin{theorem}\label{theorem3}
$\forall \;p<2$, $\phi^h\in \;F^n$ converges to $\phi$ strongly in $L^p(\Omega)$ as $n\rightarrow \infty$ with $\phi$ being the unique solution to the potential problems.
\end{theorem}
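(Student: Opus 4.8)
The plan is to upgrade the statement of Theorem~\ref{theorem2} --- that the composite loss $L(\theta)$ can be forced below $K\varepsilon$ --- into genuine convergence of the network approximants to the exact potential, following the strategy of Sirignano and Spiliopoulos \cite{sirignano2018dgm}. First I would manufacture a minimizing sequence: choosing $\varepsilon=\varepsilon_n\to 0$ in Theorem~\ref{theorem2}, there exist networks $\phi^h\in F^n$ with $L(\theta)\le K\varepsilon_n\to 0$, so that the interior residual $G(\mathbf{x};\theta)$, the Dirichlet mismatch $\phi^h-\bar\phi$ on $\Gamma_D$, and the Neumann mismatch $q-\bar q$ on $\Gamma_N$ all tend to zero in their respective $L^2$ norms (against the measures $\ell_1,\ell_2,\ell_3$). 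I would then pass to the error $u^n:=\phi^h-\phi$; since $\phi$ solves Equation~\eqref{eq:gvquation} exactly, $u^n$ satisfies the linear elliptic boundary value problem $(k\,u^n_{,i})_{,i}=G(\mathbf{x};\theta)$ in $\Omega$, with trace $\phi^h-\bar\phi$ on $\Gamma_D$ and conormal data $q-\bar q$ on $\Gamma_N$, all three data being small in $L^2$. The goal is to convert this smallness of the \emph{residuals} into convergence of the \emph{functions} $u^n\to 0$.

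The analytic core is an a priori bound for this boundary value problem. Testing the weak form against $u^n$ and integrating by parts yields the energy identity $\int_\Omega k\,|\nabla u^n|^2 = -\int_\Omega G\,u^n + (\text{boundary contributions})$. Using the uniform ellipticity of $k$ --- a strictly positive lower bound $k\ge m_0>0$ together with the upper bound $\|k\|\le M_1$ from Theorem~\ref{theorem2} --- together with the Poincar\'e inequality and trace estimates to handle the boundary terms, I would seek a uniform bound on $\{u^n\}$, and hence on $\{\phi^h\}$, in an $L^2$-type topology. Extracting a weakly convergent subsequence and an almost-everywhere convergent sub-subsequence, I would then pass to the limit in the weak formulation: because the residual tends to zero, the limit is a weak solution of the homogeneous problem with vanishing boundary data, which by the assumed uniqueness of the classical solution forces $u^n\to 0$. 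Since every subsequence then has a further subsequence converging to $\phi$, the whole sequence $\phi^h\to\phi$.

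To reach the stated topology, I would obtain strong $L^p$ convergence for every $p<2$ exactly as in \cite{sirignano2018dgm}: the uniform $L^2$ bound renders the family $\{|u^n|^p\}$ uniformly integrable for each fixed $p<2$ (by de la Vall\'ee--Poussin, since $t\mapsto t^{p}$ grows strictly slower than $t^2$), and combined with the almost-everywhere convergence above this yields strong $L^p$ convergence through the Vitali convergence theorem. Uniform integrability can fail at the endpoint, which is precisely why the conclusion is confined to $p<2$ rather than $p=2$; the argument does not rely on the stronger $W^{1,2}$-compactness route (Rellich--Kondrachov), which would otherwise deliver $L^2$.

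I expect the a priori bound to be the main obstacle, for two reasons. First, the loss controls the Dirichlet mismatch only in $L^2(\Gamma_D)$, which is weaker than the $H^{1/2}$ trace norm natural to the $H^1$ energy estimate; this is exactly what blocks the clean $H^1$-compactness argument and confines the result to $L^p$, $p<2$. Moreover coercivity requires a strictly positive lower bound on the conductivity $k$, which is physically natural but is not among the hypotheses as stated and would have to be added. Second, the loss $L(\theta)$ is defined as an empirical average over finitely many collocation points, whereas the estimates above are phrased through the integral functionals against $\ell_1,\ell_2,\ell_3$; bridging this gap --- arguing that the discrete sums control the continuous $L^2$ norms, or simply adopting the collocation measures as reference measures --- is the delicate point one must make precise in order to chain Theorem~\ref{theorem2} rigorously into the compactness and uniqueness argument.
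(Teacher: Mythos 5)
The paper offers no proof of this theorem at all: it is stated as a corollary-by-citation, ``it can be concluded from Sirignano et al.\ \cite{sirignano2018dgm}'', resting on Theorem~\ref{theorem2} and the side condition that $\phi^h\in F^n\in L^2(\Omega)$. Your proposal is therefore not competing with an argument in the paper but reconstructing the one in the cited reference, and your skeleton --- a minimizing sequence with $L(\theta)\le K\varepsilon_n\to 0$, residuals small in $L^2$ against $\ell_1,\ell_2,\ell_3$, an a priori bound on $u^n=\phi^h-\phi$, extraction of a weakly and a.e.\ convergent subsequence, identification of the limit via uniqueness, and Vitali plus uniform integrability to obtain strong $L^p$ convergence precisely for $p<2$ --- is a faithful transplant of that reference's strategy from the parabolic to the elliptic setting. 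The two defects you flag at the end (no stated lower bound $k\ge m_0>0$, and the gap between the discrete collocation sums in $L(\theta)$ and the integrals against $\ell_1,\ell_2,\ell_3$) are real and are silently assumed by the paper as well.

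The one step that remains genuinely open in your write-up is the one you yourself call the main obstacle, and it is the heart of the matter. The energy identity $\int_\Omega k\,|\nabla u^n|^2=-\int_\Omega G\,u^n+\int_{\partial\Omega}k\,u^n\,\partial_n u^n$ cannot be closed with what the loss controls: on $\Gamma_D$ you control $u^n$ in $L^2$ but not its conormal derivative, and on $\Gamma_N$ you control the conormal derivative but not the trace, so the boundary term is not estimable and no uniform bound follows from this identity. Note that the reference does not derive this bound either --- the hypothesis that the approximating sequence is uniformly bounded in $L^2(\Omega)$ (which is what the paper's condition ``$\phi^h\in F^n\in L^2(\Omega)$'' is standing in for) is \emph{assumed} there, and the compactness and Vitali argument is run off that assumption; the restriction to $p<2$ comes precisely from having only this $L^2$ equi-integrability rather than $H^1$ compactness. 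So either you add that uniform $L^2$ bound as a hypothesis, as the paper and \cite{sirignano2018dgm} implicitly do, or you must replace the naive energy test by truncation-type test functions (Boccardo--Gallou\'et style) to wring a uniform $W^{1,q}$, $q<2$, bound out of the $L^2$-small residuals and boundary mismatches. As written, your proposal names the right convergence theorem but leaves the bound that feeds it unproven.
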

In summary, for feedforward neural networks $F^n \in L^p$ space ($p<2$), the approximated solution $\phi^h\in F^n$  will converge to the solution to the non-homogeneous PDE. 
\subsection{Collocation points generation}
Model training is an important process in machine learning and the quality of training datasets determines the reliability of the machine learning model to a large extent. The deep collocation method (DCM) utilizes physics-informed neural networks for solving PDEs with randomly generated training points in the physical domain. In order to test the influence of training points on the stability and accuracy, different sampling methods are compared. The Halton and Hammersley sequences generate random points by a constructing the radical inverse \cite{rafajlowicz2006halton}. They are both low discrepancy sequences. The method of Korobov Lattice creates samples from Korobov lattice point sets \cite{wang2004korobov}. Sobol Sequence is a quasi-random low-discrepancy sequence to generate sampling points \cite{dick2007construction}. Latin hypercube sampling (LHS) is a statistical method, where a near-random sample of parameter values is generated from a multidimensional distribution \cite{shields2016generalization}. Monte Carlo methods can create points by  repeated random sampling \cite{shapiro2003monte}. The distribution plots of different sampling points inside a cube is listed in  Appendix~\ref{appendix b} Table~\ref{tab:Tablesp}. 

\subsection{Material transfer learning}
In order to improve the generality and robustness of the DCM, transfer learning is exploited, which will make use of the information from an already trained model yielding to training with less data and a reduced training time. The basic idea can be found in Figure \ref{fig:trans}. For different material variations in  nonhomogeneous media, the 'knowledge' of one material model can be set up as the pretrained model resulting in a two-stage paradigm. The material transfer learning model is divided into two parts, i.e. pretraining, where the network is trained on a large dataset and longer iterations for one material variation type. The remaining part is the fine-tuning, where the pretrained model is trained on other material variations with less data and number of epochs. Consequently, the weights and biases and network configurations from a trained model are passed to other relevant models.
\begin{figure}[!htb]
	\centering
	\includegraphics[width=11.5cm]{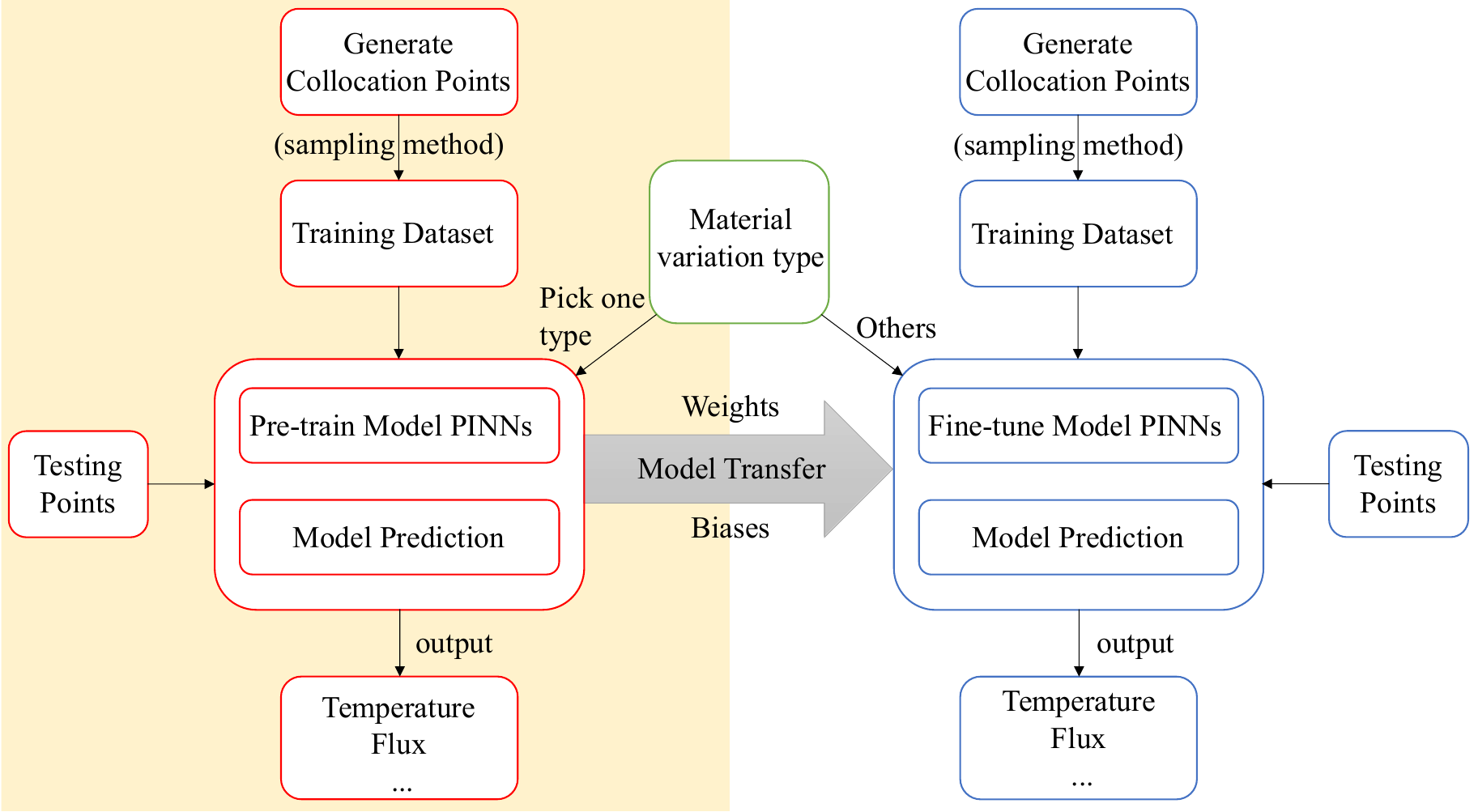}
	\caption{Transfer learning}
	\label{fig:trans}
\end{figure}

\section{Sensitivity analysis}
\label{section 4:sensal}
Algorithm-specific parameters, such as the neural architecture configurations, parameters related to optimizers and number of collocation points  significantly influence the model's accuracy. To quantify their influence on the accuracy, a global sensitivity analysis (GSA)  is performed. Classical GSA including regression methods, screening approaches such as Morris method \cite{iooss2015review}, the variance-based measures, such as Sobol's method \cite{sobol2001global}, and  the Fourier amplitude sensitivity test (FAST) \cite{cukier1973study}, or the extended FAST (EFAST) \cite{saltelli1999quantitative}. 

Variance-based methods are usually more computationally expensive than the derivative-based methods as well as the regression methods. If the model or the parameters in analysis is large, the use of variance-based method can be costly. The method of Morris is generally robust to correctly screen the most and least sensitive parameters for a highly parameterized model with 300 times fewer model evaluations than the Sobol' method \cite{herman2013method}. Therefore, the computational cost of a sensitivity analysis can potentially be reduced by first performing parameter screening using the Morris method to identify non-influential parameters, reducing the dimension of the parameter space to be studied in further analysis, then filter them again, but with the eFAST method. In this way, we can quantifying the effects of inputs more accurately with a relatively small amount of time.

\subsection{Method of Morris}
\label{subsubsection 4.1:morris}
The method of Morris \cite{morris1991factorial} is a screening technique used to rank the importance of parameters by averaging coarse difference relations termed elementary effects. Given a model with $n$ parameters,  $\bm{X}={X_1,X_2,...X_n}$ denoting a vector of parameter values, we can specify an objective function $y(x)=f(X_1,X_2,...X_n)$, change the variables $X_i$ by specific ranges and then calculate the distribution of elementary effects (EE) of each input factor with respect to the model outputs, i.e.
\begin{equation}\label{eq:morris}
EE_i=\frac{f(x_1,...,x_i+\Delta_i,...,x_n)-f(x)}{\Delta_i}
\end{equation}
where $f(x)$ represents the prior point in the trajectory. Using the single trajectory shown in Equation~\eqref{eq:morris}, the elementary effects of each parameter can be calculated with $p+1$ model evaluations. After sampling the trajectories, the resulting set of elementary effects are then averaged to obtain the total-order sensitivity of the $i$-th parameter $\mu_{i}^{*}$:
\begin{equation}\label{eq:mu}
\mu_{i}^{*}=\frac{1}{n}\sum_{j=1}^{n}\abs{EE_{i}^{j}}
\end{equation}
Similarly, the variance of the set of EEs can be calculated as
\begin{equation}\label{eq:sigma}
\sigma_{i}^{2}=\frac{1}{n-1}\sum_{j=1}^{n}( EE_{i}^{j}-\mu_i)^2
\end{equation}

The mean value $\mu^*$ quantifies the individual effect of the parameters on an output while the variance $\sigma^2$ indicates the influence of parameter interactions. We rank the parameters according to $\sqrt{\sigma^2+{\mu^*}^2}$. 

\subsection{eFAST method}
\label{subsubsection 4.2:eFAST}
The eFAST method \cite{saltelli1999quantitative} is based on Fourier transformations. The spectrum is obtained by each parameter and the output variance of model results due to interactions. Employing a suitable search function, the model $y(x)=f(X_1,X_2,...X_n)$ can be transformed by the Fourier transform into $y= f(s)$ 
\begin{equation}\label{eq:FT}
y=f(s)=\sum_{j=-\infty}^{+\infty}\big (A_j cos(js)+B_j sin(js)\big ),
\end{equation}
with
\begin{equation}\label{eq:FT1}
A_j=\frac{\pi}{2}\int_{\frac{\pi}{2}}^{-\frac{\pi}{2}}f(s)cos(js)\mathrm{d}s,
\end{equation}
\begin{equation}\label{eq:FT2}
B_j=\frac{\pi}{2}\int_{\frac{\pi}{2}}^{-\frac{\pi}{2}}f(s)sin(js)\mathrm{d}s.
\end{equation}

The spectral curve of the Fourier progression is defined as $\Lambda_j=A_j^2+B_j^2$. The variance of the model results due to the uncertainty in the parameter $X_i$ is given by
\begin{equation}\label{eq:D}
D_i=\sum_{p\in Z_0}\Lambda_p\omega_i,
\end{equation}
with the parametric frequency $\omega_1$, the spectrum of the Fourier transform $\Lambda$,  and the non-zero integers $Z_0$. The total variance can be obtained by cumulatively summing the spectra at all frequencies
\begin{equation}\label{eq:Dsum}
D=2\sum_{j=1}^{\infty}\Lambda_j.
\end{equation}

The fraction of the total output variance caused by each parameter apart from interactions with other parameters is measured by the first-order index
\begin{equation}\label{eq:S}
S_{i}^{FAST}=\frac{D_i}{D}.
\end{equation}

To find the total sensitivity of $X_i$, the frequency of $X_i$ is set to $\omega_i$, while a different frequency $\omega'$ is set for all other parameters. By calculating the frequency $\omega_i$ and its higher harmonics $p\omega_i$ spectra, the output variance $D_{-i}$ due to the influence of all parameters except $X_i$ and their interrelationships can be obtained. Thus, the total-order sensitivity indices can be obtained:
\begin{equation}\label{eq:Ssum}
S_{T_{i}}=\frac{D-D_{-i}}{D}.
\end{equation}

\section{Numerical examples}
In this section, several cases are considered testing the accuracy and efficiency of our DCM including the influence of suitable NN configurations, sampling methods and optimizers taking advantage of GSA. Also, different material variations using material transfer learning are studied. The accuracy is measured in the relative error between the predicted solution and the analytical solution:
\begin{equation}\label{relerr}
e=\frac{\left \| E_{pred} - E_{a} \right \|}{\left \| E_{a} \right \|}
\end{equation}
where $E_{a}$ is the analytical solution and  $E_{pred}$ is the predicted solution while $ \left \| \cdot \right \|$ refers to the $L_2$-norm. All simulations are done on a 64-bit macOS Catalina computer with Intel(R) Core(TM) i7-8850H CPU, 32GB memory. The parametric settings for training is summarised in Table \ref{tab:hyp}.
\begin{table}[!htb] 
	\captionsetup{width=0.85\columnwidth}
	\caption{Hyper-parameters settings in training}
	\centering 
		\resizebox{0.95\columnwidth}{!}{
		\begin{tabular}{c| c| c}
			\toprule
			\toprule 
			Model & Hyper-parameters & Values\\
			\midrule
			Adam optimizer&Learning rate & 0.001\\ 
			\midrule
			 L-BFGS-B optimizer&Maximum number of iterations to perform & 50000\\ 	
			 &Maximum number of function evaluations& 50000\\ 		
			 &Maximum number of variable metric corrections  & 50\\ 		 		
			 &Maximum number of line search steps (per iteration) & 50\\ 		 					
			\bottomrule 
		\end{tabular}
		}
	\label{tab:hyp} 
\end{table}

\subsection{Case 1: Sensitivity analysis}
First, we perform a SA to determine the key parameters of the deep collocation method.
\subsubsection{Parameters screening with Morris method}
The sensitivity indices computed by the Morris screening method with 30 trajectories and 4 grid levels are listed in Figures \ref{mu_star1} and \ref{covariance1}, showing the effect of the numbers of neurons, layers, iterations and collocation points on the loss values. Figure \ref{mu_star1} depicts the horizontal barplot of the GSA measure $\mu^*$. The highest $\mu^*$ value is found for the numbers of layers and neurons. The numbers of collocation points barely have an effect on the loss value. According to a classification scheme proposed by Garcia Sanchez et al. \cite{sanchez2014application}, the ratio $\sigma/\mu^*$ allows the characterisation of the model parameters in terms of (non-)linearity $(\sigma/\mu^*< 0.1)$, (non-) monotony $(0.1<\sigma/\mu^*< 0.5)$ or possible parameter interactions $(1<\sigma/\mu^*)$, see also Figure \ref{covariance1}. For our test models, all parameters are in the range $\sigma/\mu^*>1$ suggesting that most parameters exhibit either non-linear behaviour, interaction effects with each other or both. The plot of the mean value and standard deviation $(\sigma, \mu^*)$ in Figure \ref{covariance1} reveals that the most influential parameter with largest $\sqrt{\sigma^2+{\mu^*}^2}$ is the numbers of layers. The number of neurons and iterations are less important. The collocation points inside the physical domain and on the surface do not have a significant impact neither. Thus while tuning the parameters of the model, more attention should be paid on the numbers of layers, neurons and iterations.  

\begin{figure}[!htb]
	\captionsetup{width=0.9\columnwidth}
	\centering\includegraphics[height=6cm,width=8cm]{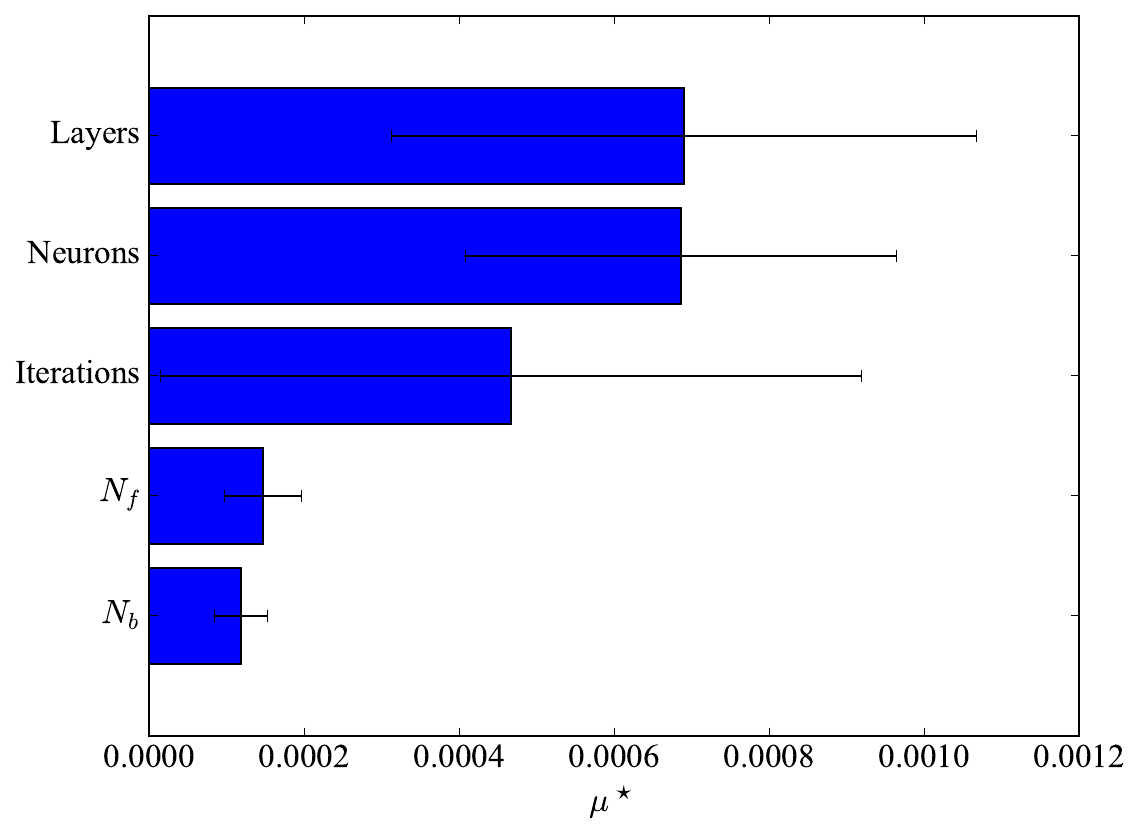}   
	\caption{Horizontal barplot of the mean absolute elementary effects $\mu^*$}
	\label{mu_star1}
\end{figure}
\begin{figure}[!htb]
	\captionsetup{width=0.9\columnwidth}
	\centering\includegraphics[height=6cm,width=8cm]{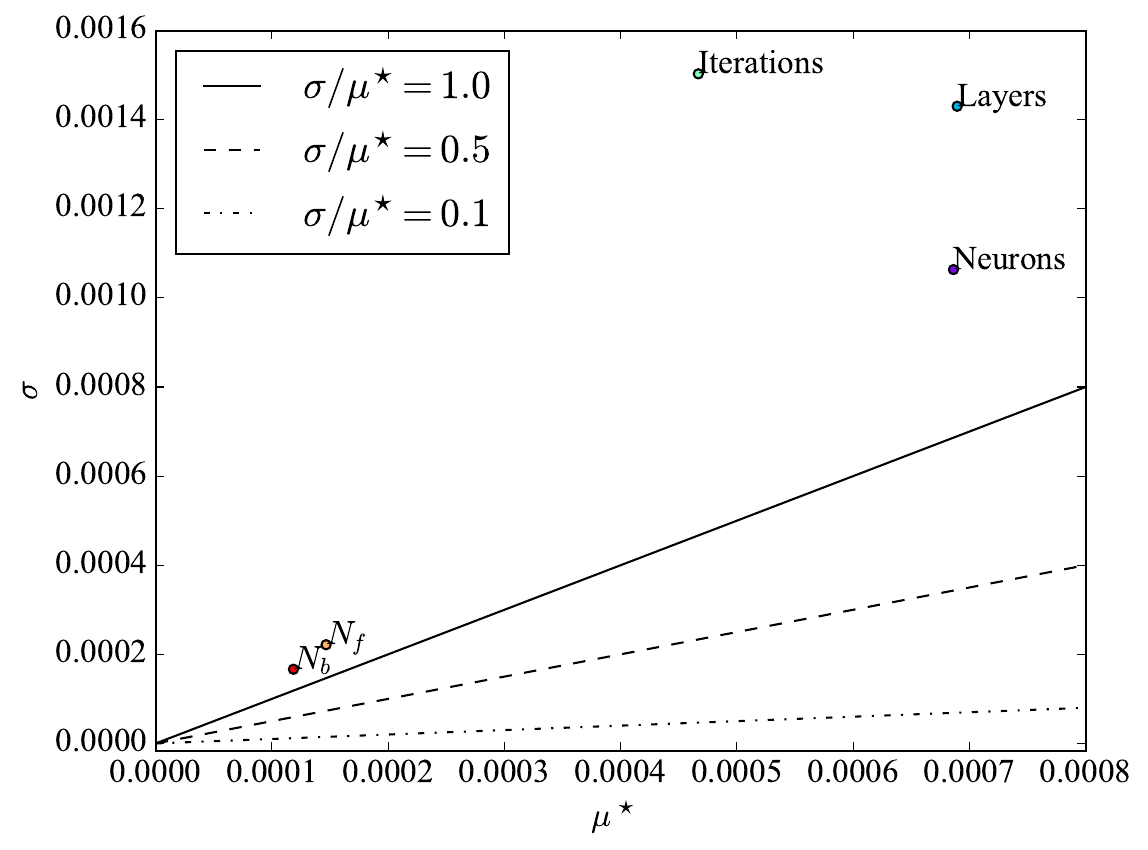}   
	\caption{ $\sigma$ versus $\mu^*$ for parameter screening with Morris method}
	\label{covariance1}
\end{figure}

\subsubsection{Variance-based sensitivity indices}
We now take advantage of the variance-based eFAST method  to compute  sensitivity indices. The independent first-order sensitivity indices $S_i$ and dependent total order sensitivity indices $S_{T_i}$ can be found in Figure \ref{sensitivityindices}. Due to the high computational costs, with 3000 simulations run and 1000 generated samples, no analysis concerning the variation of in $S_i$ and $S_{T_i}$ with different sample sizes were performed.

The associated scatter plots are shown in Figure \ref{scatterply}. The more randomly the loss values are distributed, the less sensitive the parameters is. According to Figure \ref{scatterply}, the number of layers is the most influential parameter, followed by the number of neurons and number of iterations.
\begin{figure}[!htb]
	\captionsetup{width=0.9\columnwidth}
	\centering\includegraphics[height=5cm,width=11cm]{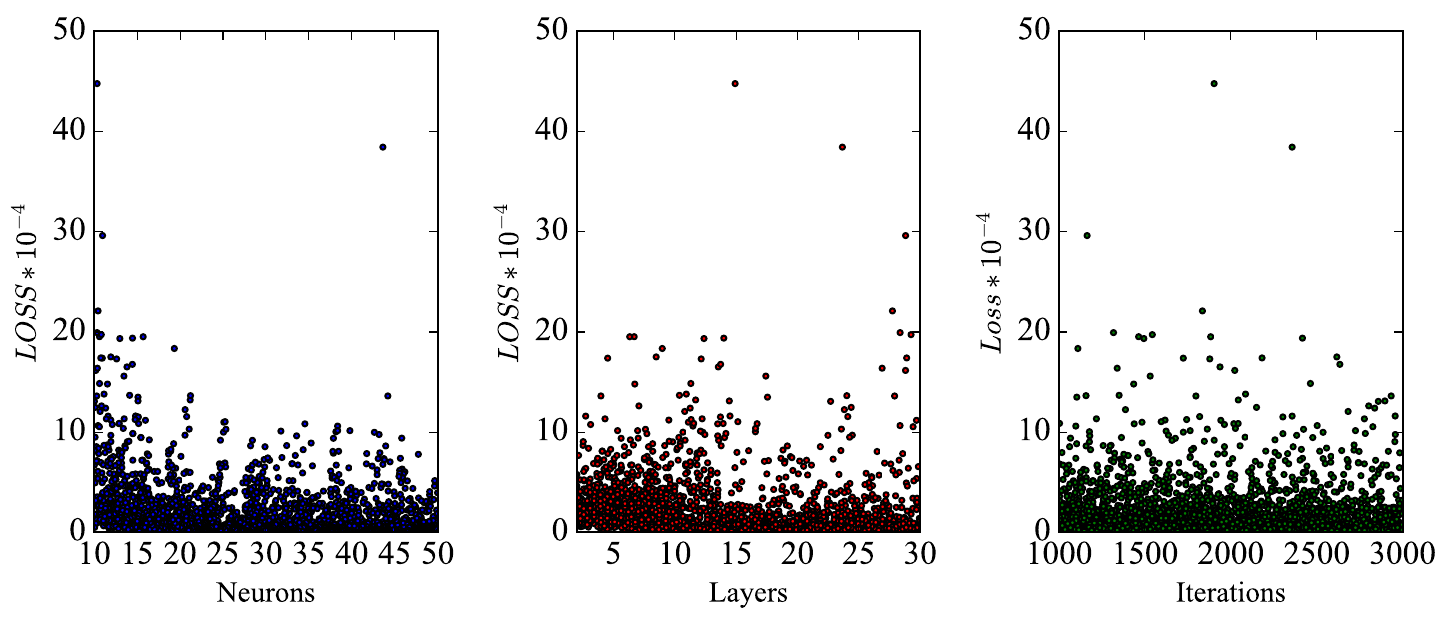}   
	\caption{Scatter plot of loss value against parameter values with 3,000 runs}
	\label{scatterply}
\end{figure}

The first order sensitivity index $S_i$ represents the parameter importance. The number of layers affects the model most, followed by the numbers of neurons and the least influential parameter is the number of iterations, which agrees well with the results of Morris Method. However, the first-order indices are all beyond 0.01, which manifest that those algorithm-specific parameters individually do not have too much influence on the loss value of the model. The total effects index $S_{T_i}$ greater than 0.8 can be regarded very important parameters. Again, the number of layers and neurons are greater than 0.8. For the number of iterations it is between 0.5 to 0.8. However, there is a big difference between the value of total and first-order sensitivity indices, which quantifies the effects of the parameter’s interactions. It can be concluded that the output variance can be attributed to their interactions with other parameters rather than their nonlinear effects and all interactions between these three parameters are noteworthy. 
\begin{figure}[!htb]
	\captionsetup{width=0.9\columnwidth}
	\centering\includegraphics[height=6cm,width=8cm]{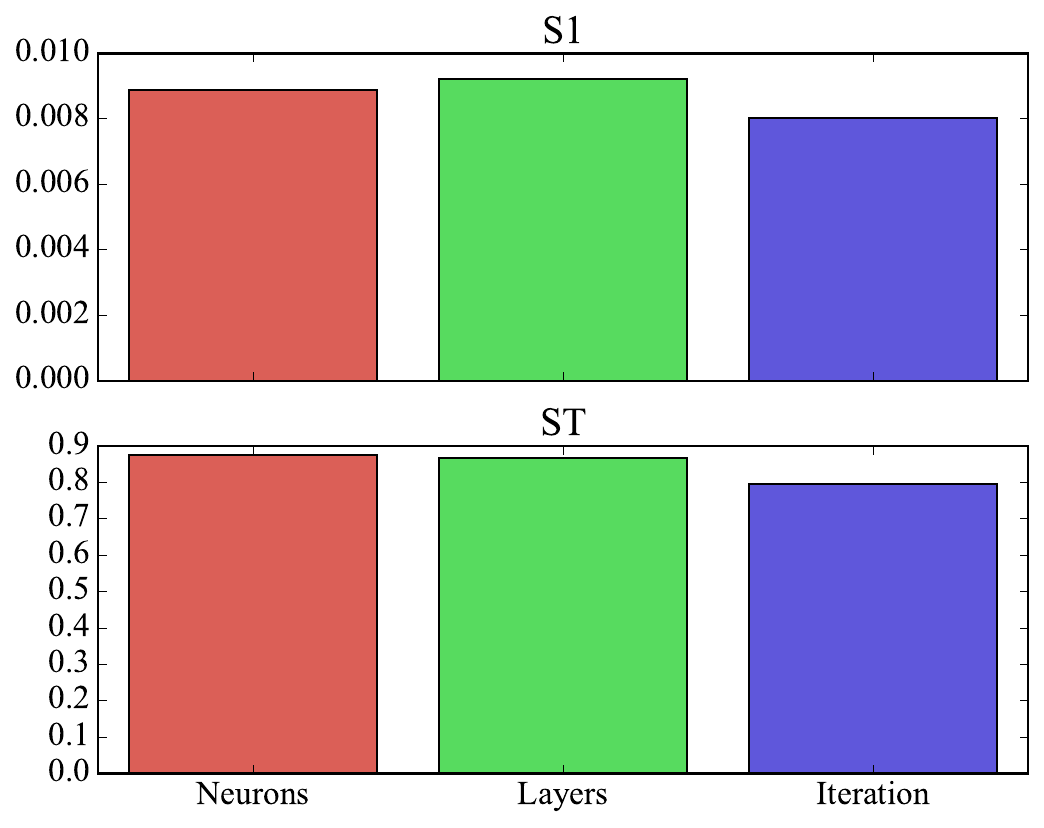}   
	\caption{Results for first-order and total indices with eFAST method}
	\label{sensitivityindices}
\end{figure}
 
\subsection{Case 2. Cube with material gradation along the z-axis}
\label{Case1}
Let us consider a unit cube (L=1) with prescribed constant temperature on two sides. The top surface of the cube at z=1 is maintained at a temperature of T = 100 while the bottom temperature at z=0 is zero. The remaining four faces are insulated (zero normal flux). Three different classes of variations shown in Table \ref{tab:Table5} are considered \cite{sutradhar2004simple}. The profiles of the thermal conductivity k(z) of the three material variation cases are illustrated in Figure~\ref{thermal conductivity}, and the boundary conditions of the unit cube can be found in Figure~\ref{conditions of cube}. For each nonhomogeneous thermal conductivity, the analytical solution is summarized in Table~\ref{tab:Table5}.
\begin{figure}[!htb]
\captionsetup{width=0.9\columnwidth}
\centering\includegraphics[height=6cm,width=8.0cm]{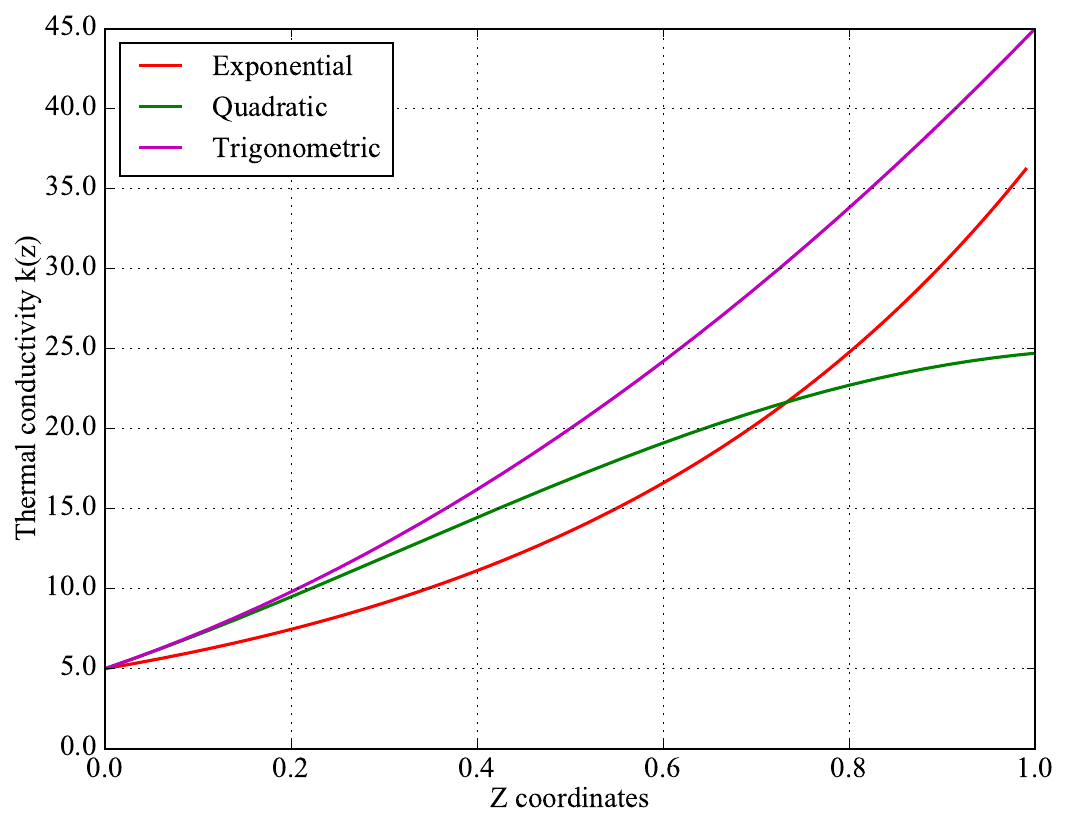}   
 \caption{Thermal conductivity variation along the z direction}
\label{thermal conductivity}
\end{figure}
\begin{figure}[!htb]
	\captionsetup{width=0.9\columnwidth}
	\centering\includegraphics[height=6cm,width=8.0cm]{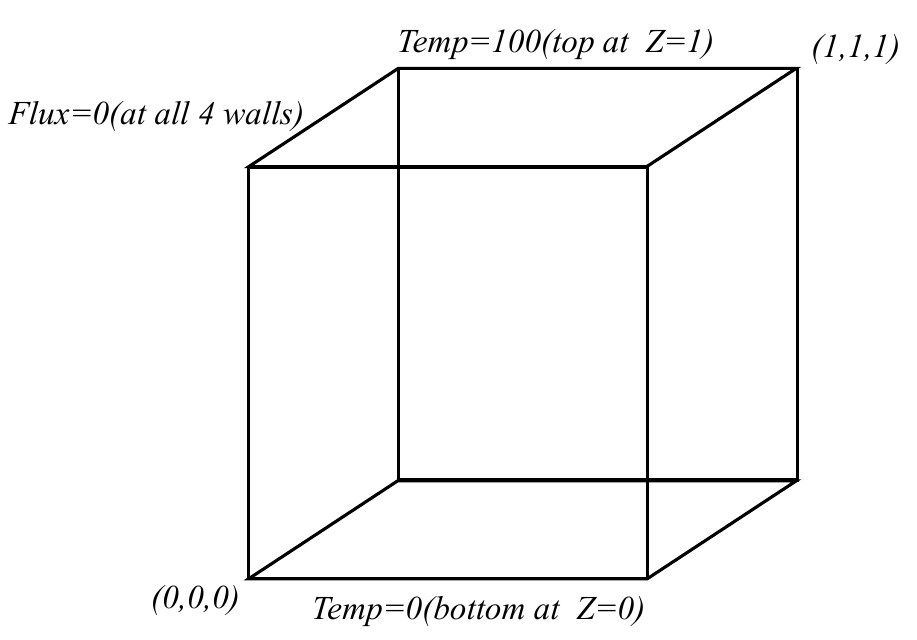}   
	\caption{The boundary conditons of the unit cube}
	\label{conditions of cube}
\end{figure}

\begin{table}[!h] 
\captionsetup{width=0.9\columnwidth}
\caption{Analytical solutions for various forms of thermal conductivity $k(\bm{x})$} 
\vspace{-0.1cm}
\centering 
\resizebox{0.65\columnwidth}{!}{%
\begin{tabular}{c| c} 
\toprule 
\toprule 
$k(\bm{x})$& Analytical solution for potential function\\ 
\midrule 
$5(1+2z)^{2}$ & $\phi=\frac{300z}{1+2z}$\\ 
\midrule
$5e^{2z}$&$\phi=100\frac{1-e^{-2 z}}{1-e^{-2 L}}$\\ 
\midrule
$5(cosz+2sin z)^{2}$&$\phi=100\frac{(cot(L)+2)*sinz}{(cos z+2sinz)}$\\ 
\bottomrule 
\end{tabular}
}
\label{tab:Table5} 
\end{table}

\subsubsection{Deep collocation method configurations}
First, different NN configurations are investigated. Figure~\ref{Comparison of activation functions} shows the relative error for various activation functions and layers. The $arctan$ function yields the most stable and accurate results. Both $arctan$ and $Tanh$ function outperform the other activation functions. Figure~\ref{Comparison of sampling} depicts the influence of different sampling methods on the relative error. Random sampling method obtained most stable and accurate potentials with increasing layers.  $Korobov$, $Hammersley$,  $Latin Hypercube$ sampling methods also provide reasonable results. 

Next, we focus on various material variations, see Figure~\ref{Comparison of material}. All material variations can be predicted accurately, but the most accurate results are obtained for the exponential conductivity. The results from Figure~\ref{Comparison of activation functions} to \ref{Comparison of material} suggest that 2 hidden layers are a good choice for the underlying problem. 

\begin{figure}[!htb]
\captionsetup{width=0.9\columnwidth}
\centering\includegraphics[height=6cm,width=8.0cm]{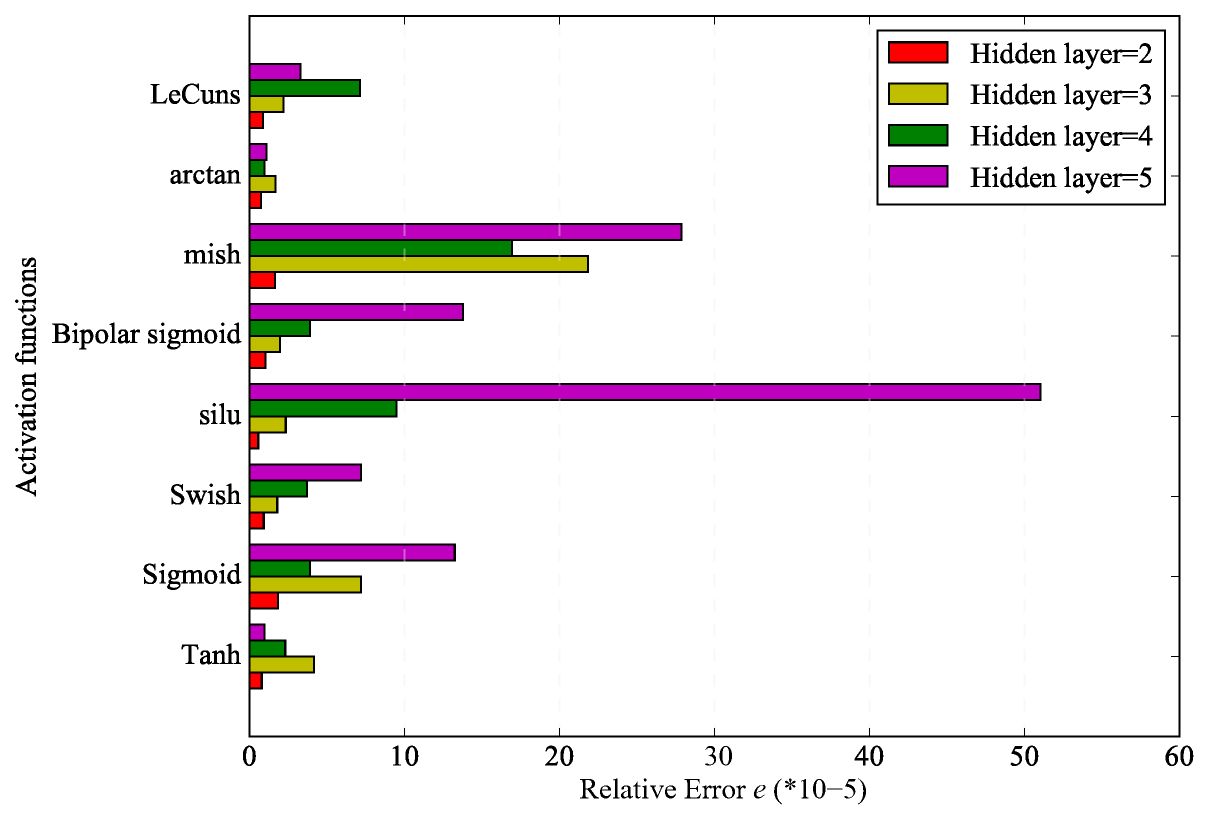}   
 \caption{Comparison of results predicted by DCM with different activation functions}
\label{Comparison of activation functions}
\end{figure}

\begin{figure}[!htb]
\captionsetup{width=0.9\columnwidth}
\centering\includegraphics[height=6cm,width=8.0cm]{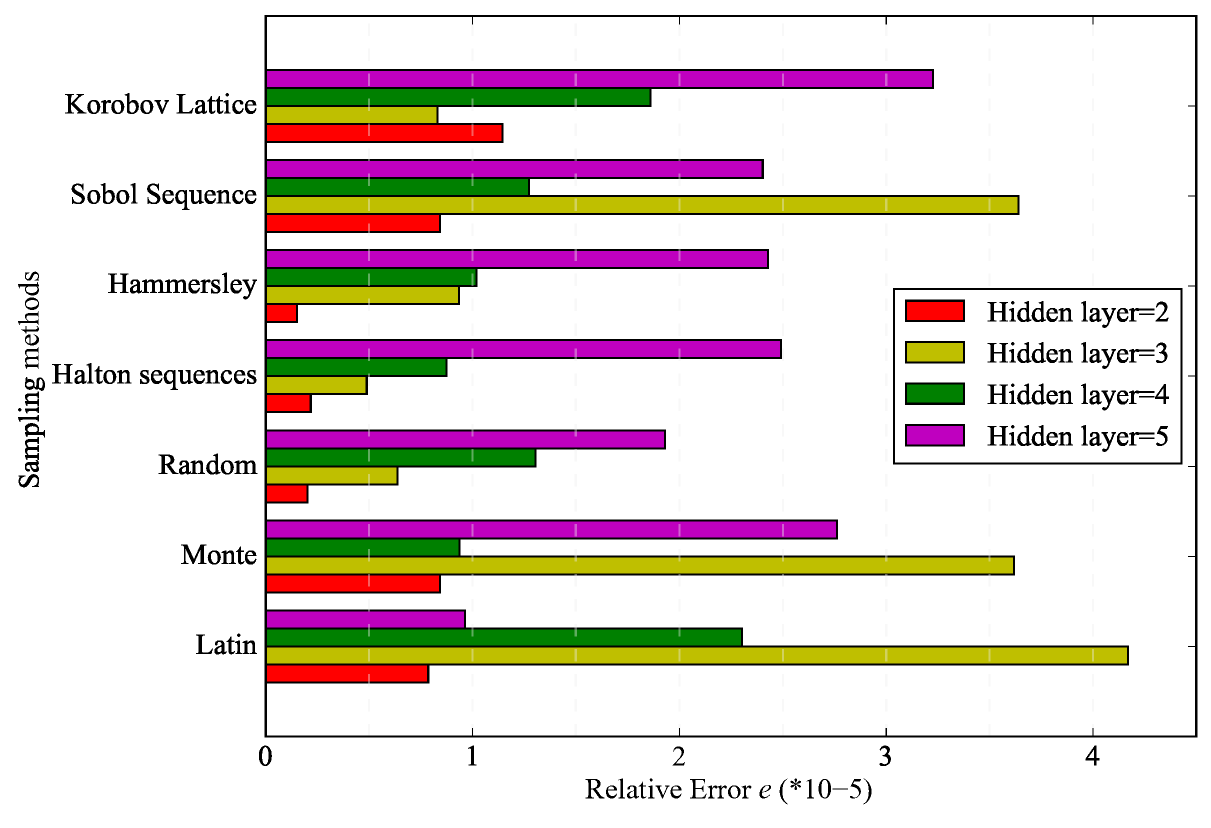}   
 \caption{Comparison of results predicted by DCM with different sampling methods}
\label{Comparison of sampling}
\end{figure}

\begin{figure}[!htb]
\captionsetup{width=0.9\columnwidth}
\centering\includegraphics[height=6cm,width=8.0cm]{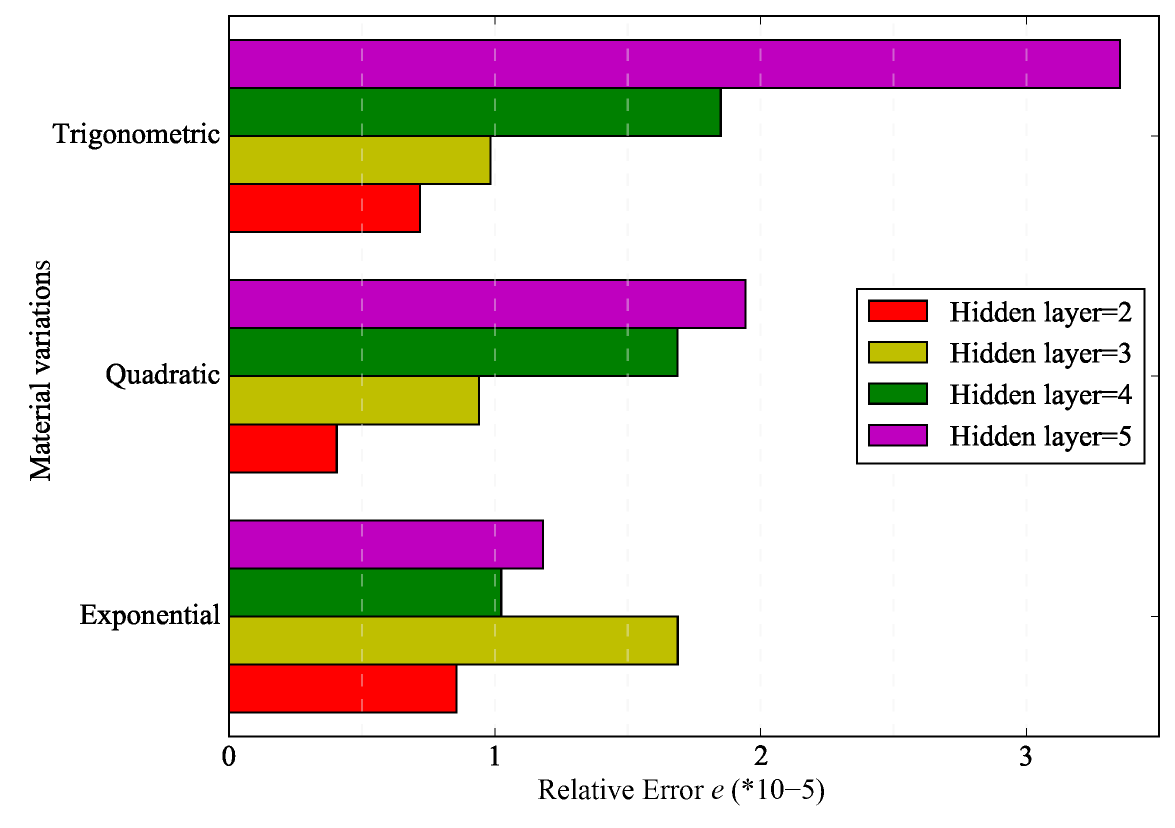}   
 \caption{Comparison of predicted results for different material variations}
\label{Comparison of material}
\end{figure}

\begin{figure}[!htb]
	\captionsetup{width=0.9\columnwidth}
	\centering\includegraphics[height=6cm,width=8.0cm]{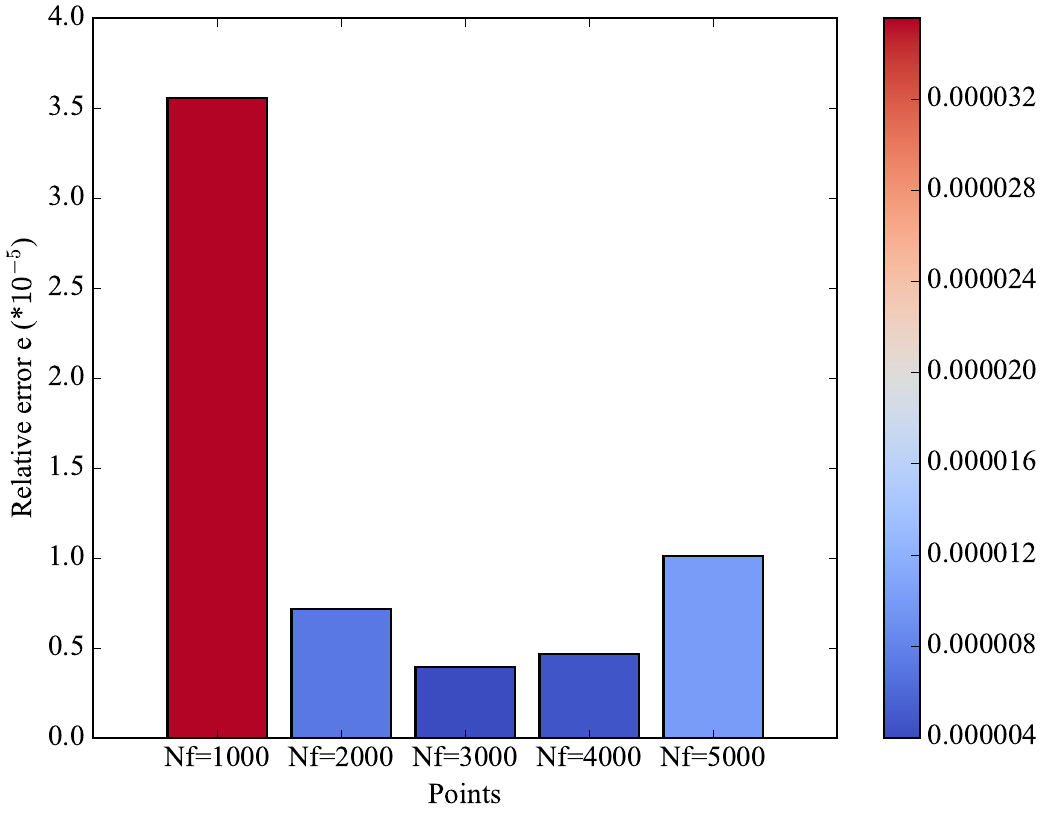}   
	\caption{Comparison of predicted results for different collocation points in cube}
	\label{N_f}
\end{figure}

\begin{figure}[!htb]
	\captionsetup{width=0.9\columnwidth}
	\centering\includegraphics[height=6cm,width=8.0cm]{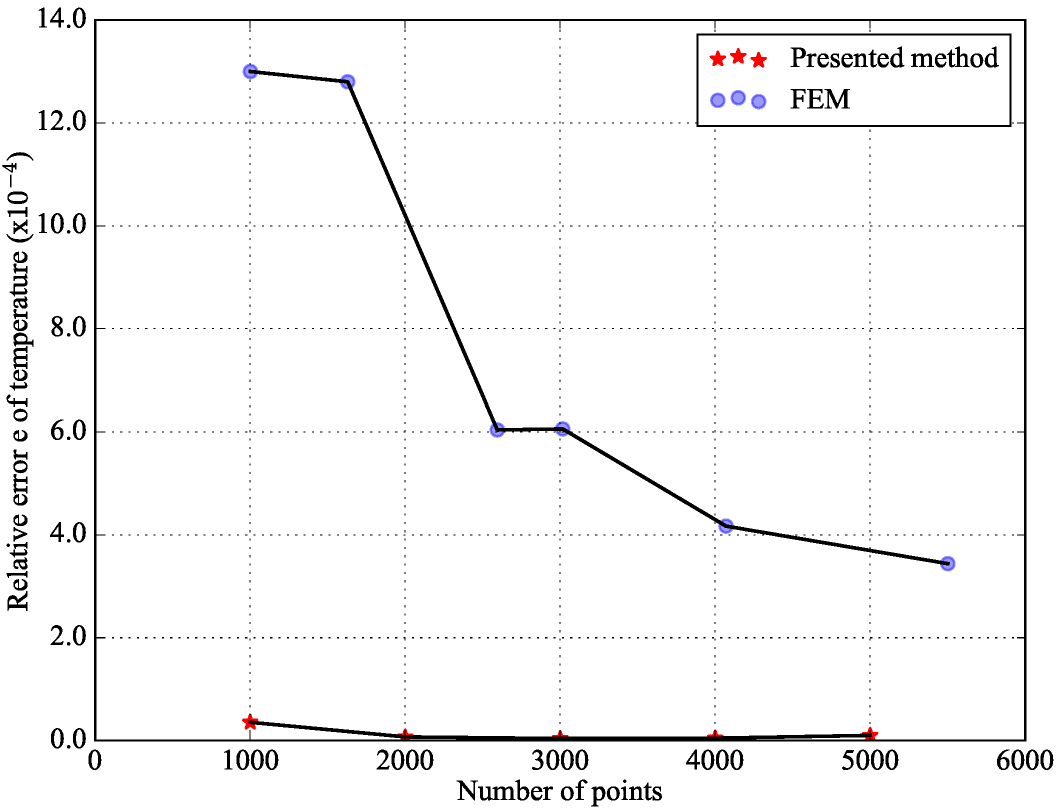}   
	\caption{Comparison of predicted results with FEM versus numbers of points}
	\label{N_f_FEM}
\end{figure}

\begin{figure}[!htb]
	\captionsetup{width=0.9\columnwidth}
	\centering\includegraphics[height=6cm,width=8.0cm]{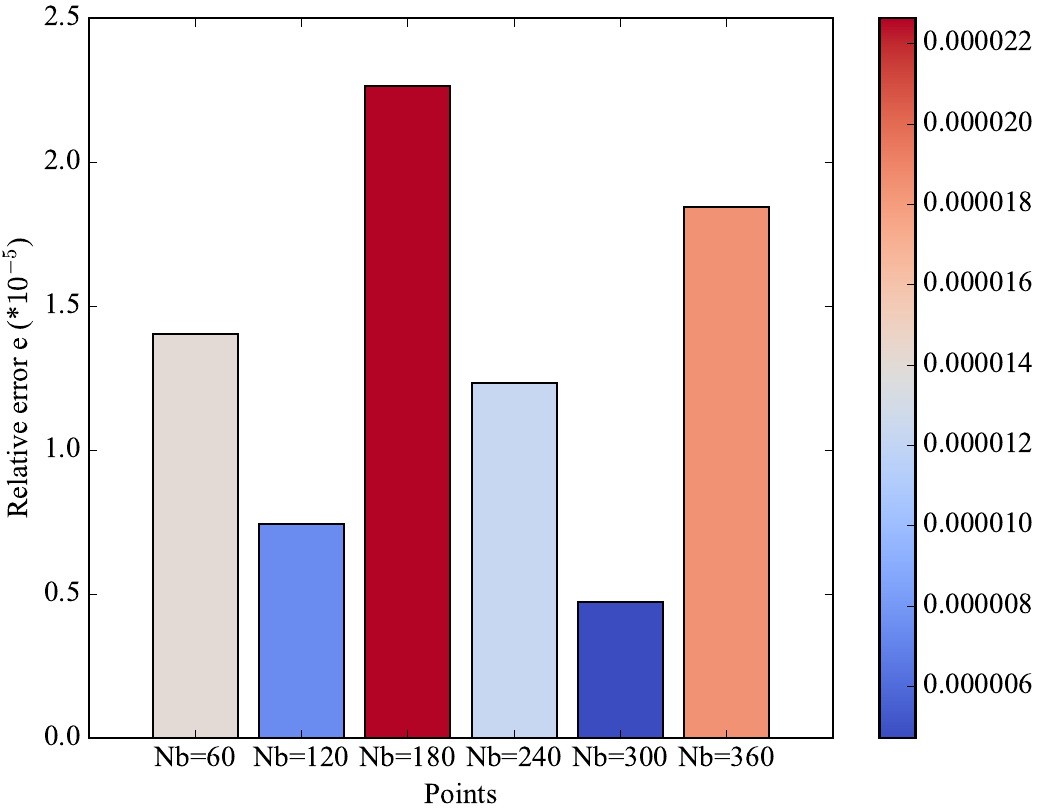}   
	\caption{Comparison of predicted results for different collocation points on each boundary}
	\label{N_b}
\end{figure}
We study now different numbers of collocation points (inside the cube and on its surface). The relative error in the temperature is depicted in Figure~\ref{N_f} and \ref{N_b}. We also compared our results to results from FEM in Figure \ref{N_f_FEM}. The temperature profile along the z-axis for three material variations are plotted  with the corresponding analytical solutions in Figure~\ref{Temperature profile}.   
\begin{figure}[!htb]
	\captionsetup{width=0.9\columnwidth}
	\centering\includegraphics[height=6cm,width=8.0cm]{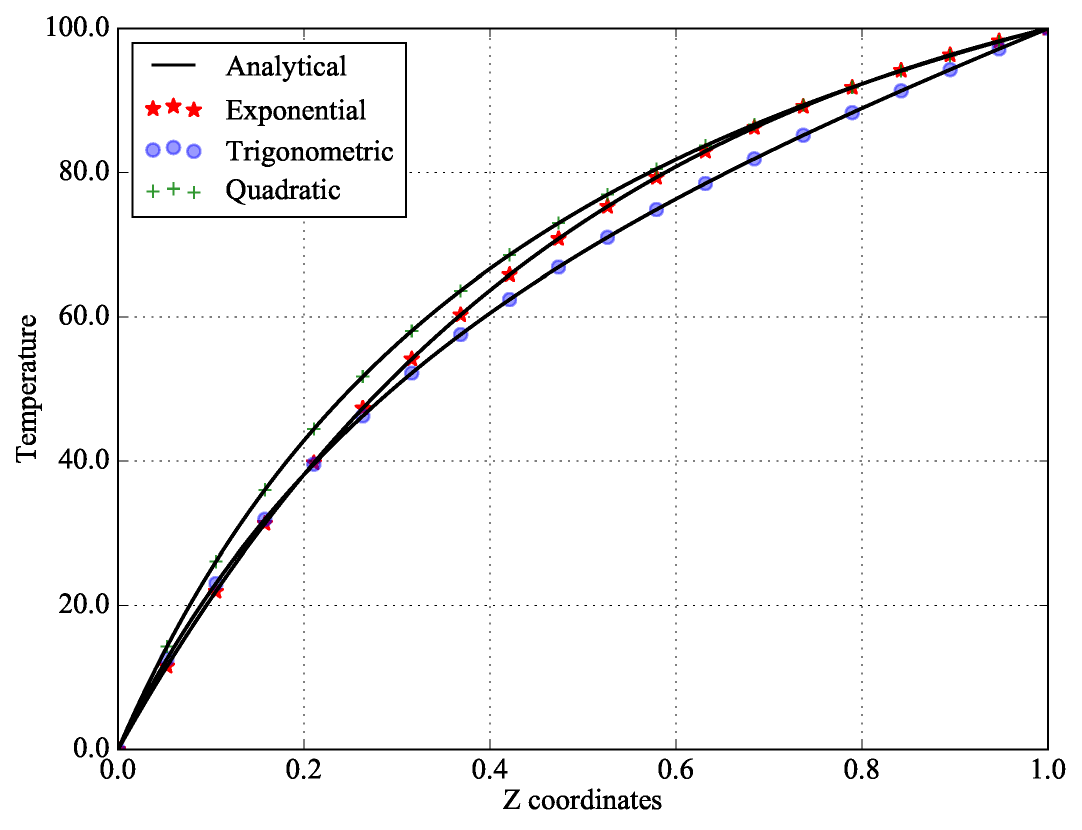}   
	\caption{Temperature profile in the Z direction for different material variations}
	\label{Temperature profile}
\end{figure}

The predicted temperature and flux distributions for three material variations inside the cube is shown in Figures~\ref{flux_1}-\ref{flux_3}. The heat distribution varies with graded variation in the z coordinates which is consistent with the material property of the FGMs.   

\begin{figure}[!htb]
	\captionsetup{width=0.9\columnwidth}
	\centering
	\begin{subfigure}[b]{5.0cm}
		\centering\includegraphics[height=5cm,width=5.0cm]{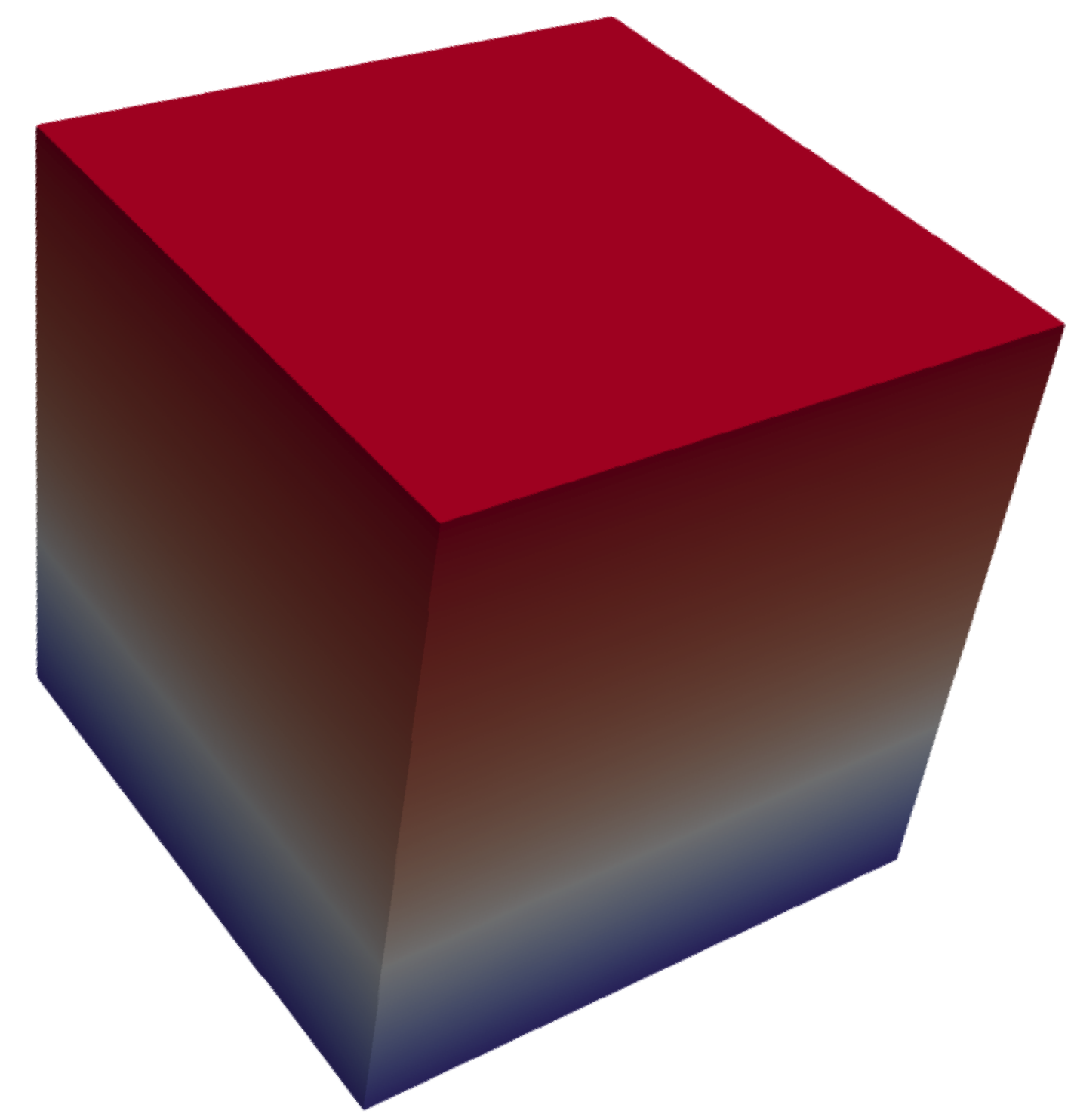}   
		\caption{}\label{k1}
	\end{subfigure}%
	\hspace{0.5cm}
    \begin{subfigure}[b]{5.0cm}
	\centering\includegraphics[height=5cm,width=5.0cm]{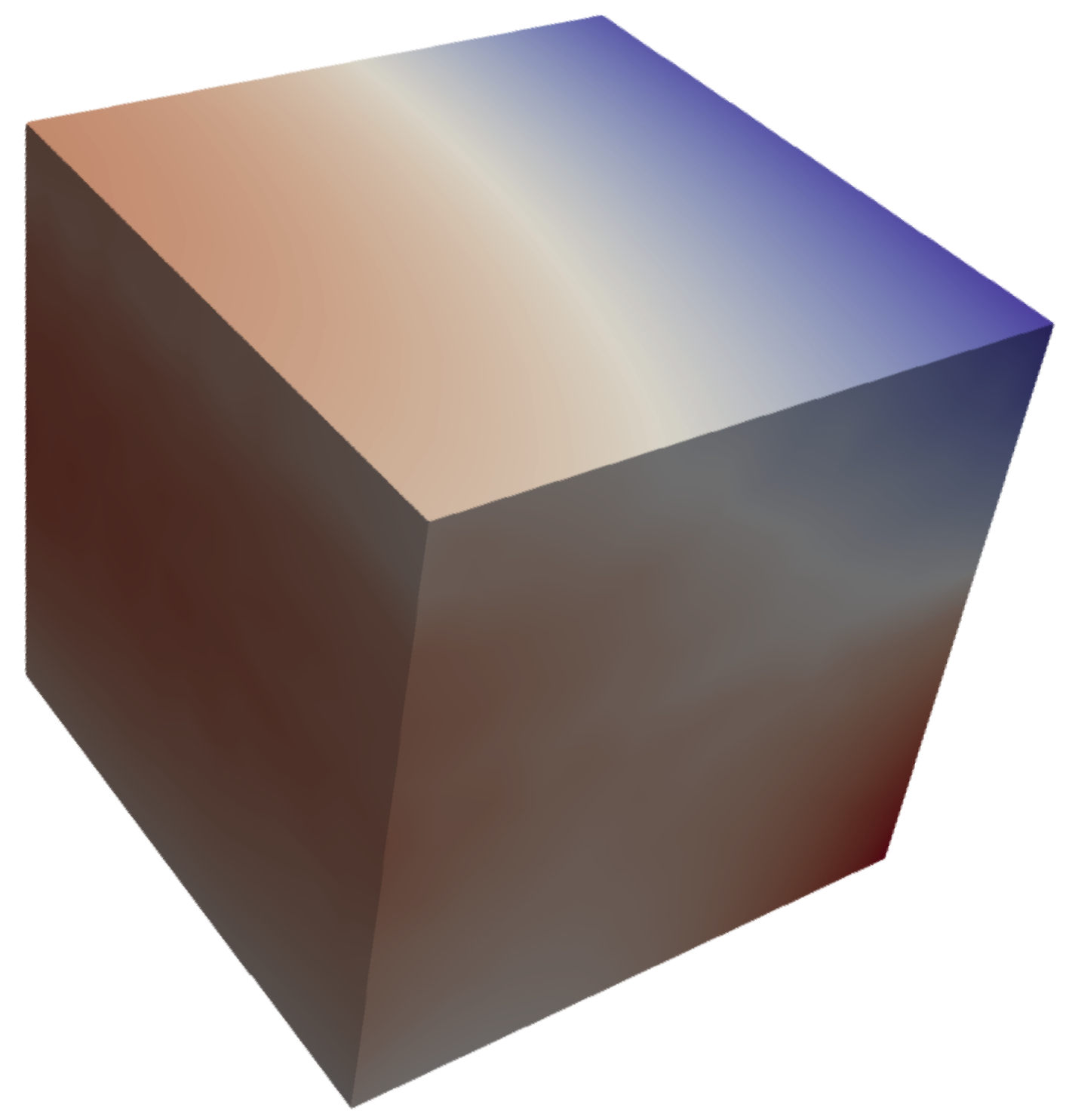}   
	\caption{}\label{k2}
    \end{subfigure}%
	\caption{$\left(a\right)$ Predicted temperature and $\left(b\right)$ Predicted flux for exponential material variation for the functionally graded unit cubic }
	\label{flux_1}
\end{figure}

\begin{figure}[!htb]
	\captionsetup{width=0.9\columnwidth}
	\centering
	\begin{subfigure}[b]{5.0cm}
		\centering\includegraphics[height=5cm,width=5.0cm]{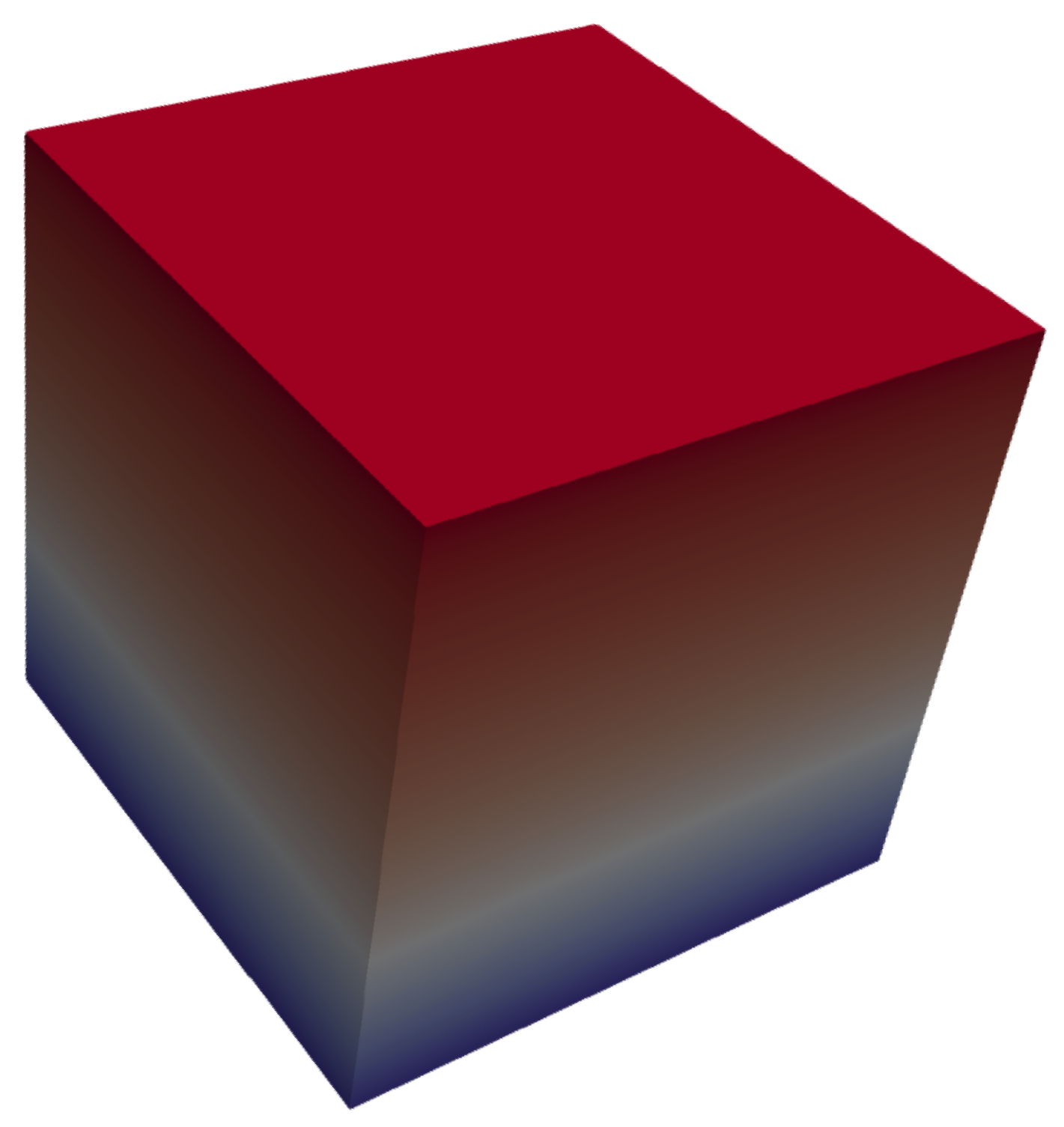}   
		\caption{}\label{k3}
	\end{subfigure}%
	\hspace{0.5cm}
	\begin{subfigure}[b]{5.0cm}
		\centering\includegraphics[height=5cm,width=5.0cm]{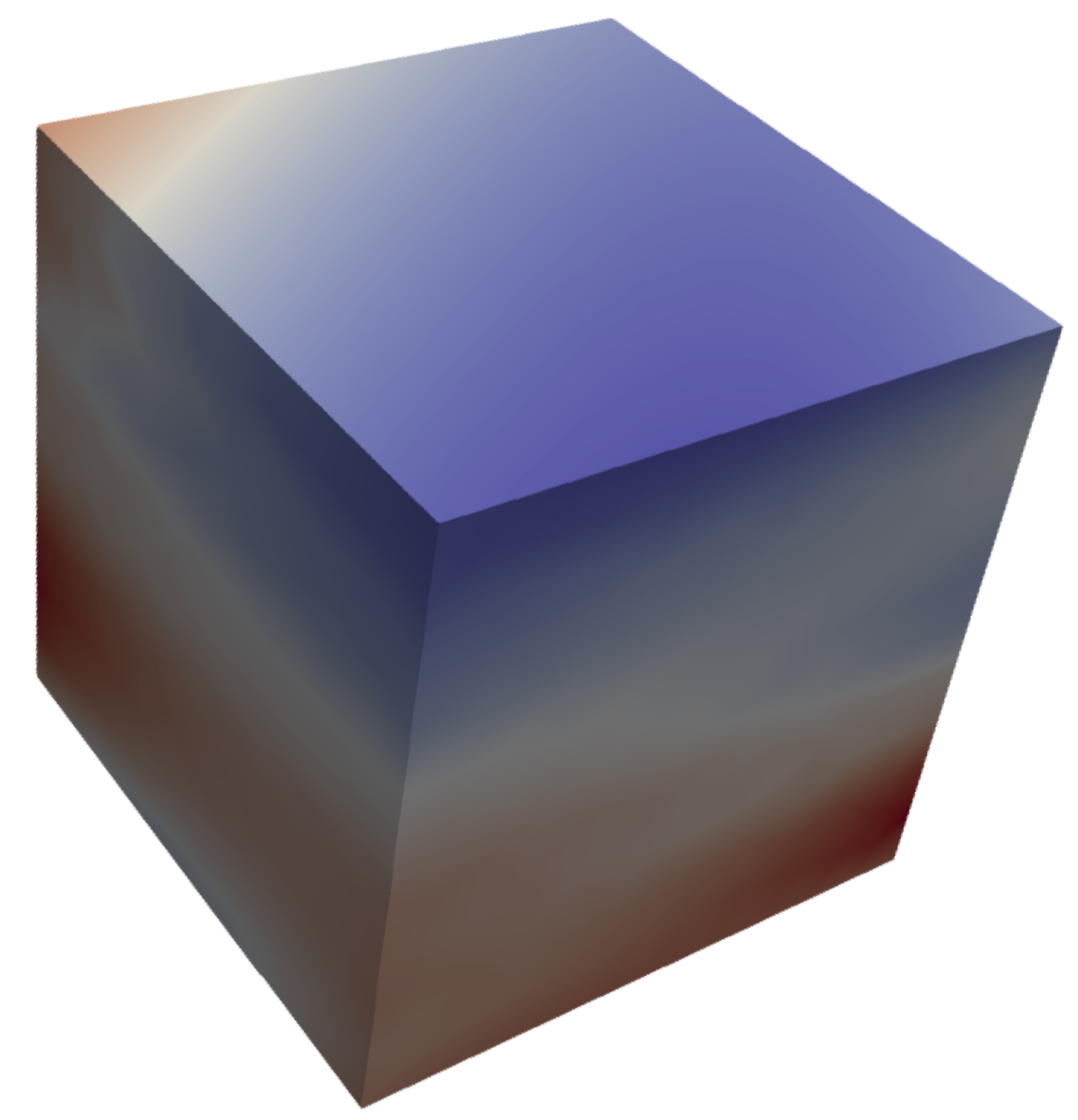}   
		\caption{}\label{k4}
	\end{subfigure}%
	\caption{$\left(a\right)$ Predicted temperature and $\left(b\right)$ Predicted flux for trigonometric material variation for the functionally graded unit cubic}
	\label{flux_2}
\end{figure}

\begin{figure}[!htb]
	\captionsetup{width=0.9\columnwidth}
	\centering
	\begin{subfigure}[b]{5.0cm}
		\centering\includegraphics[height=5cm,width=5.0cm]{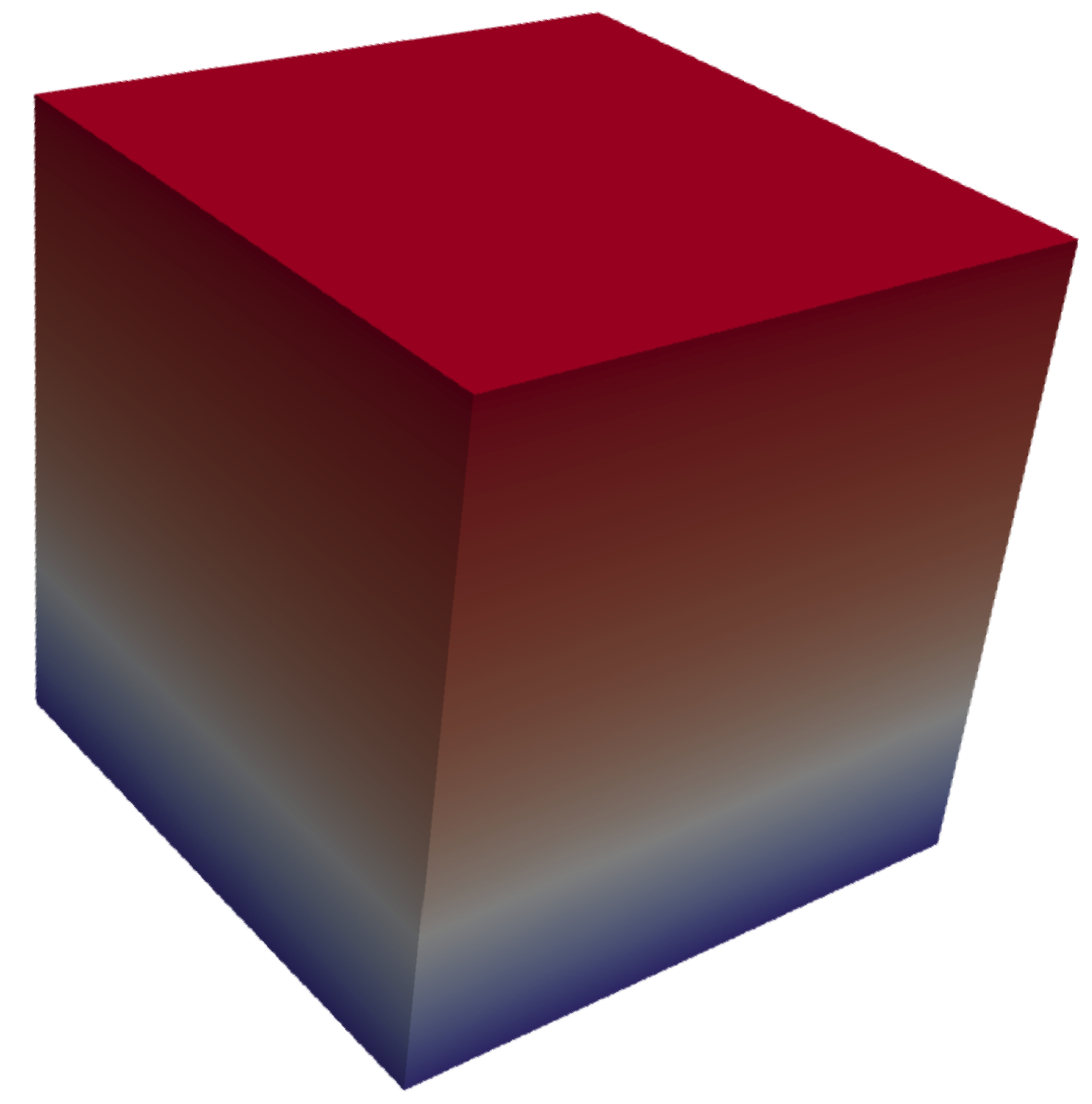}   
		\caption{}\label{k5}
	\end{subfigure}%
	\hspace{0.5cm}
	\begin{subfigure}[b]{5.0cm}
		\centering\includegraphics[height=5cm,width=5.0cm]{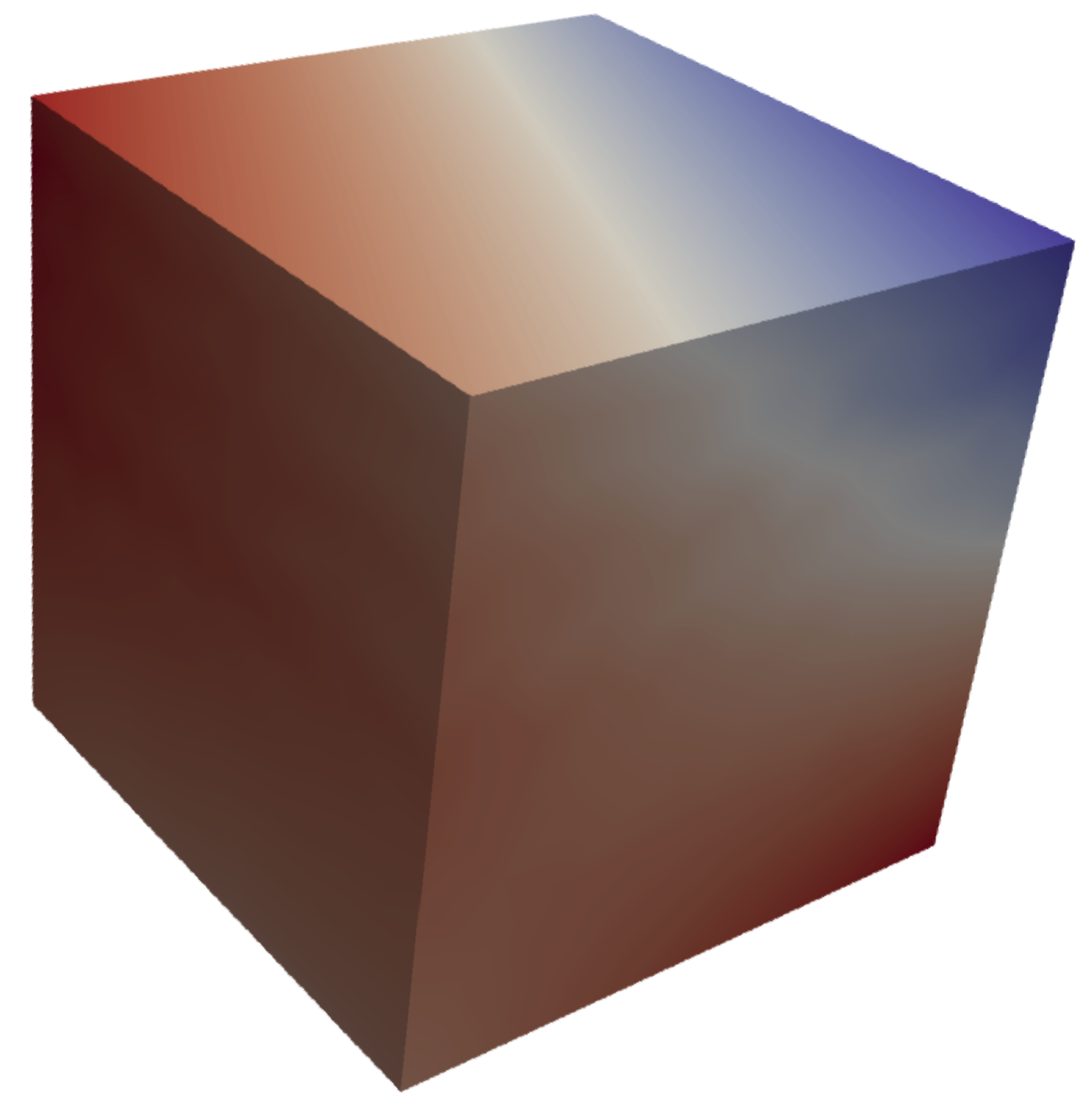}   
		\caption{}\label{k6}
	\end{subfigure}%
	\caption{$\left(a\right)$ Predicted temperature and $\left(b\right)$ Predicted flux for quadratic material variation for the functionally graded unit cubic}
	\label{flux_3}
\end{figure}
Let us now test the influence of the optimizer on the results. First-order methods minimize the function using its gradient while second-order methods minimize the loss function using the second order derivatives (Hessian information). In this application a combination of these two optimizers is employed. The used first-order method is the Adam algorithm while L-BFGS is the tested second-order method. The convergence history for different optimizers is illustrated in Figure~\ref{cost-op}. Although the first order optimizer can be faster, they require more iterations. The L-BFGS optimizer needs less iterations, but there is the risk in being trapped in local minima. Using the combined optimizers, the loss reaches a significant smaller value with acceptable number of iterations and simultaneously ensures the solution being close to the global minima. 
\begin{figure}[!htb]
	\captionsetup{width=0.9\columnwidth}
	\centering\includegraphics[height=6cm,width=8.0cm]{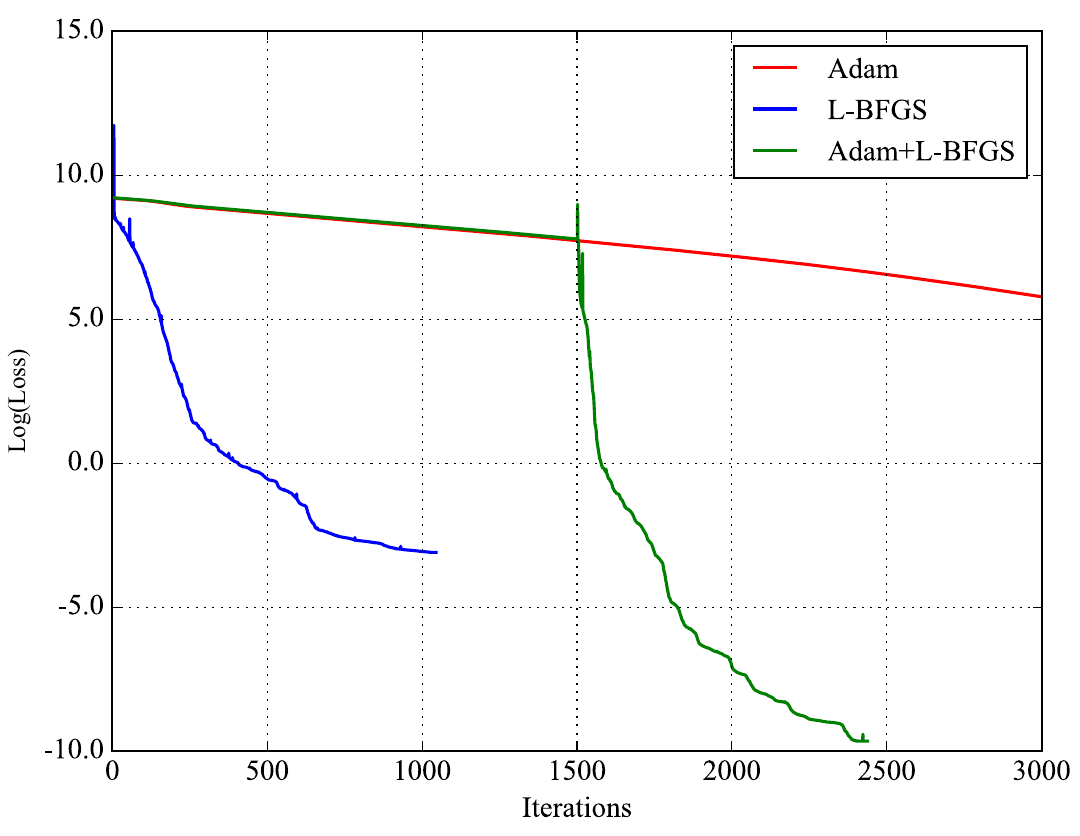}   
	\caption{Comparison of different optimizers for deep collocation method}
	\label{cost-op}
\end{figure}
The results for different number of layers are illustrated in Figure~\ref{cost-layer}.
\begin{figure}[!htb]
	\captionsetup{width=0.9\columnwidth}
	\centering\includegraphics[height=6cm,width=8.0cm]{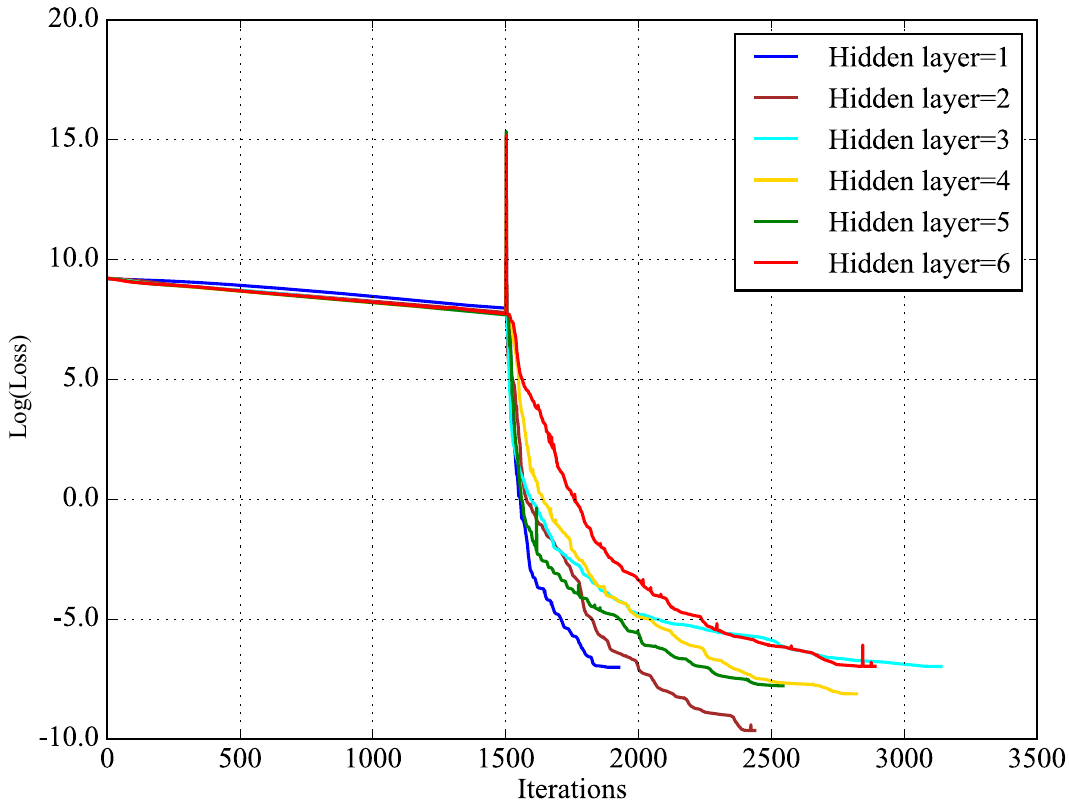}   
	\caption{Convergence graph for DCM with increasing hidden layers}
	\label{cost-layer}
\end{figure}

\subsubsection{Material transfer learning}
The  loss vs number of iterations is shown in Figure \ref{cost-tl}. After funetuning, the loss decreases to a smaller value in less iterations for all three material variations. The numerical results are summarized in Table \ref{tab:Table-tl} demonstrating that the computational effort can be drastically reduced with transfer learning.
\begin{figure}[!htb]
	\captionsetup{width=0.9\columnwidth}
	\centering\includegraphics[height=8cm,width=8.0cm]{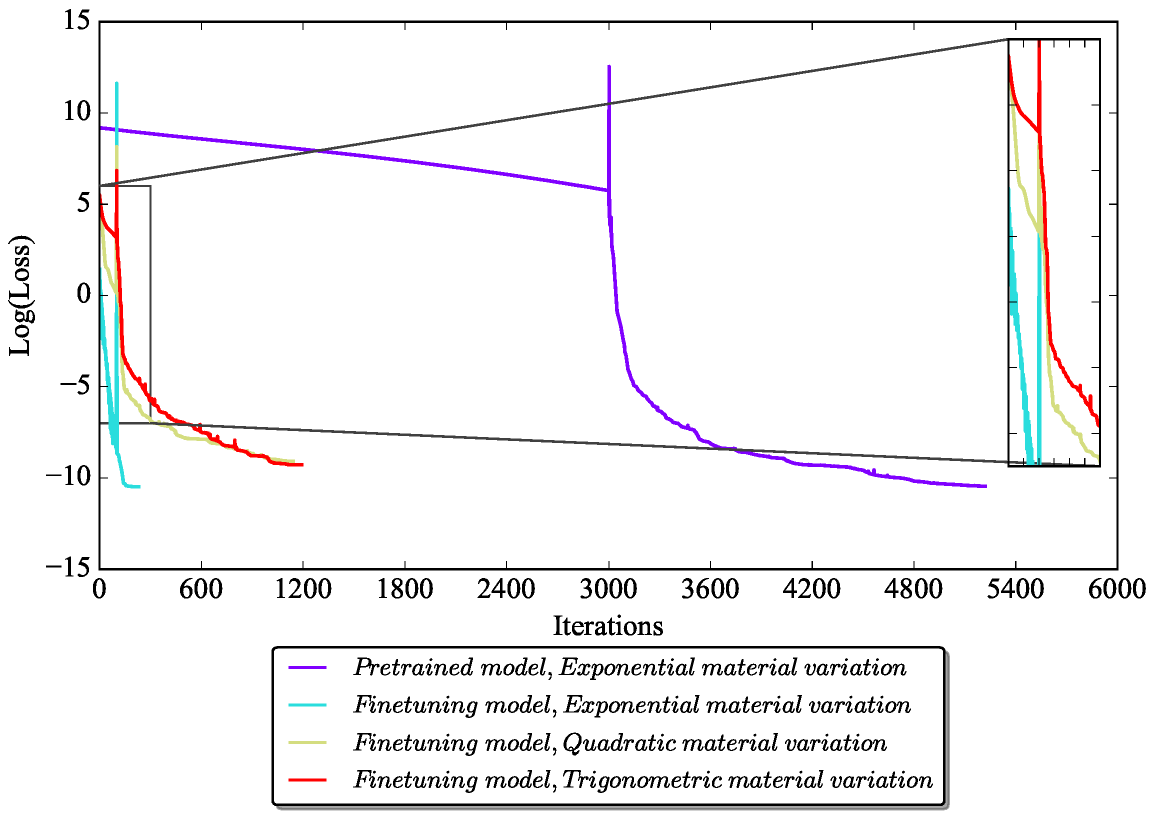} 
	\vspace*{-0.1cm}
	\caption{Loss vs iteration using transfer learning with different material variations}
	\label{cost-tl}
\end{figure}

\begin{table}[!h] 
 \captionsetup{width=0.9\columnwidth}
 \caption{Relative error and training time for material variation with transfer learning} 
 \vspace{-0.1cm}
 \centering 
 \resizebox{0.9\columnwidth}{!}{
  \begin{tabular}{l|c|c|c|c}
   \toprule 
   \toprule 
      \multirow{2}*{\diagbox{Results}{Material variation}}&Exponential &Exponential&Quadratic&Trigonometric\\
        \cline{2-5}
   ~ &without TL&with TL&with TL&with TL\\
   \midrule
   Relative error&4.2846e-06&3.9015e-06&3.7033e-06&3.6562e-06\\ 
   \midrule
   Training time&45.5s&9.1s&22.4s&18.3s\\
   \bottomrule
  \end{tabular}
 }
 \label{tab:Table-tl} 
\end{table}

Figure \ref{cost-pr-tl} shows the loss vs iteration using transfer learning for different material parameters while Tables \ref{tab:Table-tl-par} and \ref{tab:Table-tl-par-comp} list the accuracy and CPU time with and without transfer learning.
\begin{figure}[!htb]
	\captionsetup{width=0.9\columnwidth}
	\centering\includegraphics[height=6cm,width=11.0cm]{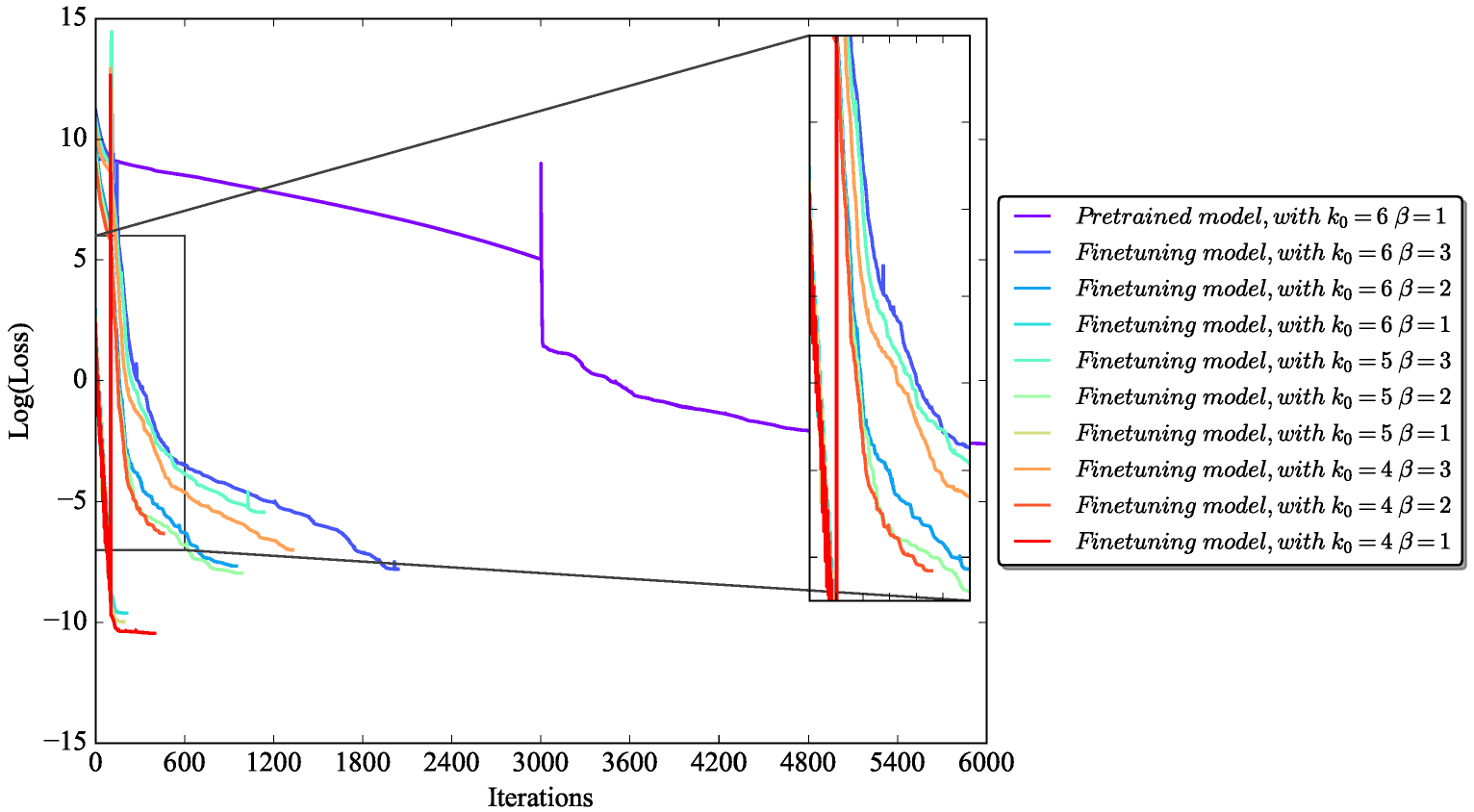} 
	\caption{Loss vs iteration using transfer learning with different material parameters}
	\label{cost-pr-tl}
\end{figure}

\begin{table}[!h]   
	\captionsetup{width=0.9\columnwidth}
	\caption{Relative error of temperature with varying material parameters}
	\vspace{-0.1cm}
	\centering
	\resizebox{0.9\columnwidth}{!}{
		\begin{tabular}{l|c|c|c|c|c|c} 
			\toprule 
			\toprule 
			\multirow{2}*{\diagbox{$k_0$}{$\beta$}}&\multicolumn{2}{c|}{3}&\multicolumn{2}{c|}{2}&\multicolumn{2}{c}{1}\\
			\cline{2-7}
			~ &without TL&with TL&without TL&with TL&without TL&with TL\\  
			\midrule 
			6&1.9416e-05&8.6204e-06&1.7244e-05&8.2445e-06&3.8324e-06&3.1974e-06\\
			\midrule
			5&1.8445e-05&1.9075e-05&6.8346e-06&9.9521e-06&1.9358e-06&2.7892e-06\\
			\midrule
			4&1.4026e-05&8.0790e-06&6.8956e-06&1.6358e-05&4.8229e-06&2.3579e-06\\
			\bottomrule
		\end{tabular}
	}
	\label{tab:Table-tl-par}
\end{table}

\begin{table}[!h]   
	\captionsetup{width=0.9\columnwidth}
	\caption{Computation time with varying material parameters (s or sec)}
	\vspace{-0.1cm}
	\centering
	\resizebox{0.9\columnwidth}{!}{
		\begin{tabular}{l|c|c|c|c|c|c} 
			\toprule 
			\toprule 
			\multirow{2}*{\diagbox{$k_0$}{$\beta$}}&\multicolumn{2}{c|}{3}&\multicolumn{2}{c|}{2}&\multicolumn{2}{c}{1}\\
			\cline{2-7}
			~ &without TL&with TL&without TL&with TL&without TL&with TL\\  
			\midrule 
			6&6.9715e+01&1.8325e+01&4.6163e+01&1.0890e+01&4.7989e+01&5.4962e+00\\
			\midrule
			5&5.6954e+01&1.2305e+01&4.0428e+01&1.0409e+01&4.3479e+01&4.9966e+00\\
			\midrule
			4&6.4699e+01&1.3876e+01&5.3908e+01&6.9735e+00&3.8583e+01&6.1923e+00\\
			\bottomrule
		\end{tabular}
	}
	\label{tab:Table-tl-par-comp}
\end{table}

\subsection{Case 3: Cube with a 3D material gradation}

Now, we consider a cube with the following three-dimensional thermal conductivity variation:
\begin{equation}
k(x,y,z)={ (5+0.2x+0.4y+0.6z+0.1xy+0.2yz+0.3zx+0.7xyz) }^{ 2 }
\label{conductivity variation}
\end{equation} 

The iso-surfaces of the 3D variation of the thermal conductivity is illustrated in Figure~\ref{thermal conductivity 3D}. The analytical solution for this variation is
\begin{equation}
\phi (x,y,z)={ \frac { xyz }{ (5+0.2x+0.4y+0.6z+0.1xy+0.2yz+0.3zx+0.7xyz) }  }
\label{function of variation}
\end{equation} 
\begin{figure}[!htb]
	\captionsetup{width=0.9\columnwidth}
	\centering\includegraphics[height=6cm,width=6cm]{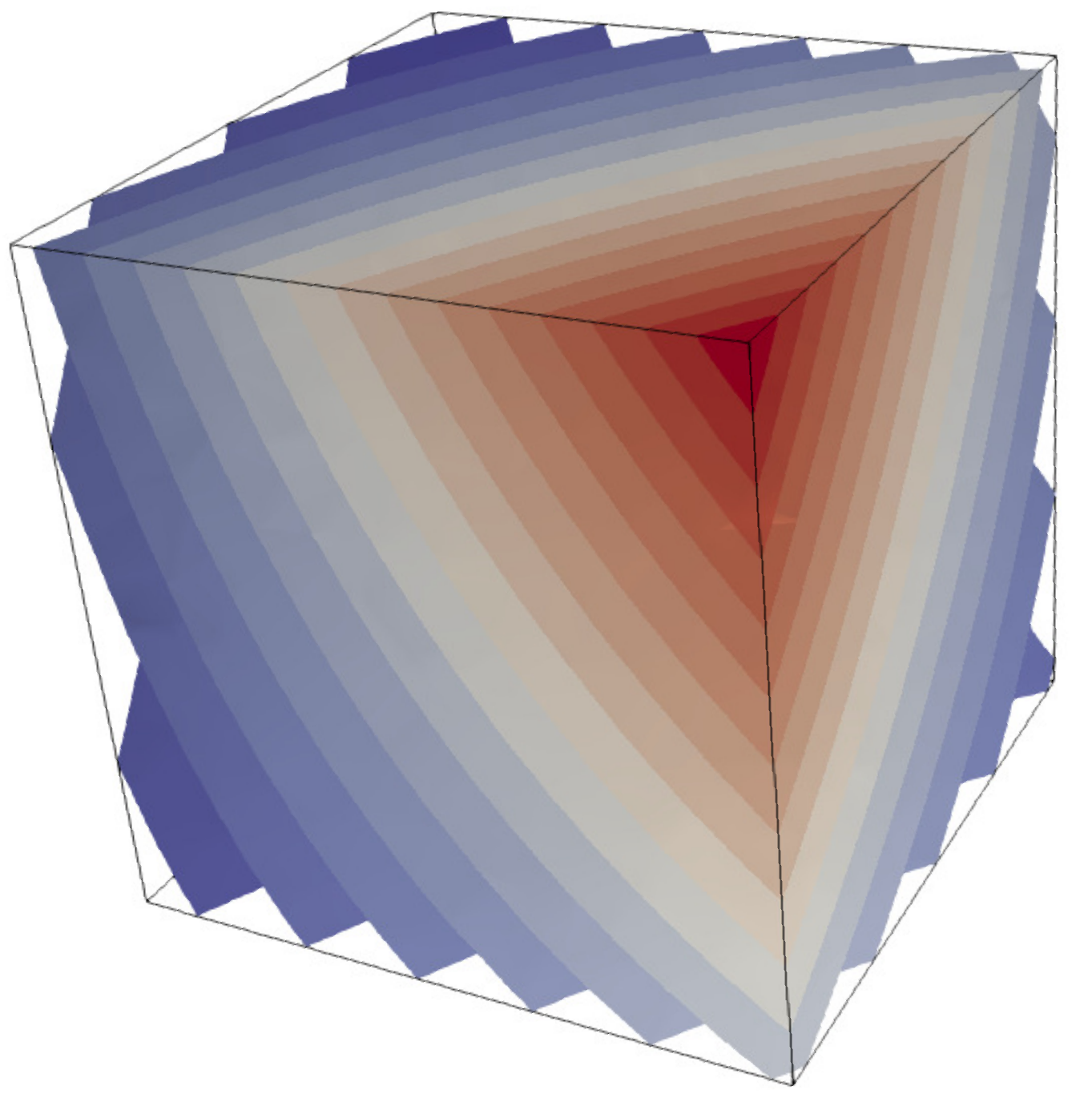}   
	\caption{Representation of iso-surfaces for the three-dimensional variation of thermal conductivity k(x,y,z)}
	\label{thermal conductivity 3D}
\end{figure}
The boundary conditions at the six faces of the cube are listed in Table~\ref{tab:Table6}.
\begin{table}[!h] 
	\captionsetup{width=0.9\columnwidth}
	\caption{The boundary conditons of cube with a 3D material gradation} 
	\vspace{-0.1cm}
	\centering 
	\resizebox{0.5\columnwidth}{!}{%
		\begin{tabular}{ c|c} 
			\toprule 
			\toprule 
			\multicolumn{2}{c}{\textbf{Boundary condition}} \\ 
			\midrule
			Dirichlet & Neumann\\ 
			\midrule 
			$\phi (0,y,z)=0$ &  $q(1,y,z)=-0.2zy(25+2y+3z+zy)$\\ 
			\midrule
			$\phi (x,0,z)=0$ &  $q(x,1,z)=-0.1xz(50+2x+6z+3xz)$\\ 
			\midrule
			$\phi (x,y,0)=0$ &  $q(x,y,1)=-0.1xy(50+2x+4y+xy)$\\ 
			\bottomrule 
		\end{tabular}
    }              
	\label{tab:Table6} 
\end{table}

The predicted temperature and flux distributions  are shown in Figure ~\ref{temperature and flux of 3D}. The predicted relative error of the temperature across the cube is 5.215360e-03, see also Figure~\ref{temperature line 3D}. 
\begin{figure}[!htb]
	\captionsetup{width=0.9\columnwidth}
	\centering
	\begin{subfigure}[b]{5.0cm}
		\centering\includegraphics[height=5cm,width=5.0cm]{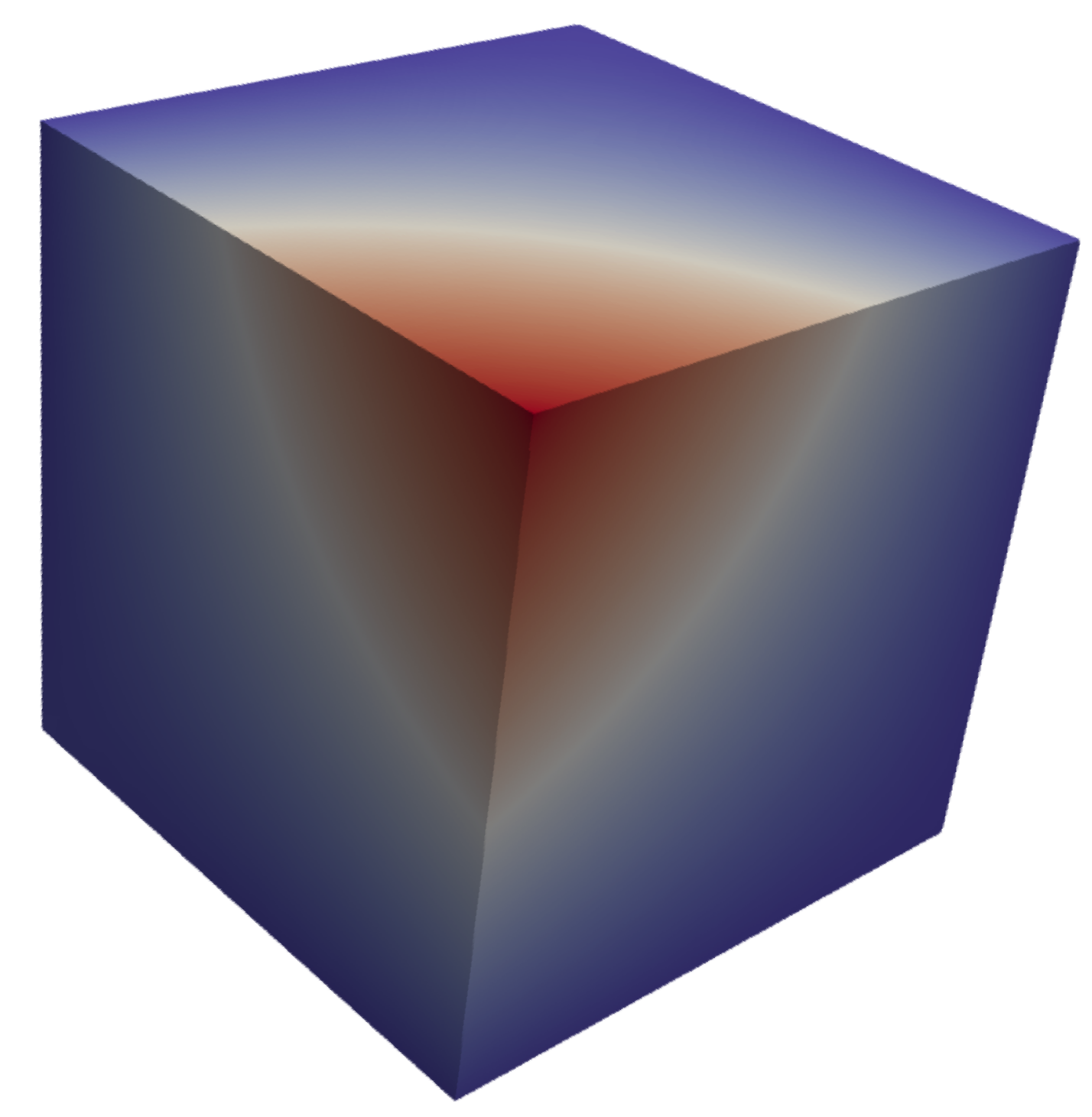}   
		\caption{}\label{d1}
	\end{subfigure}%
	\hspace{0.5cm}
	\begin{subfigure}[b]{5.0cm}
		\centering\includegraphics[height=5cm,width=5.0cm]{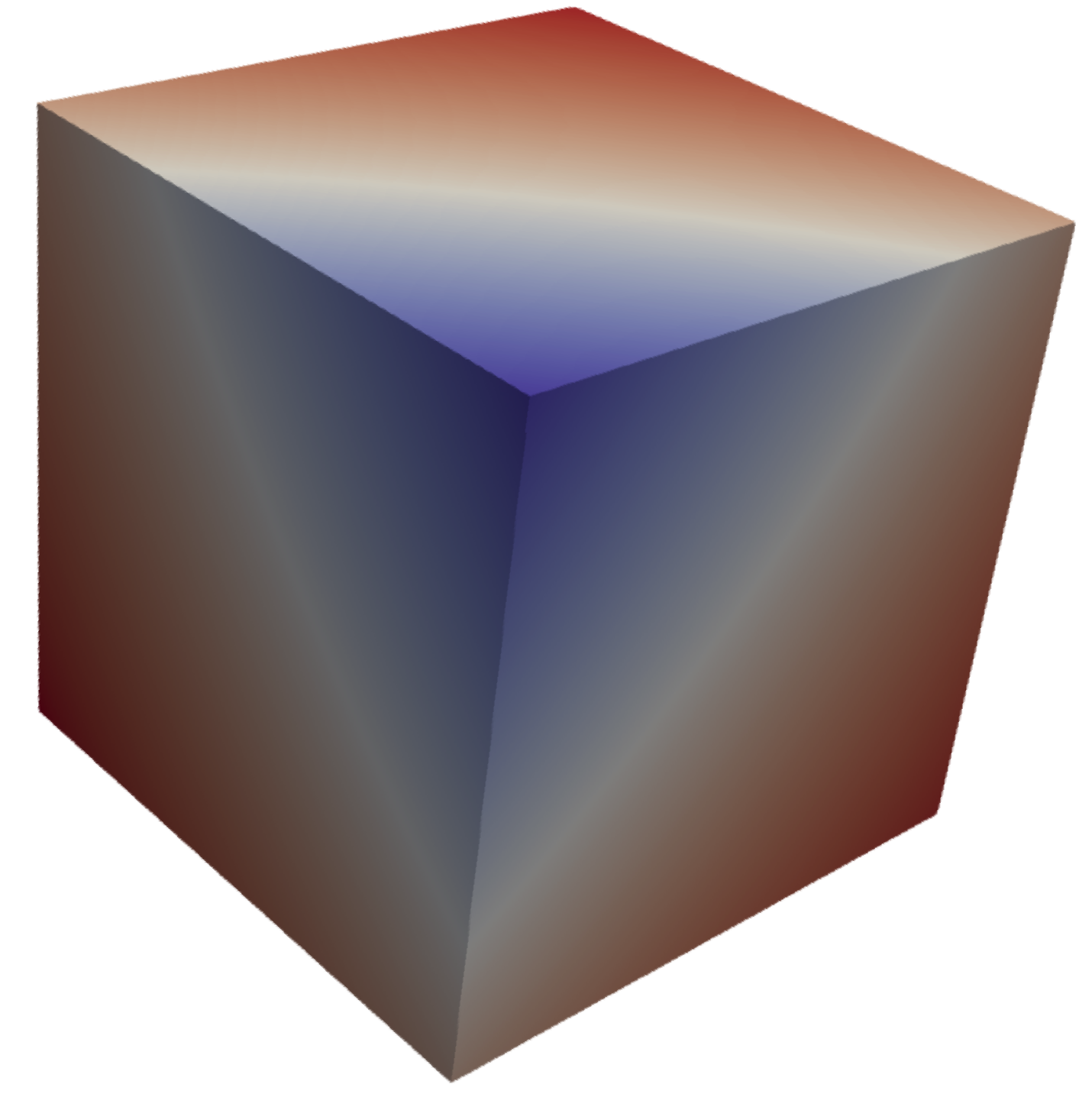}   
		\caption{}\label{d2}
	\end{subfigure}%
	\caption{Distributions of $\left(a\right)$ temperature; and $\left(b\right) $flux for the cube with 3D material variation }
	\label{temperature and flux of 3D}
\end{figure}

\begin{figure}[!htb]
	\captionsetup{width=0.9\columnwidth}
	\centering\includegraphics[height=6cm,width=8cm]{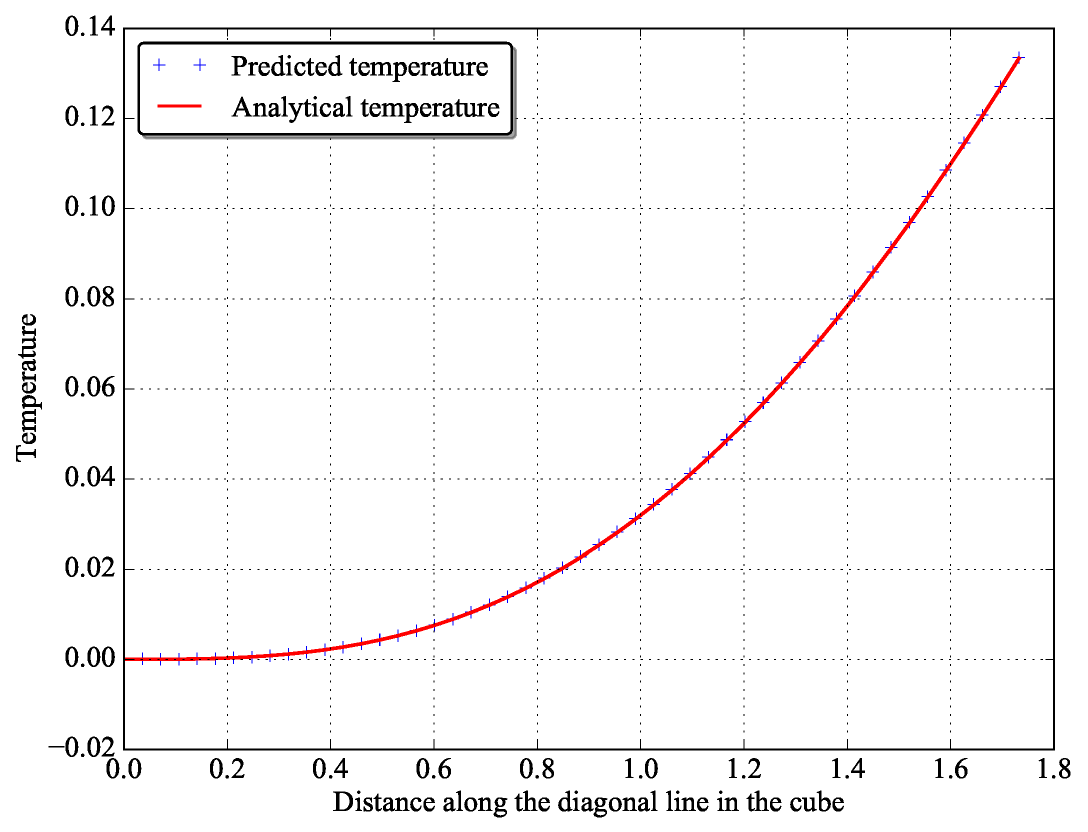}   
	\caption{Temperature profile in diagonal line for the cube with a 3D material gradation}
	\label{temperature line 3D}
\end{figure}
\subsection{Case 4: Irregular shaped annular sector}
Next, we present results for an irregular shaped annular sector as depicted in Figure \ref{thermalconditions}. The inner radius is 0.3, outer radius 0.5, the top surface at Z=0.1  and the thermal conductivity for the geometry varies exponentially according to 
\begin{equation}
k(z)=5{ e }^{ (3z) }
\label{irregular conductivity}
\end{equation} 
The variation of the thermal conductivity k(z)  is illustrated in Figure~\ref{thermalconductivityirregular}.
\begin{figure}[!htb]
	\captionsetup{width=0.9\columnwidth}
	\centering\includegraphics[height=6cm,width=8.0cm]{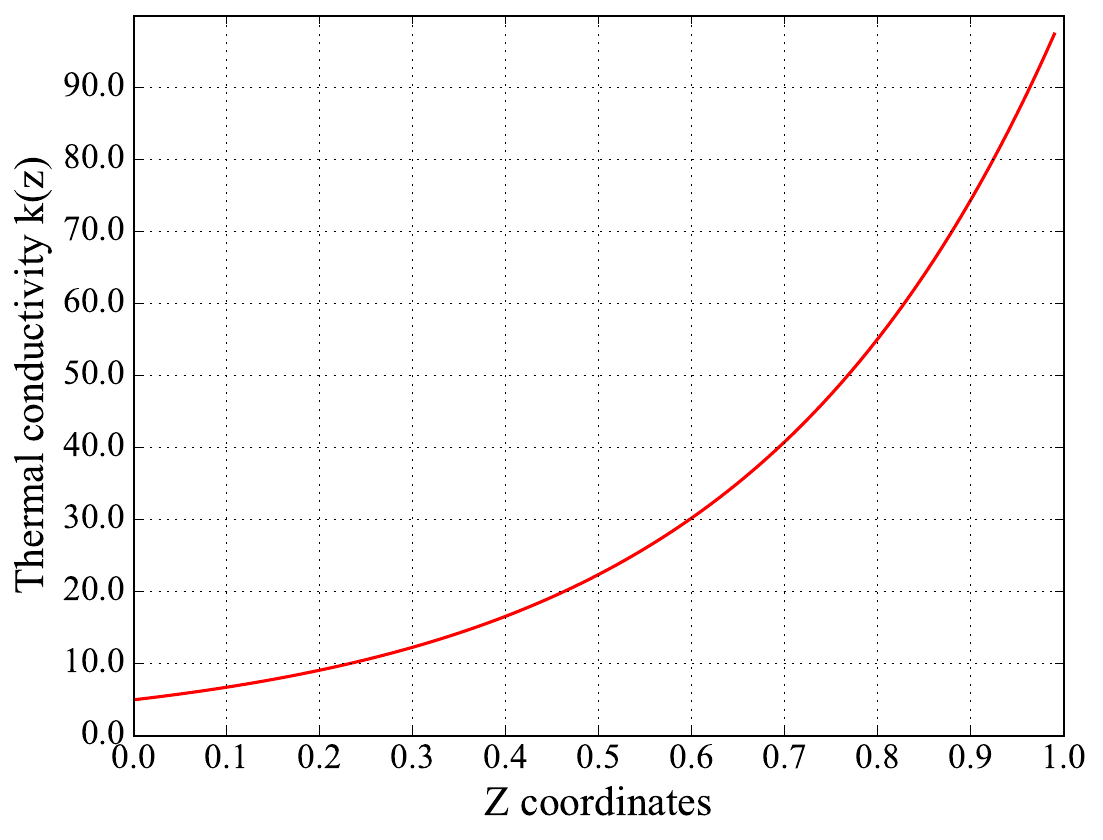}   
	\caption{Profile of thermal conductivity in z direction. The exponential variation of the conductivity is $k(z)=5{ e }^{ (3z) }$}
	\label{thermalconductivityirregular}
\end{figure}
The temperature is specified along the inner radius as ${T}_{inner}=0$, and outer radius as ${T}_{outer}=100$, and all other surfaces are insulated.  The boundary conditions of the geometry are shown in Figure ~\ref{thermalconditions}.
\begin{figure}[!htb]
	\captionsetup{width=0.9\columnwidth}
	\centering\includegraphics[height=6cm,width=8.0cm]{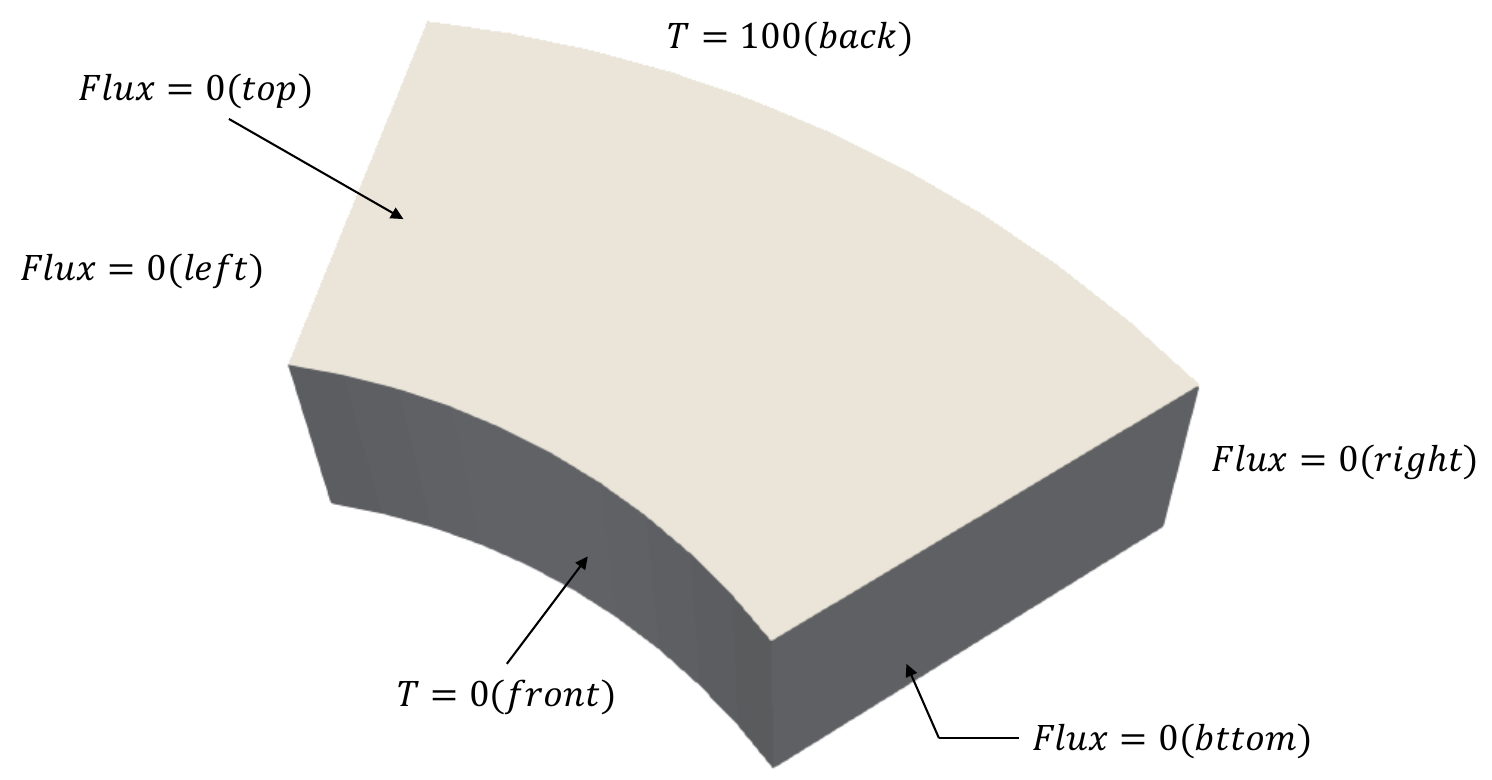}   
	\caption{Annular sector subjected to thermal boundary conditions}
	\label{thermalconditions}
\end{figure}

The results of the predicted temperature are shown in Figure~\ref{rotor-temp}-\ref{rotor-flux} and compared to an FEM solution of the commercial software package ABAQUS,  as no analytical solution is available for this problem. 
\begin{figure}[!htb]
	\captionsetup{width=0.9\columnwidth}
	\centering
	\begin{subfigure}[b]{5.0cm}
		\centering\includegraphics[height=5cm,width=5.0cm]{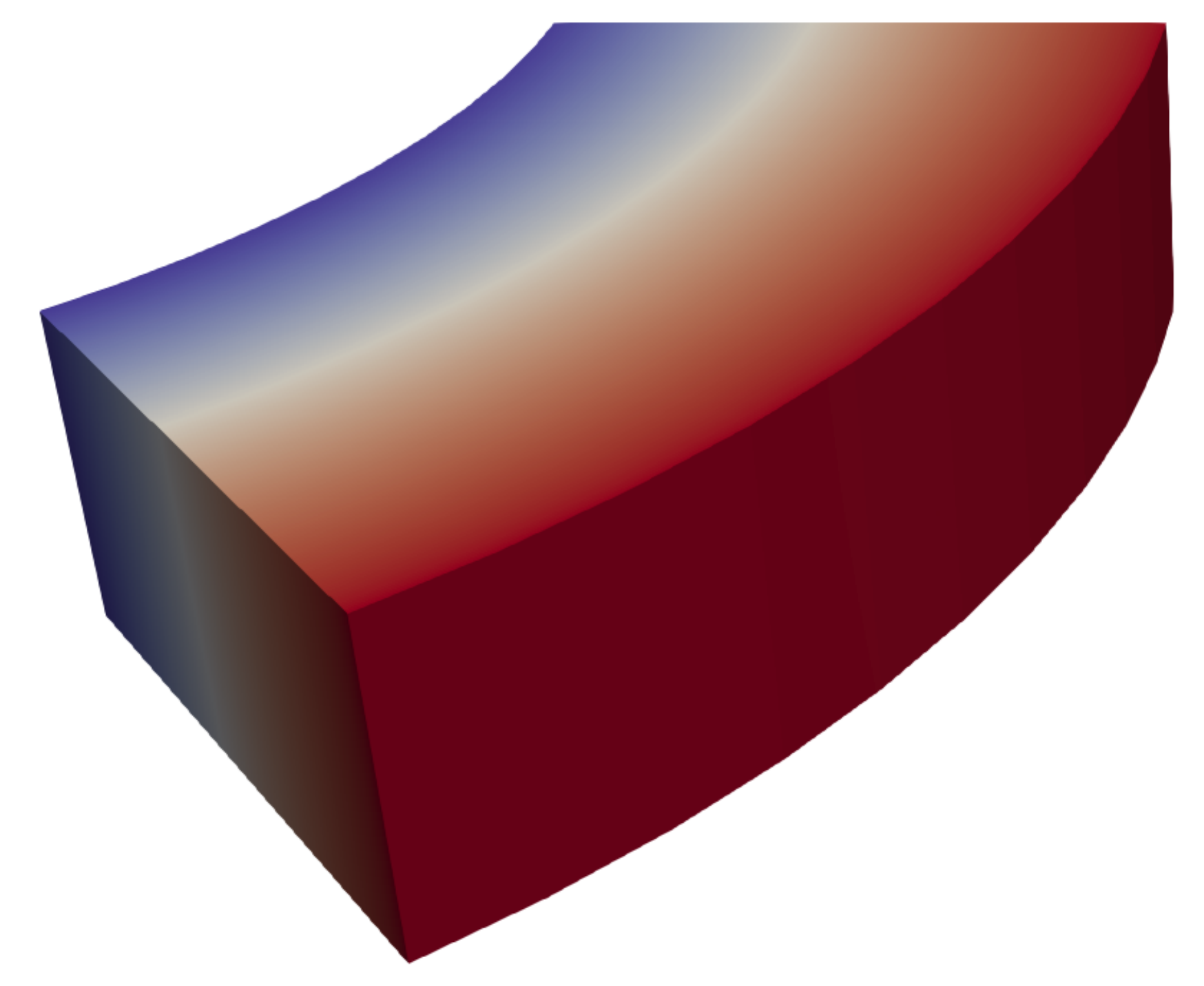}   
		\caption{}\label{s1}
	\end{subfigure}%
	\hspace{0.5cm}
	\begin{subfigure}[b]{5.0cm}
		\centering\includegraphics[height=5cm,width=5.0cm]{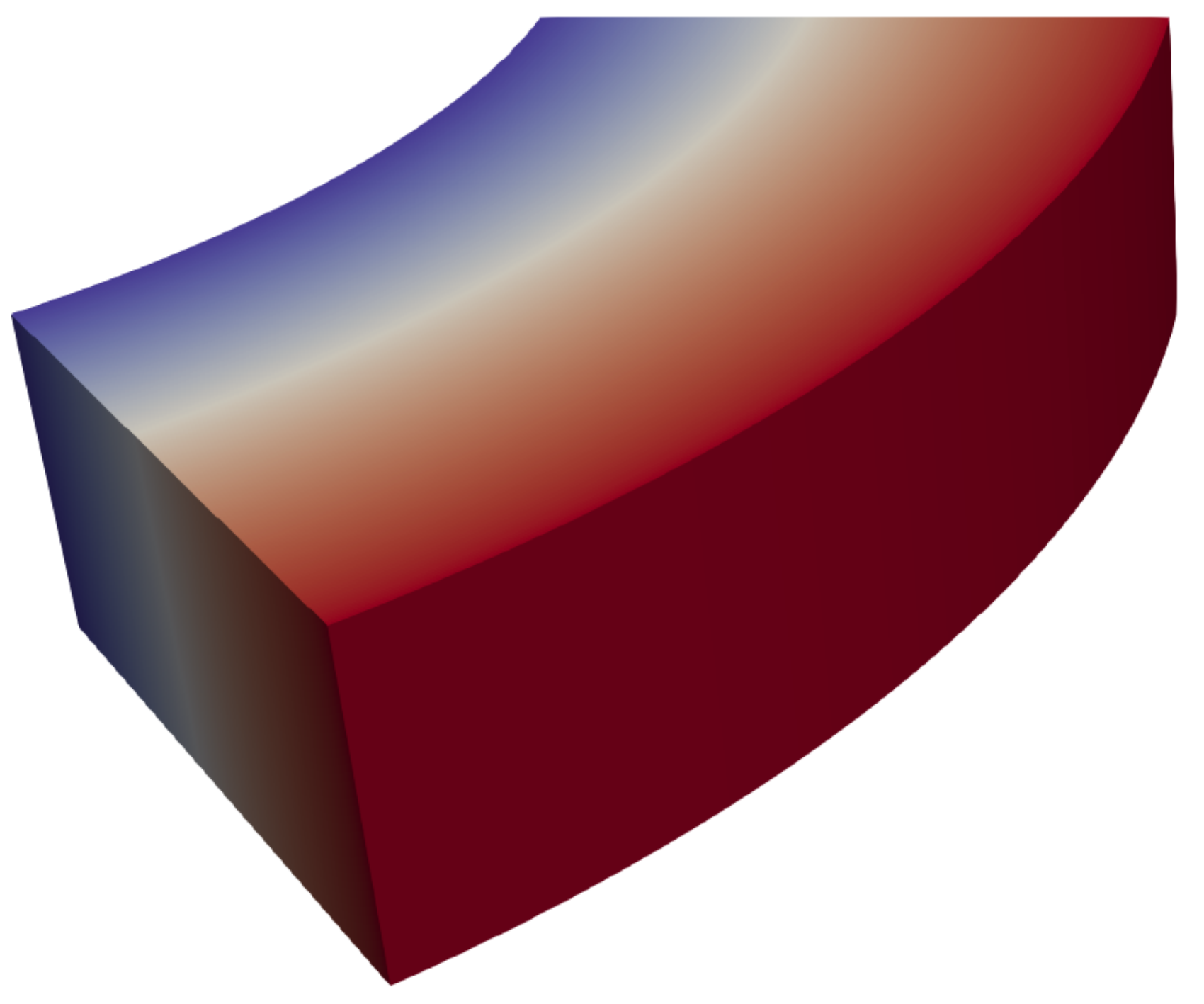}   
		\caption{}\label{s2}
	\end{subfigure}%
	\caption{ Temperature distribution for irregular shaped FGMs obtained by $\left(a\right)$ deep collocation method; and $\left(b\right)$ ABAQUS}
	\label{rotor-temp}
\end{figure}
\begin{figure}[!htb]
	\captionsetup{width=0.9\columnwidth}
	\centering
	\begin{subfigure}[b]{5.0cm}
		\centering\includegraphics[height=5cm,width=5.0cm]{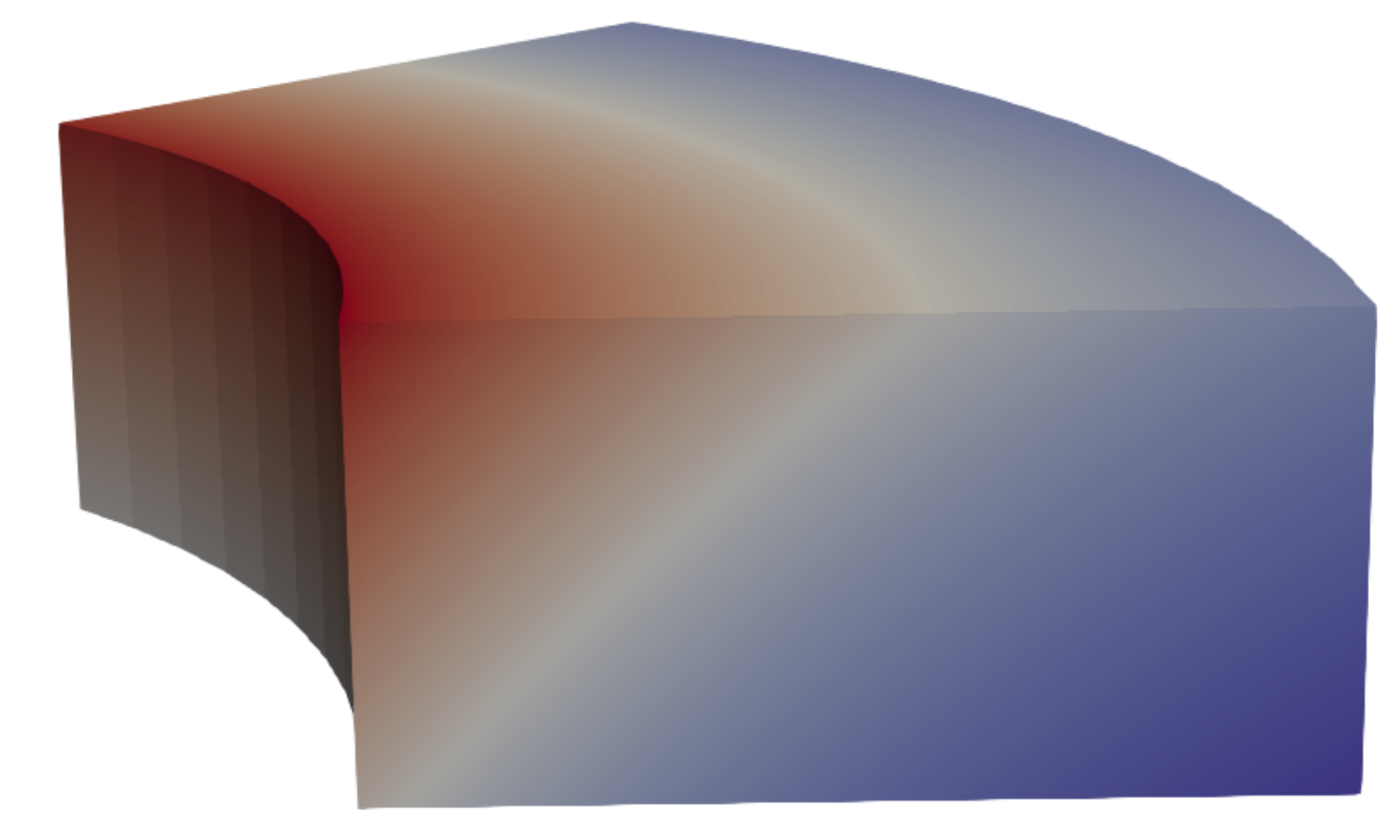}   
		\caption{}\label{s3}
	\end{subfigure}%
	\hspace{0.5cm}
	\begin{subfigure}[b]{5.0cm}
		\centering\includegraphics[height=5cm,width=5.0cm]{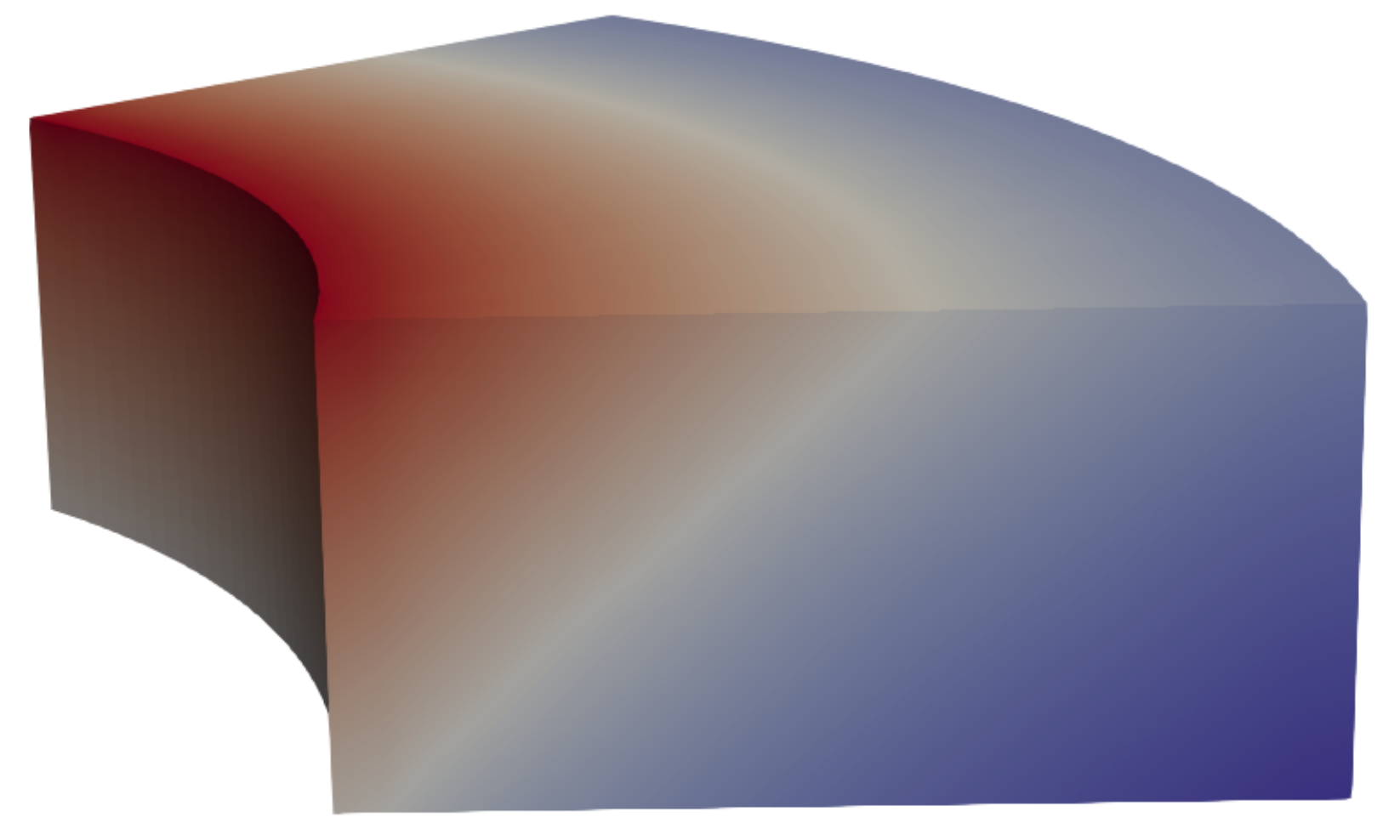}   
		\caption{}\label{s4}
	\end{subfigure}%
	\caption{Flux distribution for irregular shaped FGMs obtained by $\left(a\right)$ deep collocation method; and $\left(b\right)$ ABAQUS }
	\label{rotor-flux}
\end{figure}
The temperature along the radial direction at the edge is plotted and compared with results obtained by ABAQUS in Figure~\ref{temperature line rotor}.
\begin{figure}[!htb]
	\captionsetup{width=0.9\columnwidth}
	\centering\includegraphics[height=6cm,width=8cm]{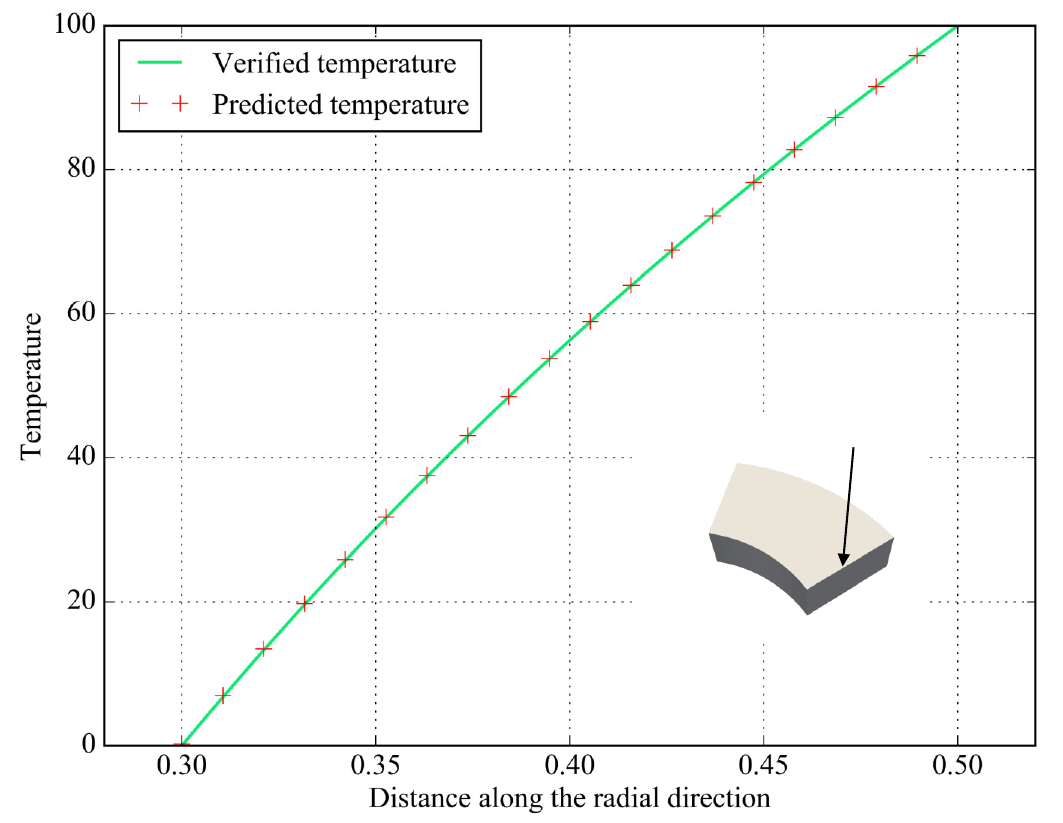}   
	\caption{Temperature comparison along the right top edge (indicated by the arrow)}
	\label{temperature line rotor}
\end{figure}

\section{Conclusion}
We presented a transfer learning based deep collocation method (DCM) for solving the problems of potential in non-homogeneous media. It avoids classical discretization methods such as FEM and treats the problem as minimzation problem, minimizing a loss function which is related to the underlying governing equation. Thanks to the nonlinear activation function, the approach enables us to discover complex nonlinear pattern.  The DCM requires sampling inside the physical domain. Therefore, we obtained a suitable sampling method for selected problems. In order to find the most favorable configuration of the neural network for specific problems, we carried out a sensitivity analysis quantifying the influence of algorithm-specific parameters on specific outputs such as the relative error in the L2 norm. For different material variation forms and material parameters, a material transfer learning is embedded into the framework to enhance the robustness and generality of this deep collocation method. To demonstrate the performance of the proposed DCM, various benchmark problems including the heat transfer and a representative potential problem are studied, which has been verified with effectiveness and accuracy. Even though DCM is quite promising in solving partial differential equations, there are limitations remains to be solved, such as lack of a more systematic procedure in preventing overfitting issues for physics informed deep learning model, the local minima issues and DCM still remains to be enhanced in the learning ability and efficiency for complex multi-physics and multi-scale modelling, which will be our major research topics in the future.

\bmhead{Acknowledgments}\label{Acknowledgement}
The authors extend their appreciation to the Distinguished Scientist Fellowship Program (DSFP) at King Saud University for funding this work.

\begin{appendices}

\section{Data flow for this study}\label{appendix a}

The data flow inside all modules in this investigation are depicted in Figure \ref{fig:Dataflow}, which is mainly classified into three modules. First the pivotal part of this study is the physics-informed deep learning based collocation methods, with collocation points generated for modelling, which makes the deep collocation method truly `meshfree'. The the physics-informed deep learning is built to incorporated the physics information into the state-of-the-art deep learning model. After that, we did a parametric study on the influence of algorithm-specific parameters including numbers of collocation points and parameters for deep learning configurations on predictive accuracy, in hope to give general guidance to further applications of deep collocation method in computational mechanics. For the parametric analysis, a two-step Morris screening method and EFAST method are adapted to give a qualitative and quantitive measure of importance and interaction of DCM-specific parameters. To facilitate and improve the generality and robustness of the presented model, data is finally imported into a material transfer learning model to transfer and expand learnt knowledge between different material variations.

\begin{figure}[h]%
\centering
\includegraphics[width=1\textwidth,height=0.65\textwidth]{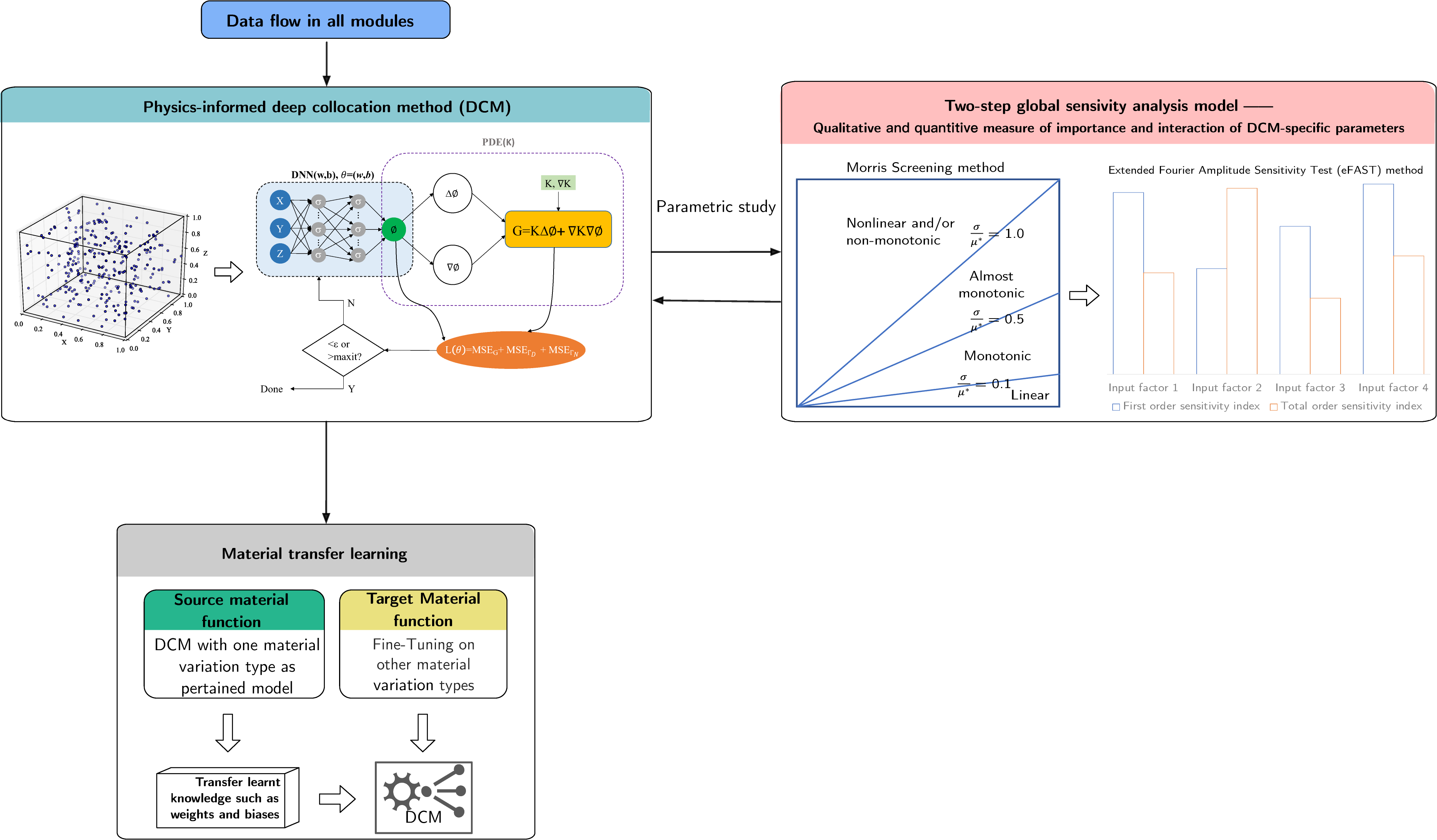}
\centering
\caption{Data flow in physics-informed deep collocation method and sensitivity analysis}\label{fig:Dataflow}
\end{figure}

\section{Activation function and sampling method for comparison}\label{appendix b}
The following table shows a list of classical activation functions and its graphs that studied in this application, which will help to choose a suitable activation for physics-informed neural networks.
\begin{table}[!h] 
\captionsetup{width=0.9\columnwidth}
\caption{Activation function} 
\vspace{-0.1cm}
\centering 
\resizebox{0.9\columnwidth}{!}{%
\begin{tabular}{c |c | m{5cm}<{\centering} | m{7cm}<{\centering}} 
\toprule 
\toprule 
Activation function & Explicit function form& Function figure& Derivatives of function figure\\ 
\midrule 
Tanh & $f(x)=\frac{e^{2x}-1}{e^{2x}+1}$ & \includegraphics[height=4cm,width=5cm]{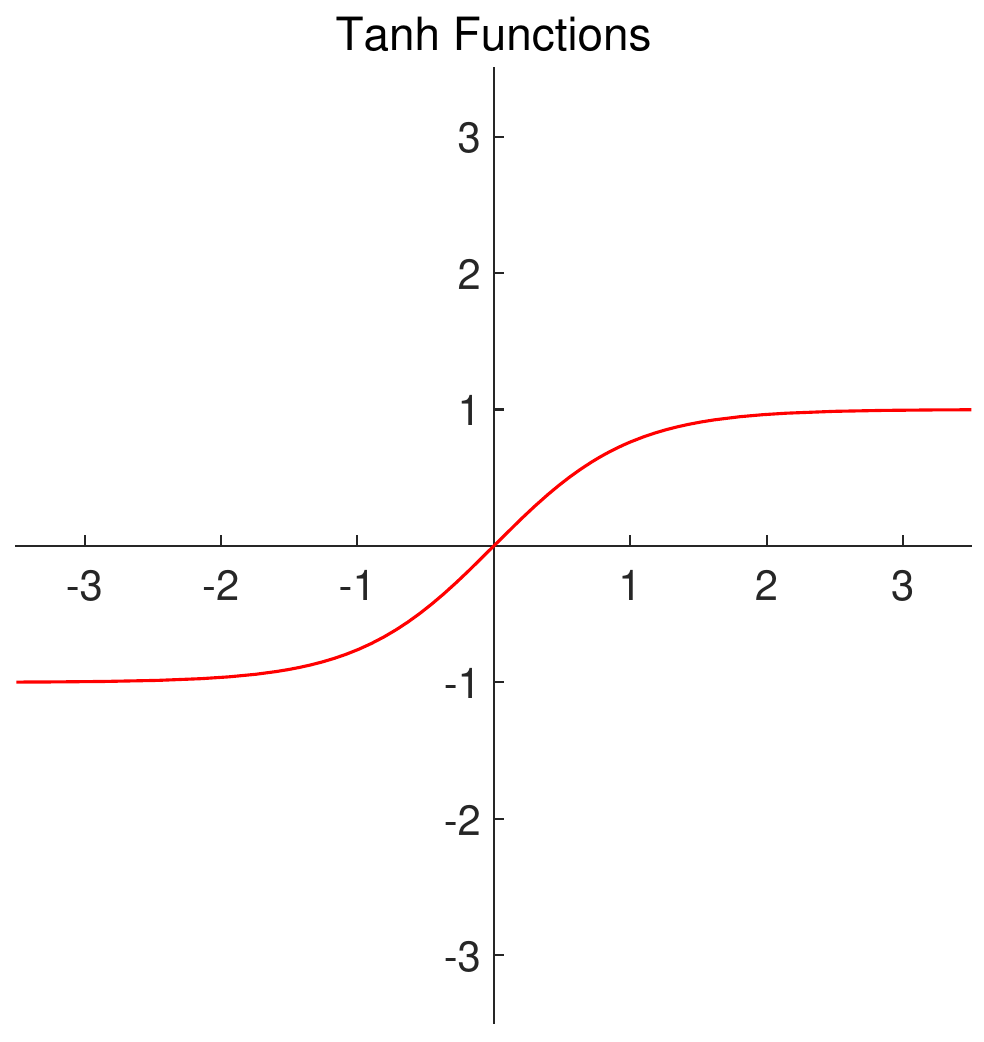} & \includegraphics[height=4cm,width=5cm]{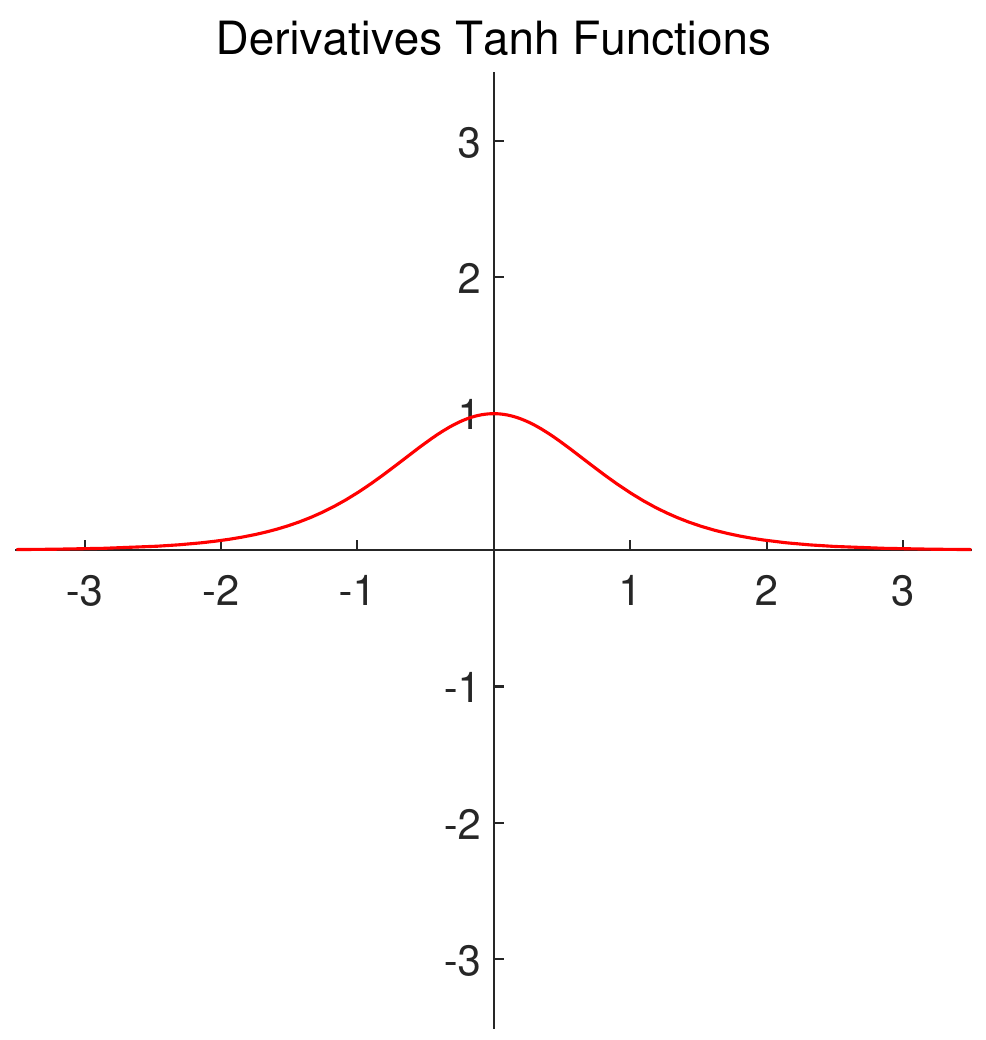}\\ 
\midrule
Sigmoid & $f(x)=\frac { 1 }{ 1+{ e }^{ (-x) } } $ & \includegraphics[height=4cm,width=5cm]{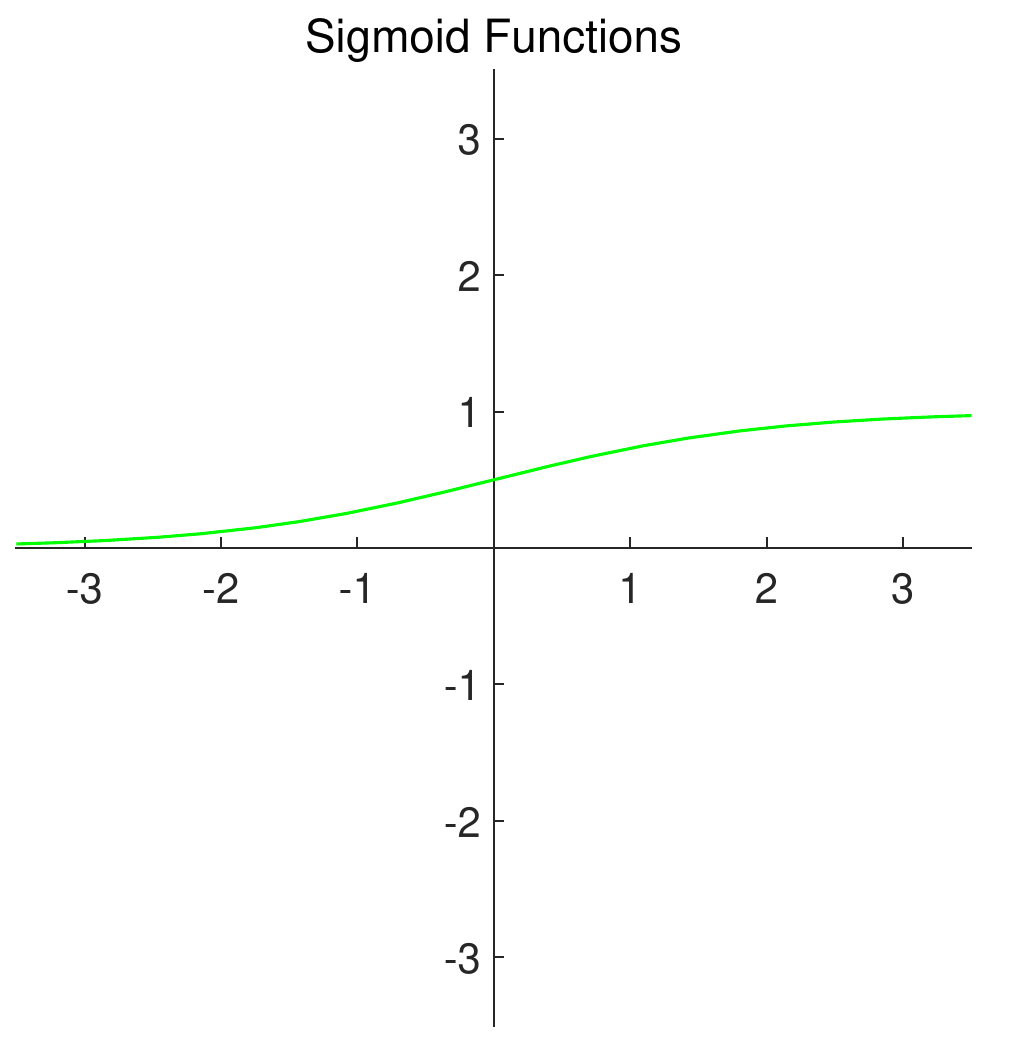}  & \includegraphics[height=4cm,width=5cm]{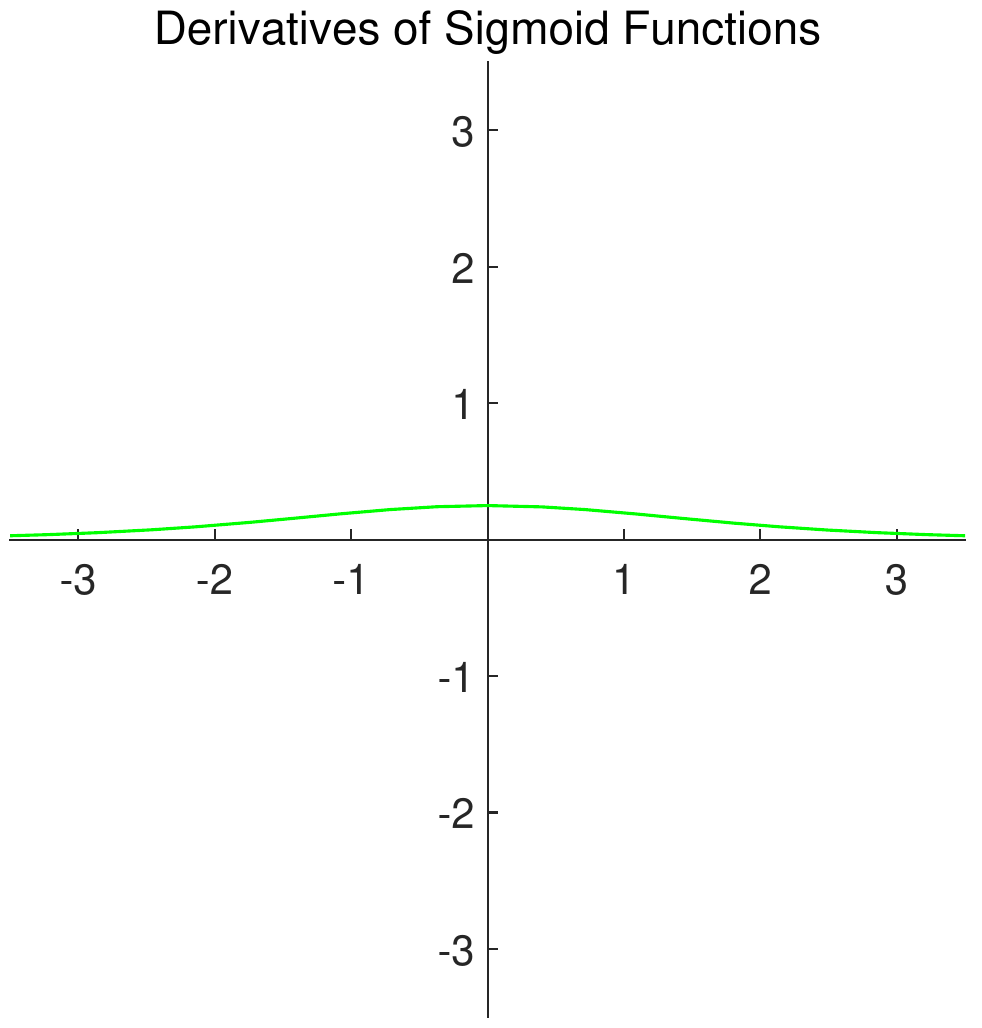} \\ 
\midrule
Swish & $f(x)=\frac { x }{ 1+{ e }^{ (-\beta x) } }  $ & \includegraphics[height=4cm,width=5cm]{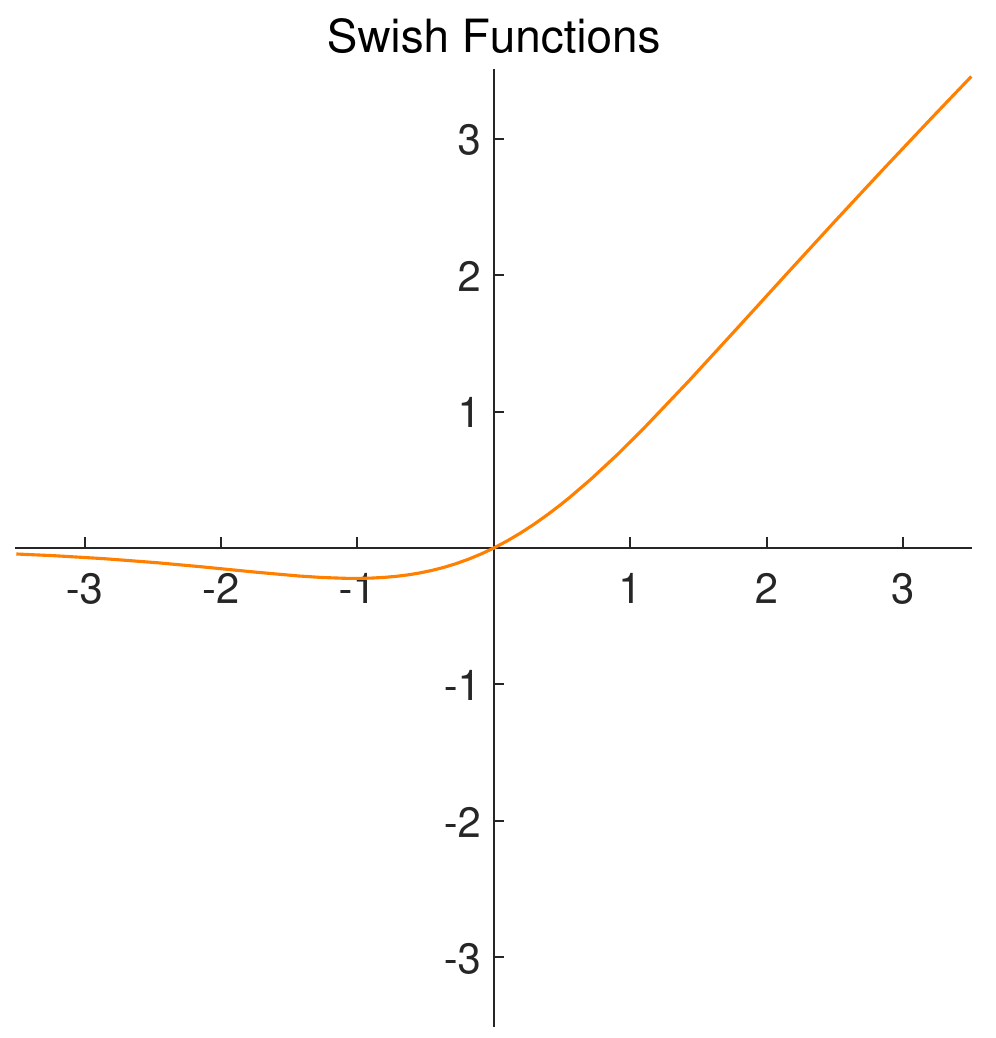}  & \includegraphics[height=4cm,width=5cm]{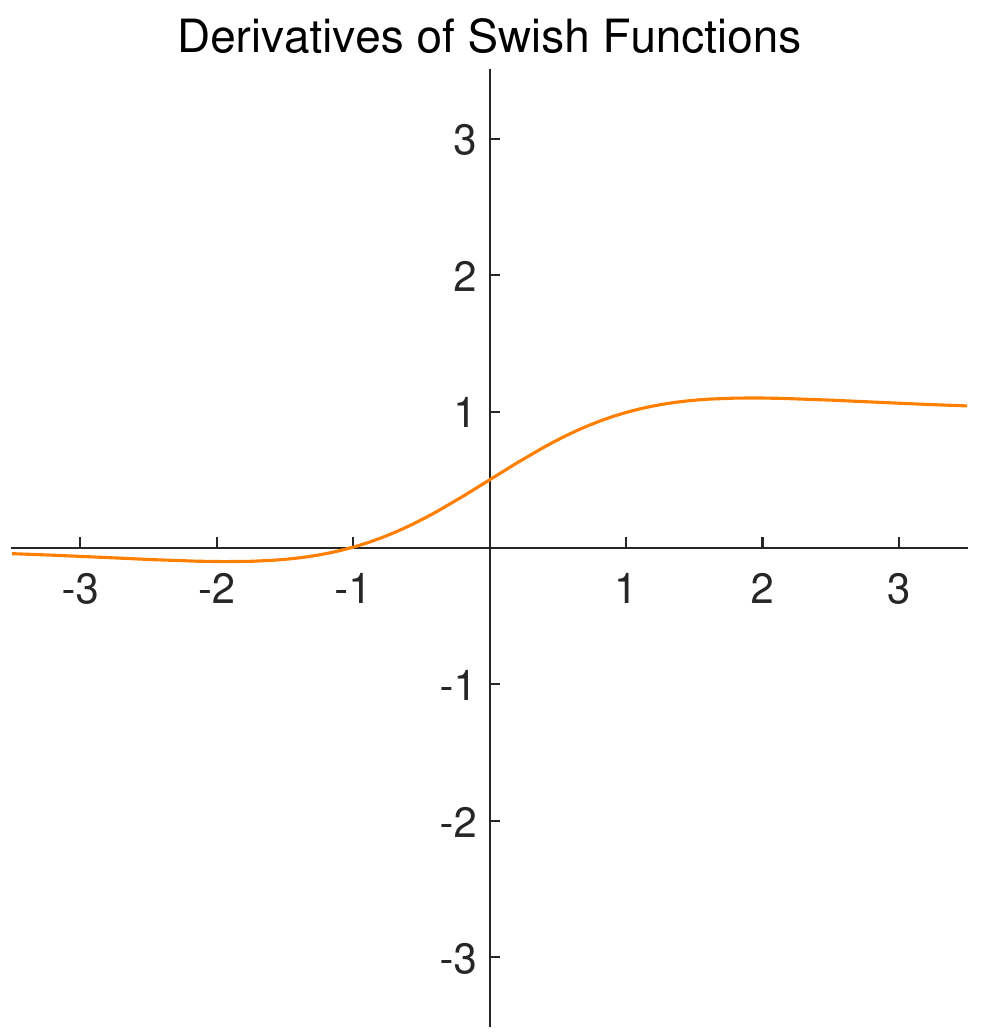} \\ 
\midrule
LeCuns Tanh & $f(x)=1.7159 \times tanh(\frac { 2 }{ 3 } x)$ & \includegraphics[height=4cm,width=5cm]{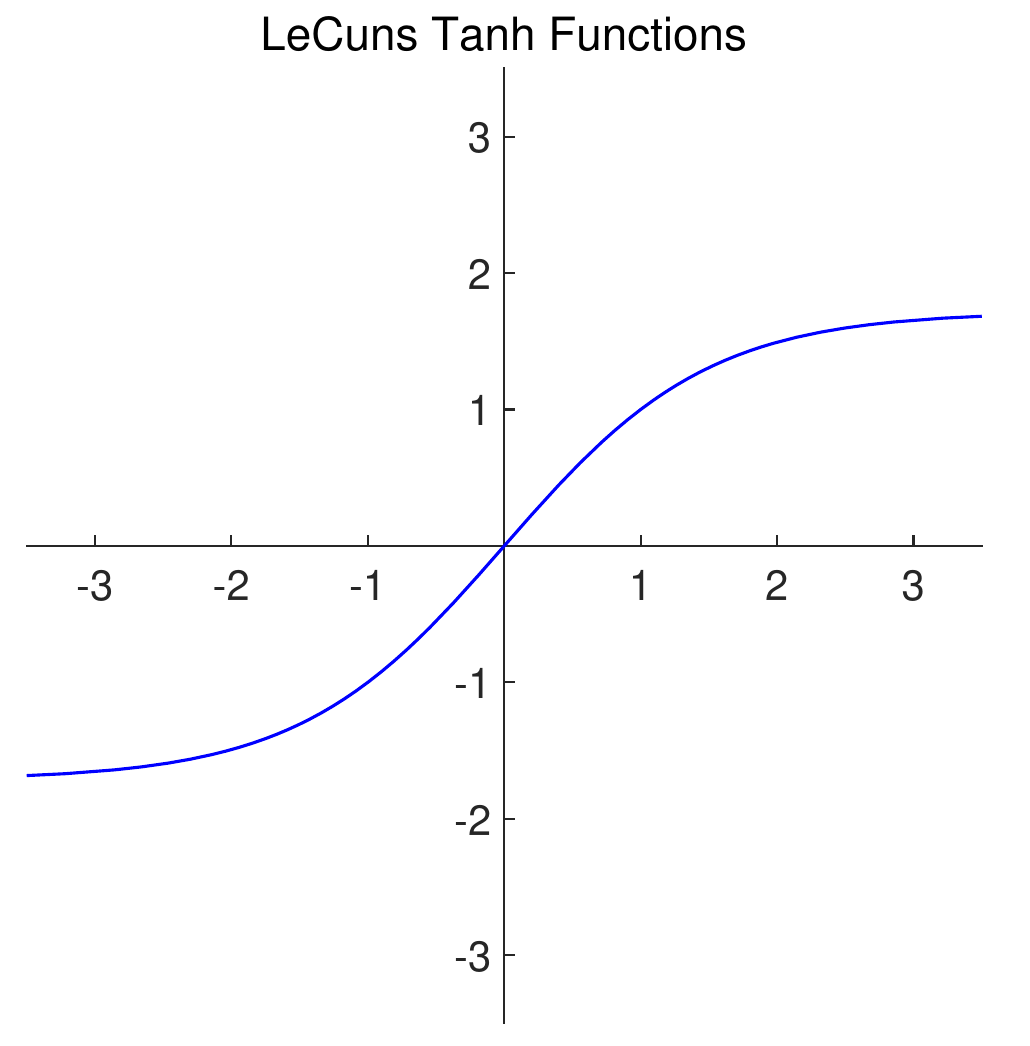} &\includegraphics[height=4cm,width=5cm]{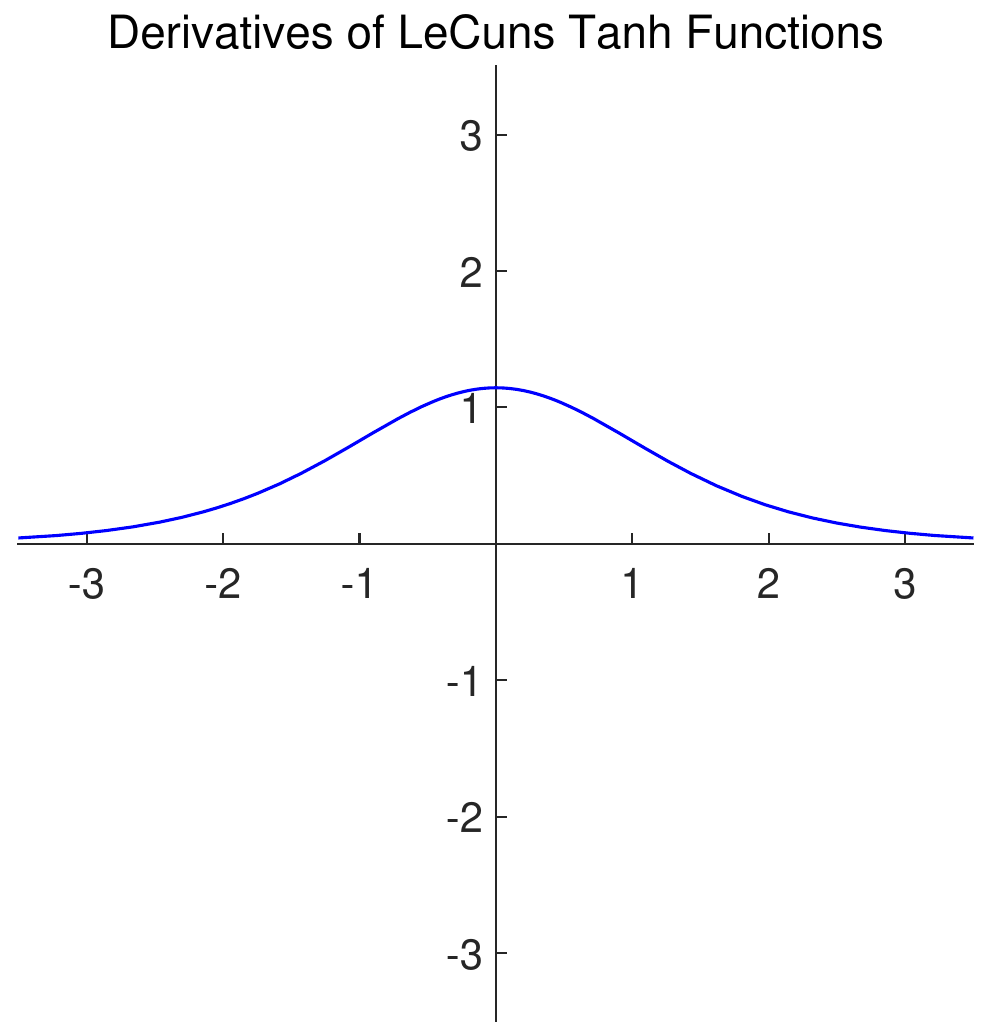} \\ 
\midrule
Bipolar sigmoid&  $f(x)=\frac { { e }^{ x }-1 }{ { e }^{ x }+1 }  $  &\includegraphics[height=4cm,width=5cm]{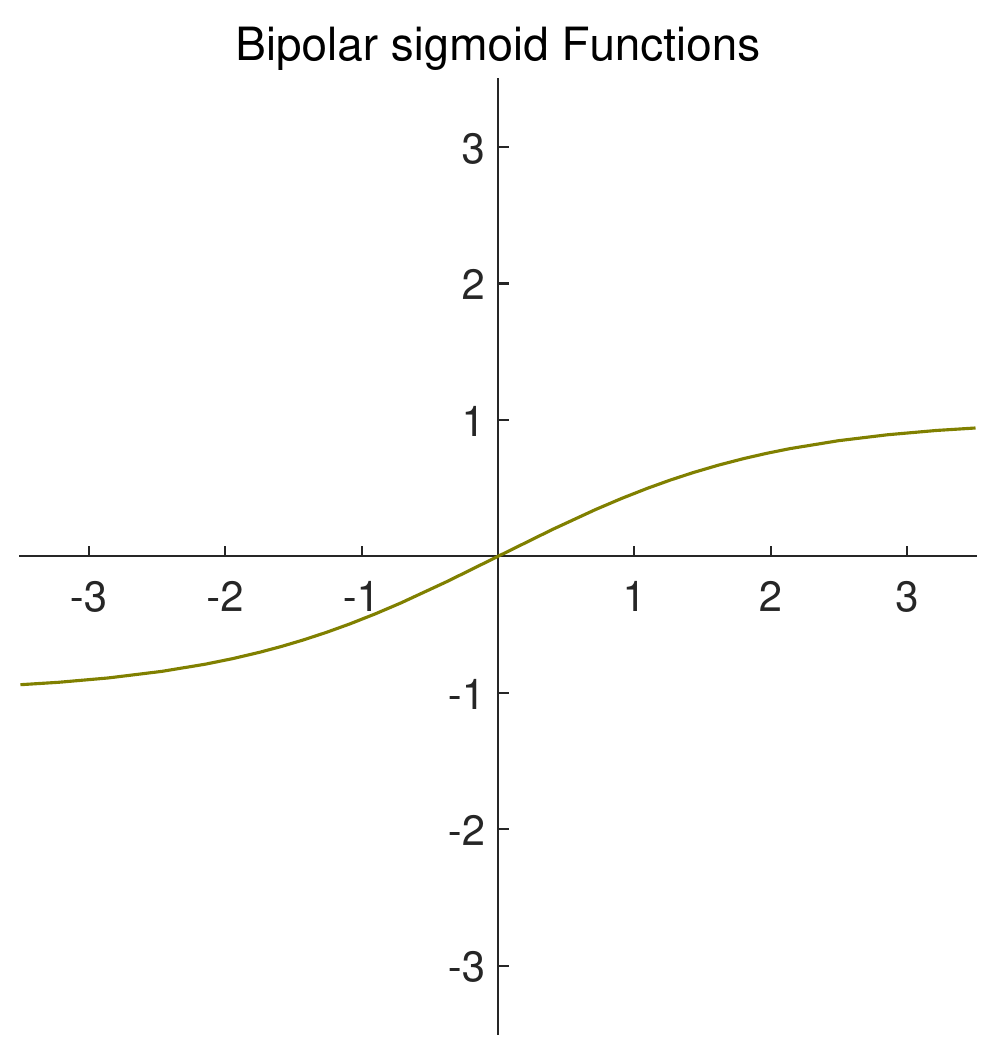} &\includegraphics[height=4cm,width=5cm]{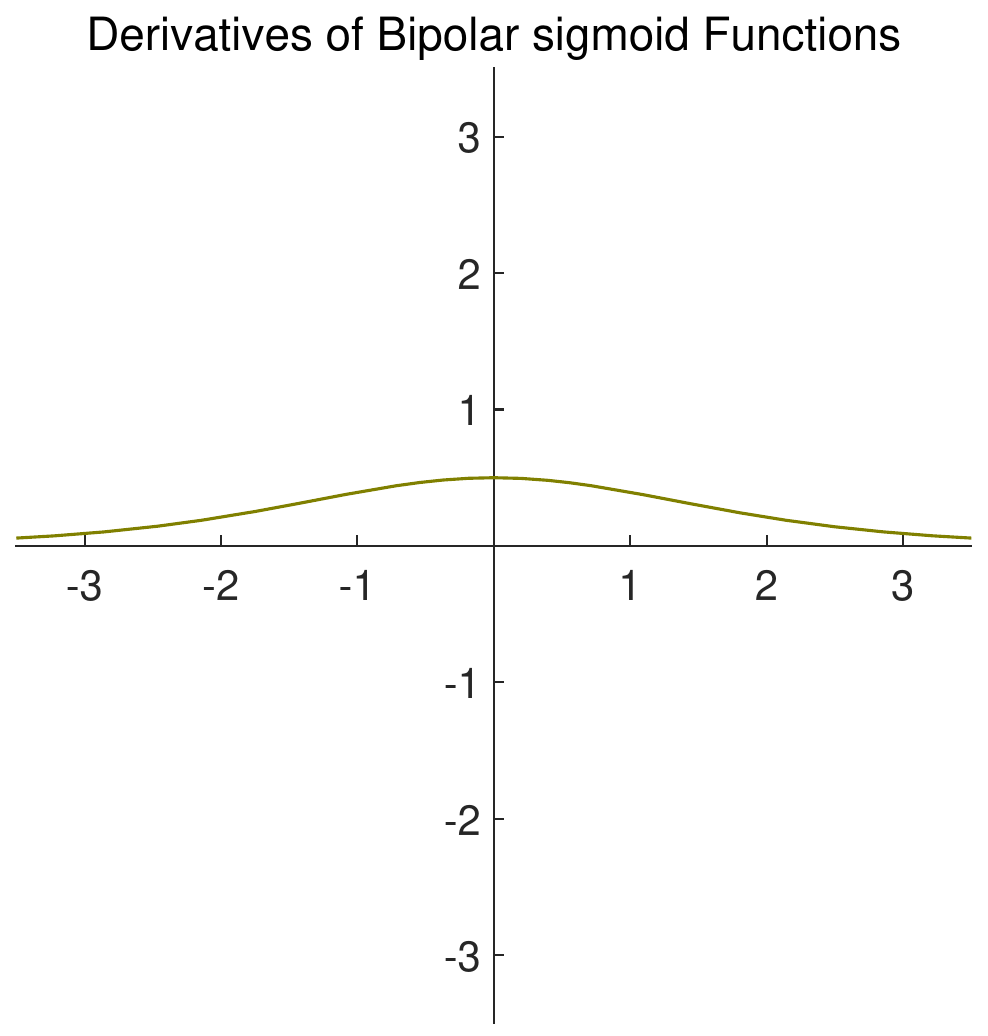} \\ 
\midrule
Mish& $f(x)=x\times tanh(ln(1+{ e }^{ x })) $ &\includegraphics[height=4cm,width=5cm]{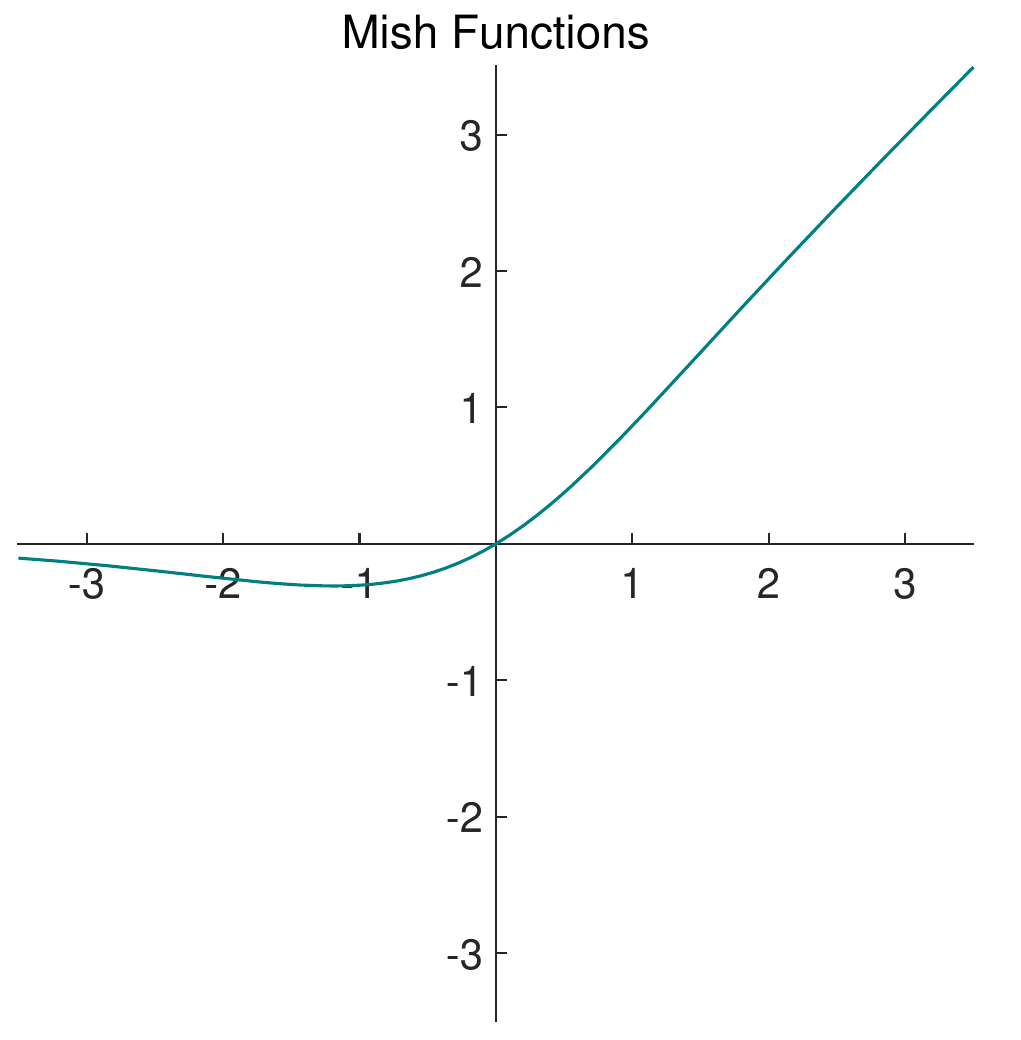} &\includegraphics[height=4cm,width=5cm]{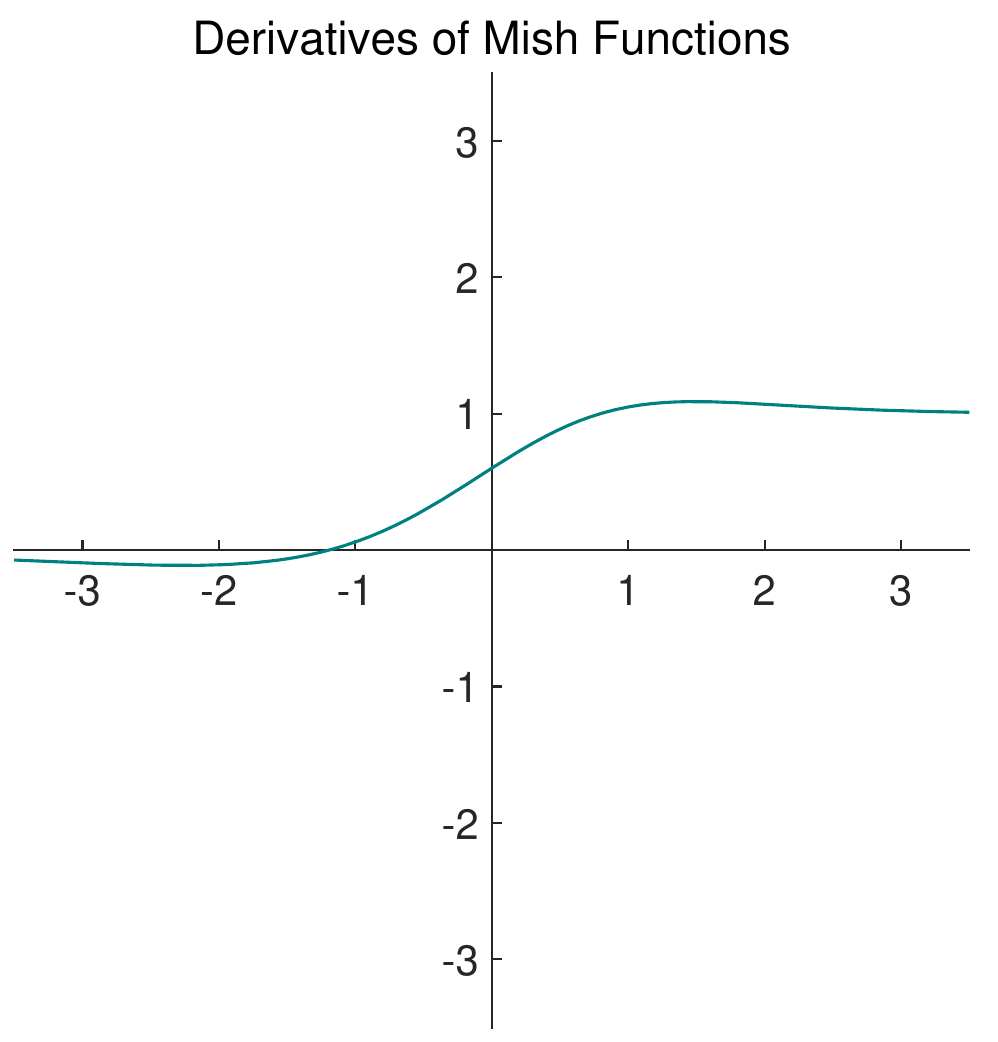} \\ 
\midrule
Arctan &  $f(x)={ tan }^{ -1 }(x) $  &\includegraphics[height=4cm,width=5cm]{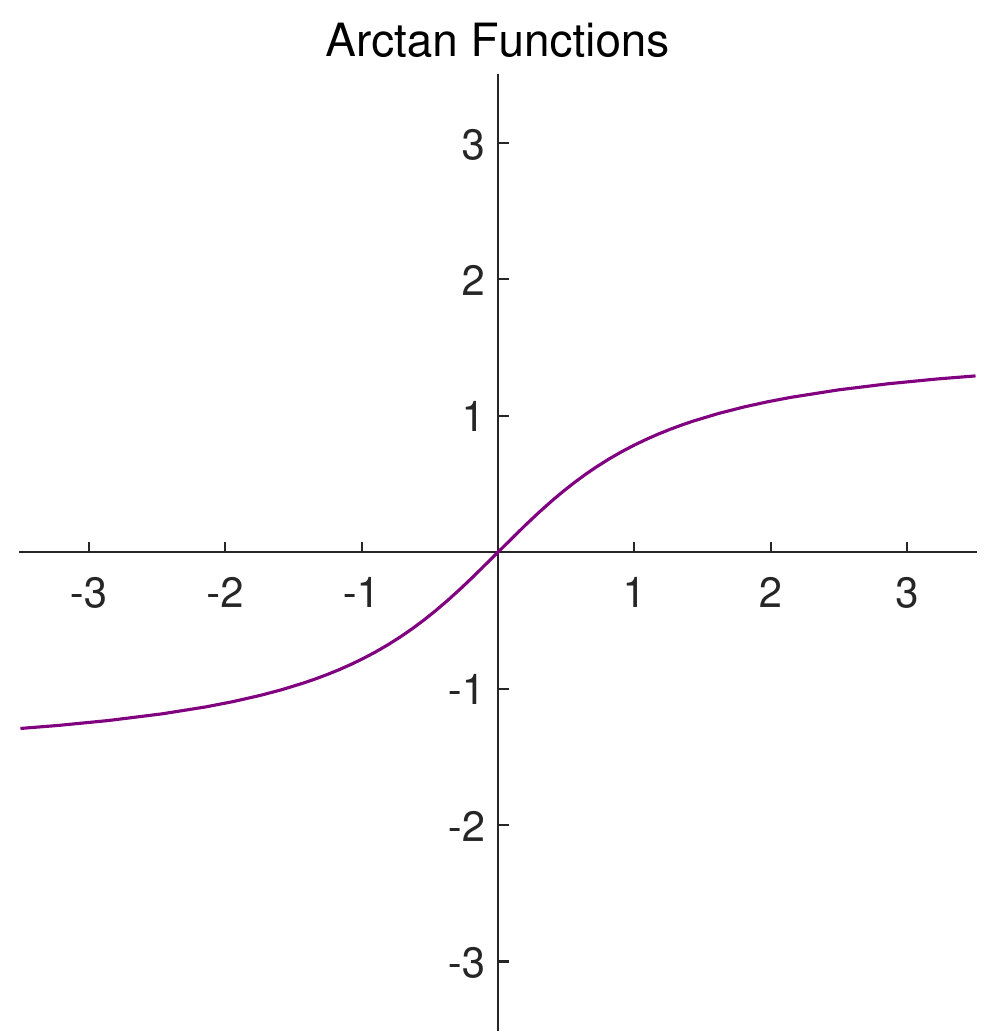} &\includegraphics[height=4cm,width=5cm]{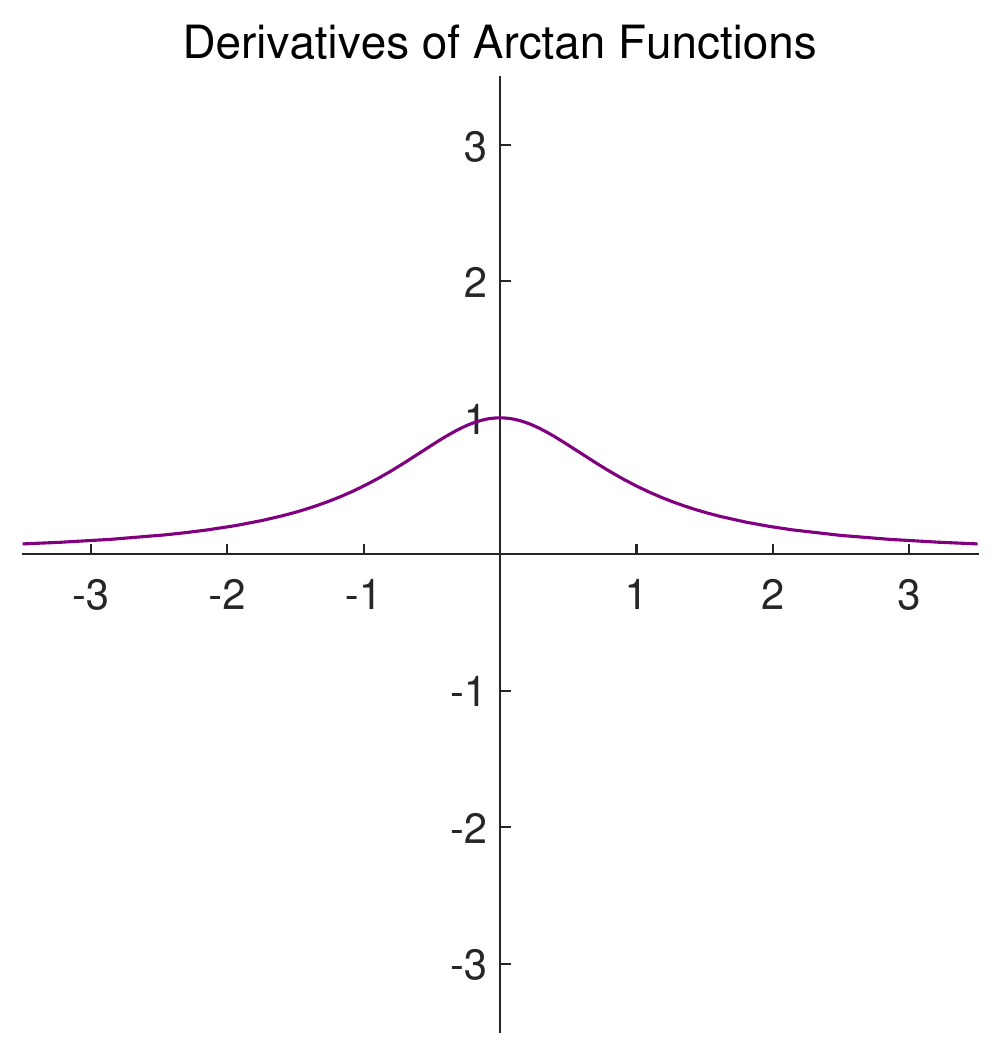}\\ 
\midrule
Silu  & $f(x)=x\times sigmoid(x) $  &\includegraphics[height=4cm,width=5cm]{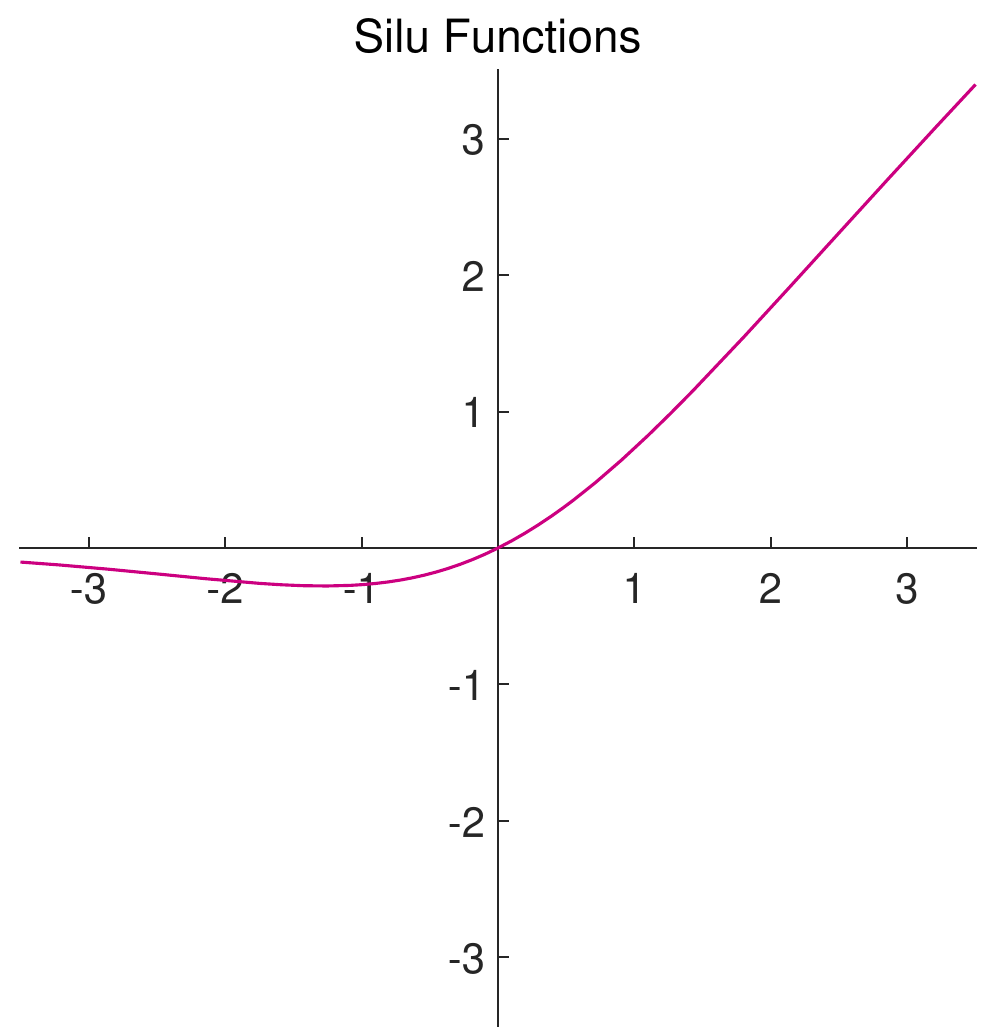} &\includegraphics[height=4cm,width=5cm]{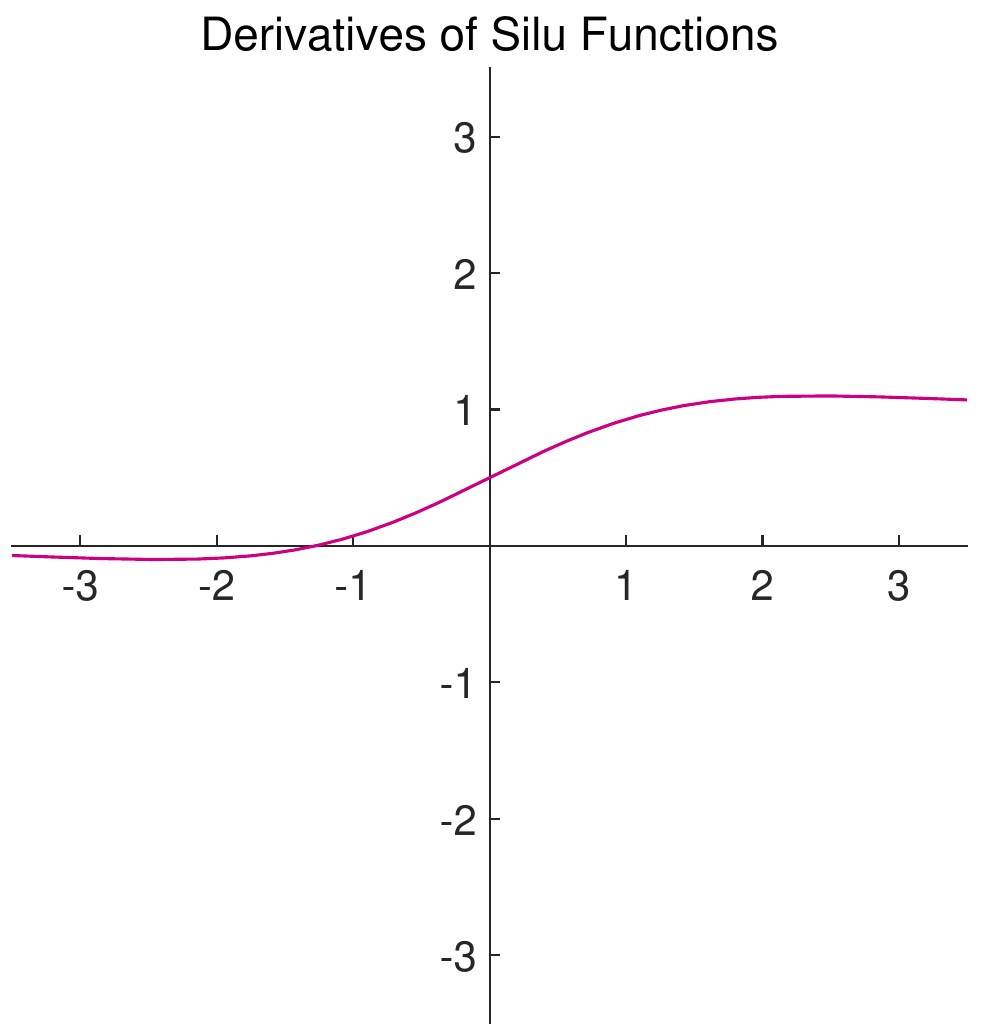}\\ 
\bottomrule 
\end{tabular}
}
\label{tab:Tableac} 
\end{table}

Various sampling methods are used to generate sequence of points within a cube. The purpose of sampling method is to generated training dataset for DCM and make the network better trained and proper sampling will help in case that the neural networks are only trained on fixed points and prevent a biased trained model, which may have a better prediction on random new data.
\begin{table}[!htb]
\captionsetup{width=0.9\columnwidth}
\caption{Sampling method}
\vspace{-0.1cm}
\centering 
\resizebox{0.8\columnwidth}{!}{%
\begin{tabular}{c  | m{5cm}<{\centering} || c | m{5cm}<{\centering}}
\toprule 
\toprule 
Sampling method & points figure&Sampling method & points figure\\
\midrule 
Latin hypercube & \includegraphics[height=4cm,width=4cm]{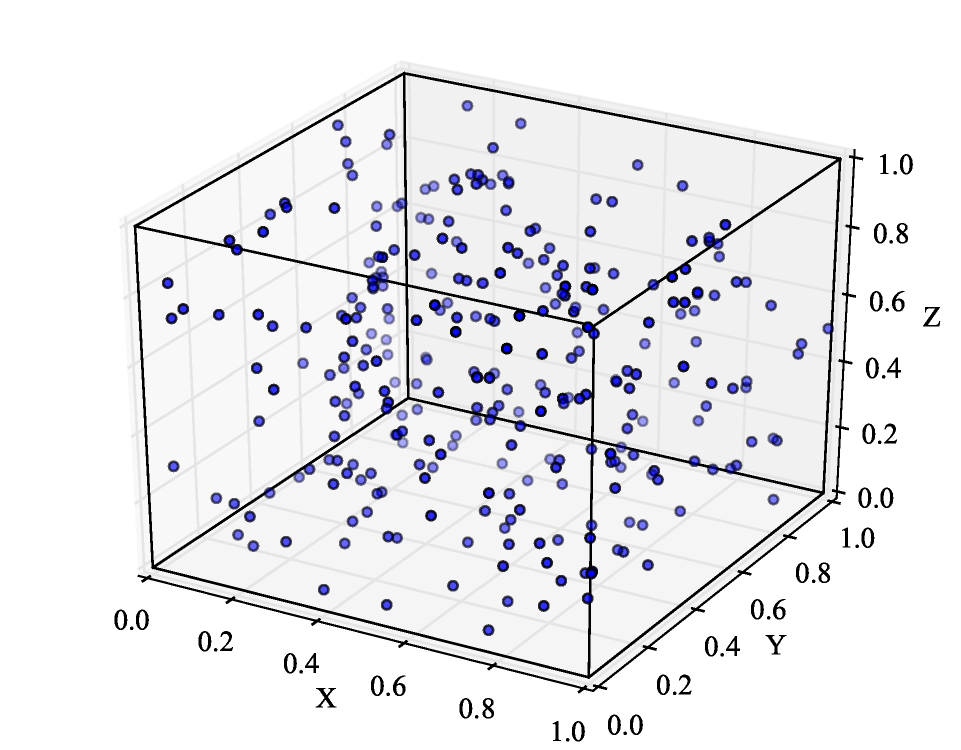}&Monte Carlo & \includegraphics[height=4cm,width=4cm]{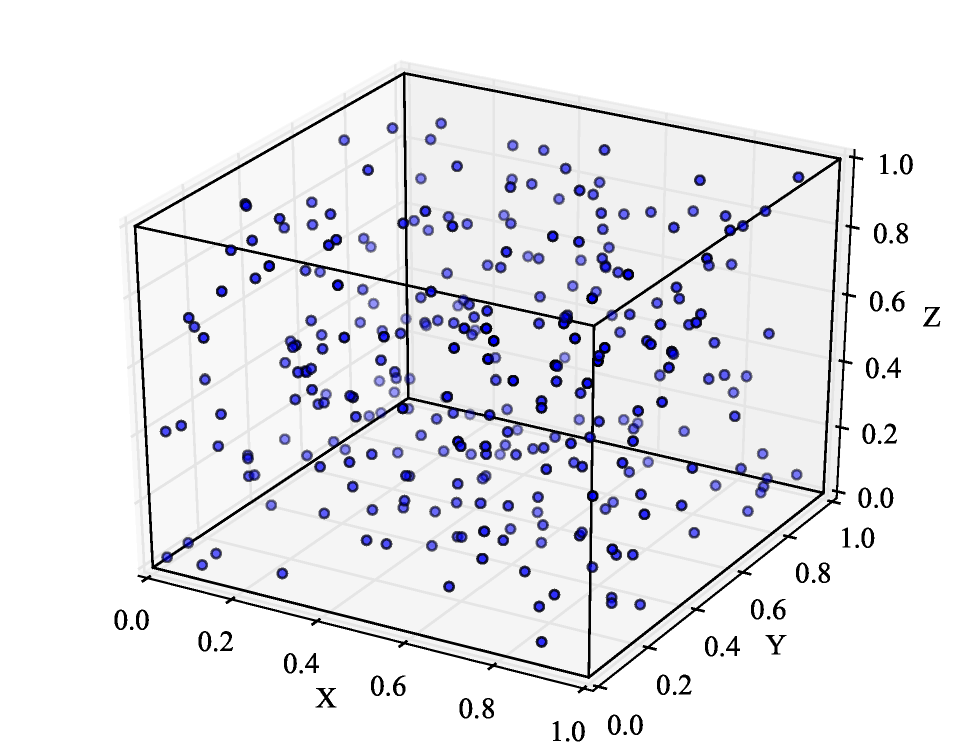}\\
\midrule
Random & \includegraphics[height=4cm,width=4cm]{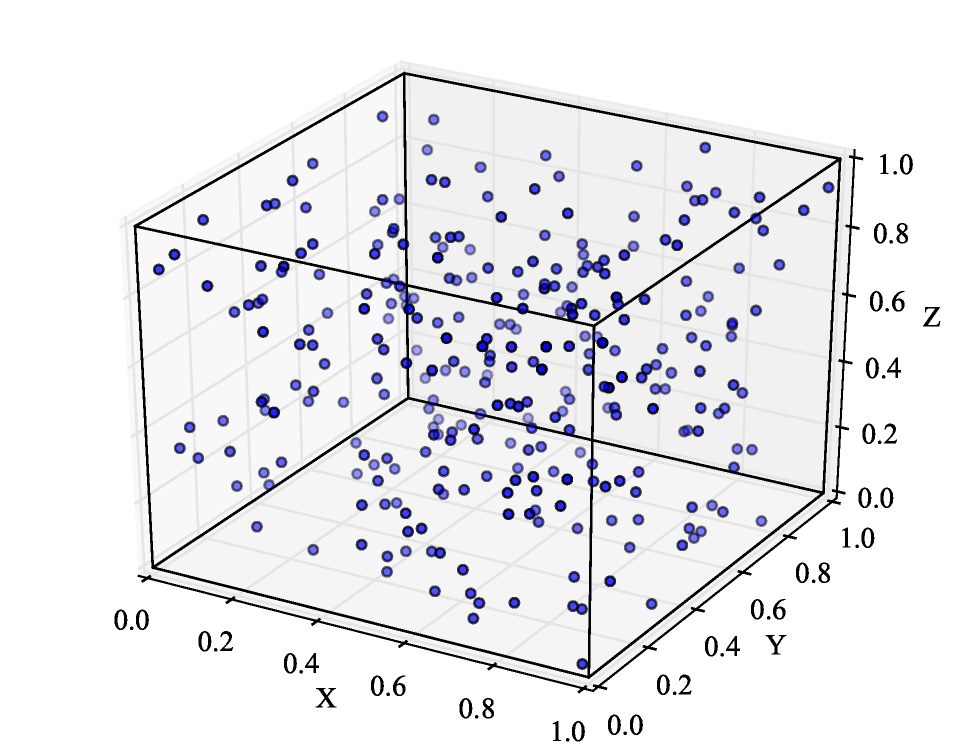}&Halton Sequences &\includegraphics[height=4cm,width=4cm]{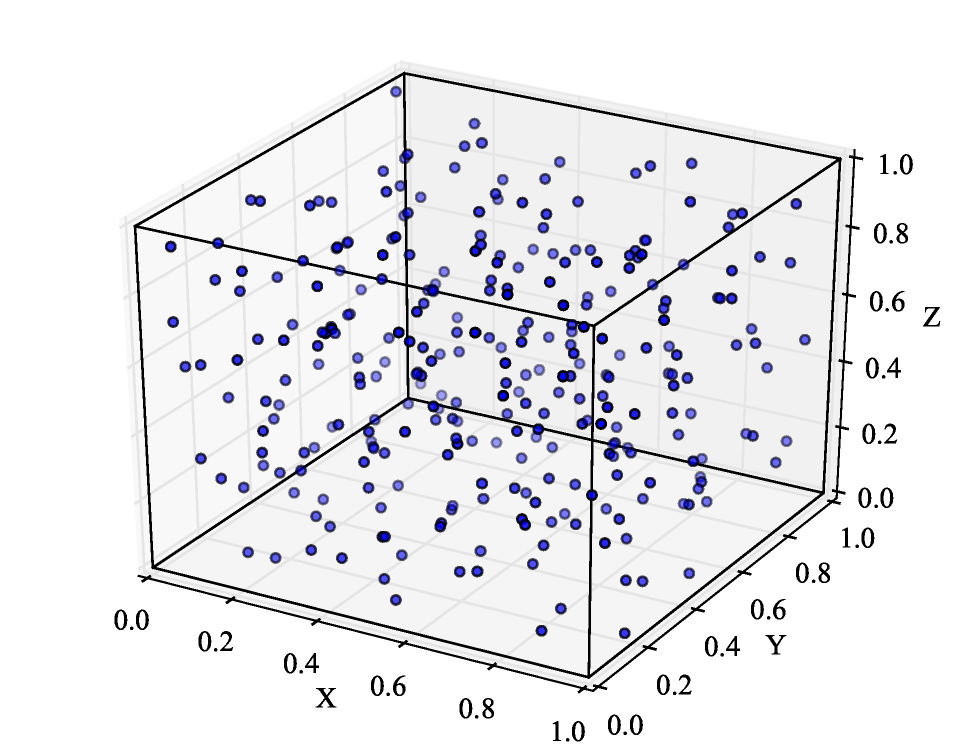} \\ 
\midrule
Hammersley Sequence&\includegraphics[height=4cm,width=4cm]{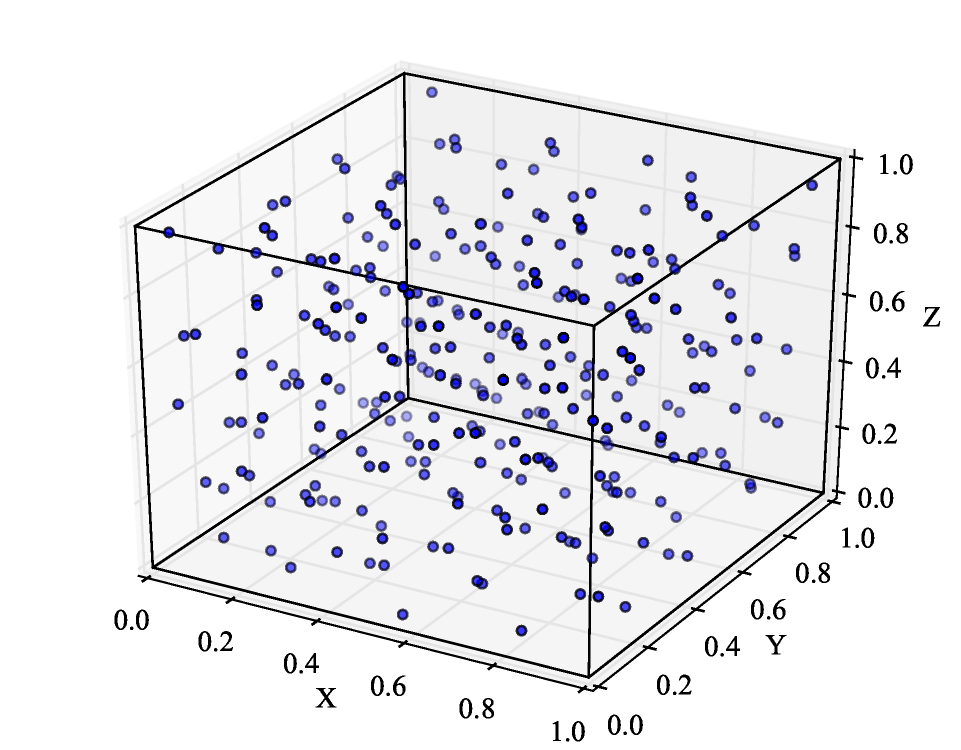}&Korobov Lattice&\includegraphics[height=4cm,width=4cm]{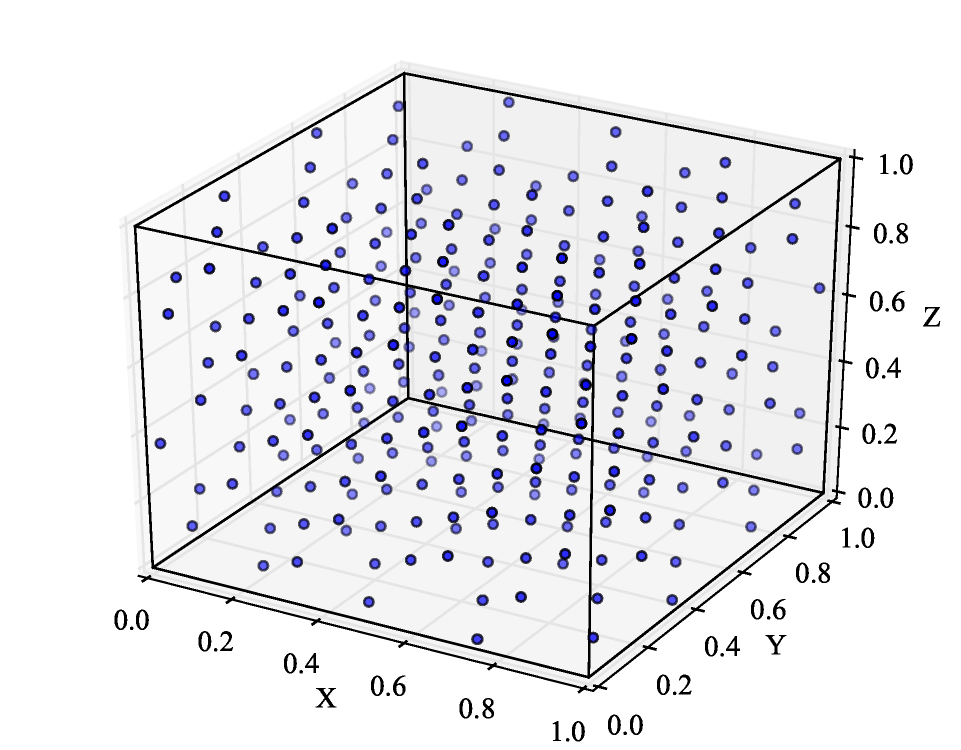} \\  
\midrule
Sobol Sequence & \includegraphics[height=4cm,width=4cm]{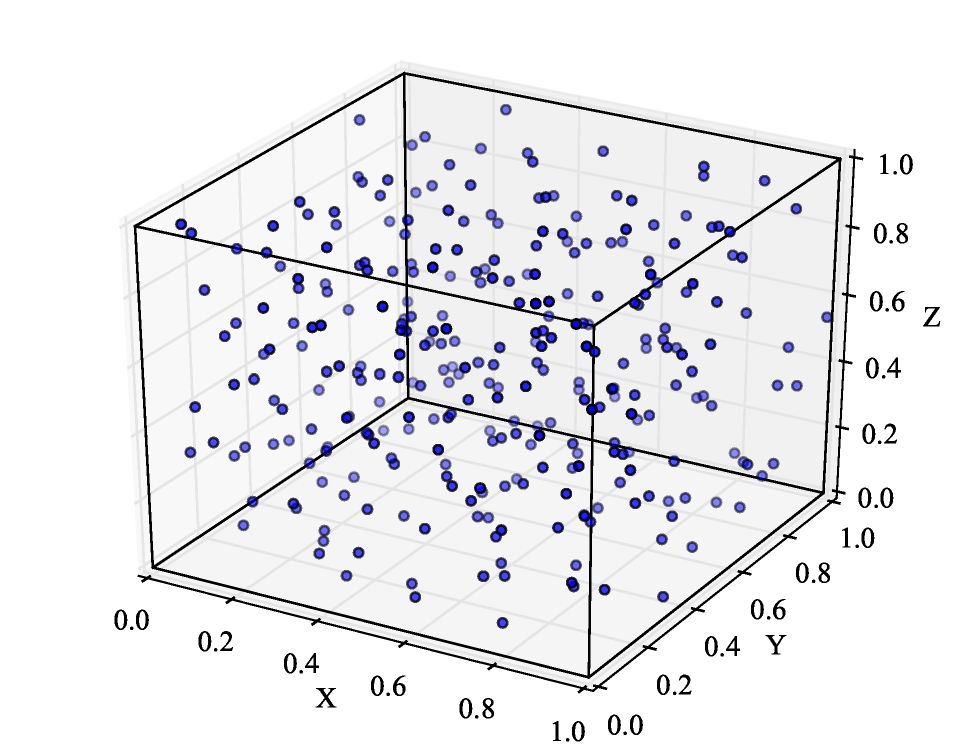}\\  
\bottomrule 
\end{tabular}
}
\label{tab:Tablesp}
\end{table}
\end{appendices}

\clearpage
\bibliography{ref.bib}
%
\end{document}